\documentclass[11pt,letterpaper]{article}

\usepackage{microtype}
\usepackage{setspace}
\usepackage{times}
\usepackage{fullpage}
\usepackage[small,compact]{titlesec}
\usepackage{abstract}

\usepackage[hang,flushmargin]{footmisc}
\usepackage{amsmath}
\usepackage{amssymb}
\usepackage{amsthm}
\usepackage{mathabx}
\usepackage{bm}
\usepackage{accents}
\usepackage[mathscr]{euscript}
\DeclareMathOperator*{\argmin}{arg\,min}

\usepackage{graphicx}
\usepackage{caption}
\usepackage{subcaption}
\usepackage{booktabs}
\usepackage{multirow}
\usepackage{multicol}
\usepackage{threeparttable}
\usepackage{array}

\usepackage{ifthen}
\usepackage{url}
\usepackage{enumerate}
\usepackage[normalem]{ulem}
\usepackage[dvipsnames]{xcolor}
\usepackage{color-edits}
\usepackage{comment}

\usepackage[sectionbib,round]{natbib}
\usepackage{bibunits}
\defaultbibliographystyle{plainnat}
\defaultbibliography{mybib}
\usepackage[toc,page]{appendix}

\usepackage{hyperref}
\definecolor{MyDarkBlue}{RGB}{158,0,0}
\hypersetup{colorlinks=true, linkcolor=MyDarkBlue, citecolor=MyDarkBlue, filecolor=MyDarkBlue, urlcolor=MyDarkBlue}

\theoremstyle{plain}
\newtheorem{theorem}{Theorem}
\newtheorem{proposition}{Proposition}
\newtheorem{lemma}{Lemma}

\theoremstyle{definition}
\newtheorem{definition}{Definition}
\newtheorem{example}{Example}

\newcommand{\Halmos}{\ensuremath{\square}}

\newcommand{\mathbbm}[1]{\text{\usefont{U}{bbm}{m}{n}#1}}

\newcommand{\eps}{\varepsilon}
\newcommand{\bI}{\mathbbm{1}}
\newcommand{\cP}{\mathcal{P}}
\newcommand{\cX}{\mathcal{X}}
\newcommand{\cY}{\mathcal{Y}}

\newcommand{\Var}{\mathrm{Var}}
\newcommand{\Cov}{\mathrm{Cov}}
\newcommand{\bE}{\mathrm{E}}

\usepackage{tikz}
\usetikzlibrary{positioning, arrows.meta, calc, shapes.geometric}
\usepackage{changepage}
\usepackage{framed}

\newcommand{\squishlist}{
   \begin{list}{$\bullet$}
    { \setlength{\itemsep}{0pt} \setlength{\parsep}{1pt}
      \setlength{\topsep}{1pt} \setlength{\partopsep}{1pt}
      \setlength{\leftmargin}{1.5em} \setlength{\labelwidth}{1em}
      \setlength{\labelsep}{0.5em} } }

\newcommand{\squishlisttwo}{
   \begin{list}{$\bullet$}
    { \setlength{\itemsep}{0pt} \setlength{\parsep}{0pt}
      \setlength{\topsep}{0pt} \setlength{\partopsep}{0pt}
      \setlength{\leftmargin}{1em} \setlength{\labelwidth}{1.5em}
      \setlength{\labelsep}{0.5em} } }

\newcommand{\squishend}{
    \end{list}  }

\newcommand{\hy}[1]{\noindent{\color{blue}\{{\bf Hema:}  #1\}}}

\title{Rectification Difficulty and Optimal Sample Allocation in LLM-Augmented Surveys}

\author{Zikun Ye\thanks{We thank Olivier Toubia and the participants of the UW--UBC Marketing Conference for their helpful comments and feedback. Please address all correspondence to: zikunye@uw.edu and hemay@uw.edu.}\\ \textit{University of Washington} \and Hema Yoganarasimhan \\ \textit{University of Washington}}

\date{This version: July 9, 2026}

\begin{document}

\maketitle

\begin{abstract}
\begin{singlespace}
\noindent Large Language Models can generate synthetic survey responses at low cost, but their accuracy varies unpredictably across questions.
We study the design problem of allocating a fixed budget of human respondents across estimation tasks when cheap LLM predictions are available for every task.
Our framework combines three components.
First, building on Prediction-Powered Inference, we characterize a question-specific \emph{rectification difficulty} that governs how quickly the estimator's variance decreases with human sample size.
Second, we derive a closed-form optimal allocation rule that directs more human labels to tasks where the LLM is least reliable.
Third, since rectification difficulty depends on unobserved human responses for new surveys, we propose a meta-learning approach, trained on historical data, that predicts it for entirely new tasks without pilot data.
The framework extends to general M-estimation, covering regression coefficients and multinomial logit partworths for conjoint analysis.
We validate the framework on two datasets spanning different domains, question types, and LLMs, showing that our approach captures 61--79\% of the theoretically attainable efficiency gains, achieving 11.4\% and 10.5\% MSE reductions without requiring any pilot human data for the target survey.
\end{singlespace}
\end{abstract}

\noindent \textbf{Keywords:} Market Research, Survey Design, Large Language Models, Prediction-Powered Inference.

\thispagestyle{empty}
\newpage

\begin{bibunit}
\setcounter{page}{1}

\section{Introduction}

\subsection{The promise and challenge of LLMs as synthetic respondents}

Surveys play an important role in market research, from brand tracking and conjoint analysis to willingness-to-pay estimation. However, fielding them is expensive. This cost pressure has led both researchers and firms to explore whether Large Language Models (LLMs) can serve as low-cost synthetic respondents. 
Industry interest is growing rapidly: synthetic-research startups have attracted major funding, market-research firms are developing digital-twin panels, and many firms are investing in tools that simulate consumer behavior, forecast customer reactions, and substitute for conventional surveys \citep{indexventures2026simile,ipsos2025syntheticrespondents,ipsos2025stanfordsynthetic,ipsosdigital2026innotest,accenture2025aaru,maier2025purchase}. Academic research has developed in parallel, showing that LLMs can generate plausible survey responses at negligible marginal cost and, in some settings, replicate aggregate human patterns across marketing and social-science tasks \citep[e.g.,][]{brand2023using, argyle2023out, wang2024market}.

At the same time, LLM responses exhibit systematic biases, insufficient within-population heterogeneity, and sensitivity to topics, populations, prompts, and question formats \citep{bisbee2024synthetic, anthis2025position, brucks2025prompt}. In some settings, LLMs approximate aggregate human responses well; in others, they produce biased or near-constant answers that fail to capture respondent-level variation \citep{motoki2024more, peng2025mega}. The central empirical fact is therefore heterogeneity: LLM predictions are useful for some questions and nearly useless for others, and this variation is difficult to anticipate ex ante \citep{toubia2025database}. LLMs should therefore be viewed not as wholesale substitutes for human respondents, but as unevenly informative auxiliary signals. Thus, the relevant managerial question is how to use those signals to determine where scarce human responses are most valuable.

In a multi-question survey, this becomes a pre-fielding budget-allocation problem. A firm can cheaply generate LLM predictions for every question/item, but must divide a fixed human-response budget before observing any target human outcomes. Consider a survey asking respondents ``Would you pay \$4 more for an oat-milk latte?'' and ``Should gas stoves be banned?'' The LLM may capture respondent-level variation well for the former but produce homogeneous or misleading responses for the latter. An efficient design would therefore allocate fewer human respondents to the first question and more to the second. Yet before fielding the survey, the firm observes only the questions, response options, and LLM predictions---not the joint behavior of human and LLM responses needed to assess their relative informativeness. 



This example highlights three challenges. First, what question-level quantity should guide the allocation? Raw LLM accuracy measures how closely predictions match human responses, but does not necessarily capture how valuable an additional human response is for precise estimation. Second, even if the appropriate quantity were known, the firm faces a cold-start problem: it depends on the joint distribution of human and LLM responses, which is unobserved before the new survey is fielded. Third, once human responses have been collected, how should biased but informative LLM predictions be combined with those responses while preserving valid inference?

\subsection{An efficient design framework for LLM-augmented surveys}

We propose a three-part survey-design framework that jointly addresses the challenges outlined above: (i) we define \emph{rectification difficulty} as the question-specific residual uncertainty that remains after the LLM's signal is used optimally, and show how it governs the allocation of human respondents; (ii) we derive a closed-form optimal allocation rule that directs more human responses to questions with greater rectification difficulty; and (iii) we develop a meta-learning procedure that predicts rectification difficulty for new questions from historical survey data. 


\smallskip\noindent\textit{Rectification and rectification difficulty.}\quad
To address the estimation challenge, we build on Prediction-Powered Inference \citep[PPI;][]{angelopoulos2023prediction} and its \textsc{PPI++}\ extension \citep{angelopoulos2023ppi}. \textsc{PPI++} uses human responses to correct the LLM's average prediction error while adaptively controlling how much weight is placed on the LLM. 
When synthetic LLM responses are abundant, the asymptotic variance of the estimator for question $q$ takes the form \(A_q/n_q\), where $n_q$ is the human sample size and $A_q$, the rectification difficulty, is the residual variance that remains after the LLM prediction is used optimally.
Our contribution is to isolate \(A_q\) as a question-level design object and show that it is the correct variance primitive for allocating human respondents across questions. Neither raw LLM accuracy nor the human-response variance 
alone determines the value of an additional human response. Thus, \textsc{PPI++} addresses how to combine human and LLM responses after data collection, while rectification difficulty provides the bridge from estimation to survey design.



\smallskip\noindent\textit{Optimal allocation rule.}\quad
Next, we show that if the rectification difficulties $\{A_q\}$ were known, the resulting allocation problem has a closed-form solution. Under a weighted mean-squared-error objective, $n_q^\star \propto \sqrt{\frac{w_q A_q}{c_q}}$, 
where $w_q$ is the importance weight for question $q$
and $c_q$ is its per-response cost (Theorem~\ref{thm:optimal_allocation}). The rule allocates more human responses to questions for which the LLM leaves greater residual uncertainty, and fewer to questions for which it provides a strong respondent-level signal. 
It is a natural extension of classical Neyman square-root allocation \citep{neyman1934two}, with rectification difficulty \(A_q\), rather than the raw human-response variance governing the allocation. The square-root structure itself is classical; the key contribution is identifying \(A_q\) as the appropriate quantity to insert into that rule in an LLM-augmented survey. It also embodies the principle of managerially efficient design \citep{toubia2007research}. 

Our framework also extends to other variants of the LLM-augmented survey design problem. We show that there is an equivalent formulation in which the firm can specify a desired level of precision and use the framework to determine the minimum human-response budget needed to attain it. The same square-root rule governs how that budget should be distributed across questions; only the overall number of human responses changes. We also provide solutions for other variations, e.g., general $M$ estimands, alternative loss functions, power analysis, and block-level allocation for matrix-sampling designs.

\smallskip\noindent\textit{Meta-learning for zero-shot allocation.}\quad
The third component makes the allocation rule operational in the cold-start setting. Because $A_q$ depends on the joint distribution of human and LLM responses, it cannot be computed directly for a new question before the survey is fielded. We therefore propose using the firm's historical survey archive.
For each previously fielded question, paired human--LLM responses provide an estimate of rectification difficulty. 
A predictive model then learns how difficulty varies with observable question characteristics---represented by LLM-embeddings of the question wording and response options---and transfers this relationship to unseen questions.

Returning to the firm in our example, suppose its historical paired human--LLM data indicate low rectification difficulty for a question such as ``Would you pay \$5 more for a plant-based burger?'', but higher difficulty for ``Should gas-powered cars be banned by 2035?''. When the firm's new survey asks ``Would you pay \$4 more for an oat-milk latte?'' and ``Should gas stoves be banned?'', the meta-learner uses the relationship learned between question characteristics and rectification difficulty to predict a lower $\tilde{A}_q$ for the former and a higher $\tilde{A}_q$ for the latter. Because the square-root rule allocates human responses in proportion to $\sqrt{\tilde{A}_q}$, the firm assigns fewer respondents to the oat-milk-latte question and more to the gas-stove question. 



An important feature of our framework is that errors in the predicted difficulties affect only allocation efficiency, not inferential validity. Under standard regularity conditions, prediction errors in the meta-learner may reduce precision but do not compromise asymptotic coverage. We further show that the resulting efficiency loss is bounded and depends on relative prediction errors across questions.

Together, the three components provide a principled and simple framework for LLM-augmented survey design: \textsc{PPI++} provides a valid way to combine human and LLM responses after the survey is fielded; rectification difficulty and the square-root rule translate estimator precision into an allocation decision; and meta-learning makes that decision operational before the new survey is fielded. 


\subsection{Empirical validation}

We evaluate the framework in two complementary settings spanning different domains, question formats, respondent populations, and response-generating LLMs. The first is the Twin-2K-500 digital-twin dataset \citep{toubia2025database}, which contains paired human and GPT-4o responses from more than 2{,}000 respondents across a broad battery of behavioral-economics questions. The second is the 2024 Cooperative Election Study \citep[CCES;][]{cces2024}, a large political survey covering economic perceptions, vote choice, and foreign policy, for which we generate surrogate responses using Gemini~2.5~Flash. In both settings, we treat the full observed respondent pool as a ground-truth population for controlled Monte Carlo evaluation.

Our approach delivers economically meaningful efficiency gains across both datasets. We benchmark against a conventional human-only design that allocates a fixed human-response budget uniformly and uses the human sample mean for each question. Our method, which combines \textsc{PPI++} estimation with allocation based on meta-learned rectification difficulties, reduces MSE by 11.4\% in Twin-2K-500 and 10.5\% in the CCES. Because MSE scales approximately as the inverse of the human-response budget, these reductions are equivalent to achieving the benchmark's precision with roughly 11\% and 10\% fewer human responses, respectively. 
We also compare with an infeasible oracle that allocates using the true rectification difficulties, which are unavailable at the design stage.
Our proposed method captures 78.6\% of the oracle improvement in Twin-2K-500 and 61.3\% in the CCES. The oracle gains themselves---14.5\% and 17.1\%, respectively---show that heterogeneity in LLM reliability creates substantial scope for more efficient survey design.

The mechanism behind these gains is also informative. Most of the improvement comes from reallocating human responses rather than from changing the estimator. Under uniform allocation, \textsc{PPI++} alone reduces MSE by only 3.6\% in Twin-2K-500 and 1.5\% in the CCES; reallocating the budget using predicted rectification difficulty adds a further 7.8 and 9.0 percentage points, respectively. Thus, allocation accounts for about 68\% of the total gain in Twin-2K-500 and 86\% in the CCES. The larger allocation effect in the CCES reflects its greater heterogeneity: relatively easy factual items coexist with polarized questions on vote choice, Ukraine, and Gaza, for which the LLM often produces nearly homogeneous responses and provides little respondent-level signal.

The analysis further shows why rectification difficulty, rather than raw LLM accuracy, is the appropriate design index. In Twin-2K-500, the LLM achieves relatively high average agreement with humans, yet standard PPI increases or fails to reduce variance for most questions, and the variance-minimizing \textsc{PPI++} weight is zero for 26 of the 68 questions. In the CCES, the optimal weight is zero for 40 of the 133 questions.
At the same time, rectification difficulty is sufficiently systematic to be predicted from question characteristics: the out-of-sample Spearman correlation between predicted and realized log difficulty is 0.73 in Twin-2K-500 and 0.48 in the more heterogeneous CCES. Thus, our results confirm that the meta-learner need not predict $A_q$ perfectly to recover a useful ranking of where human responses are most valuable.

Finally, a broad set of robustness exercises supports the underlying mechanism. Rectification difficulty is highly stable across survey waves; a meta-learner trained on one dataset transfers meaningfully to a different survey generated by a different LLM; aggregation to task- or module-level allocation preserves/strengthens the gains; and the value of optimal allocation increases as the dispersion of question difficulty grows.

\subsection{Contribution}

Our work makes both methodological and managerial contributions to the market-research literature. Methodologically, we formulate the pre-fielding, cross-question sample-allocation problem for LLM-augmented surveys. We identify rectification difficulty \(A_q\)---rather than raw LLM accuracy or the human-response variance---as the variance index that governs the marginal value of an additional human response. We then derive the corresponding Neyman-type oracle allocation and develop a meta-learning procedure that uses question embeddings and historical paired human--LLM data to predict rectification difficulty for unseen questions. Together, these components turn \textsc{PPI++} from a post-collection estimation method into an operational survey-design framework that determines where human responses should be collected. Because the meta-learner affects only the allocation, prediction errors may reduce efficiency but do not compromise the asymptotic validity of the final estimates.

From a managerial perspective, our framework provides a practical answer to a common question faced by practitioners and applied marketers: how can we effectively leverage LLMs to reduce the cost and improve the efficiency of survey design? Alternately, given a fixed human-response budget and cheap LLM predictions, how can we optimally allocate human responses across questions/tasks? Two empirical applications demonstrate that our approach can deliver meaningful efficiency gains across distinct survey domains and LLMs. Furthermore, because its required inputs (a historical archive of paired human--LLM responses and a fixed budget) are common in repeated survey operations, the framework can be implemented with existing survey archives and standard fielding workflows.

\paragraph{Roadmap.}
Section~\ref{sec:related} reviews the related literature. Section~\ref{sec:setup} formalizes the problem setup. Section~\ref{sec:ppi_pp} introduces the \textsc{PPI++} estimator and defines the rectification difficulty $A_q$. Section~\ref{sec:framework} develops the survey design framework: optimal allocation, meta-learning for $A_q$, and end-to-end inference guarantees. Sections~\ref{sec:empirics} and~\ref{sec:cces} validate the framework on the Twin-2K-500 and CCES datasets. Section~\ref{sec:conclusion} concludes with a discussion of implications and future directions.

\section{Related Literature}
\label{sec:related}

Our work draws on and contributes to two broad streams of research: efficient experimental and survey design in marketing and Hybrid human–LLM designs and Prediction-Powered Inference for combining human and machine-generated data.

\paragraph{Efficient sampling and survey design.}

The idea of allocating observations to maximize inferential precision has a long foundation in statistical sampling. A classical starting point is Neyman allocation in stratified sampling \citep{neyman1934two, cochran1977sampling, sarndal1992model}. Neyman's key insight is that an efficient sample need not be a miniature version of the population: when some parts of the problem contribute more sampling uncertainty than others, precision is improved by allocating more observations to those parts rather than sampling strictly in proportion to their size. A classical Neyman rule applied across questions would allocate more respondents to questions with larger human-response standard deviations, i.e., \(n_q \propto \sqrt{\Var(Y_q)}\), where \(\Var(Y_q)\) is the variance of human responses. Our framework departs from this classical benchmark in two ways. First, although our allocation rule retains the same square-root structure, we show that the relevant variance primitive in the hybrid human--LLM setting in our setting is the residual variance of the rectified estimator, \(A_q\). Thus, the allocation directs more human responses not simply to questions with more variable human outcomes, but to questions where the LLM leaves greater residual uncertainty after \textsc{PPI++} correction. Second, \(A_q\) is unknown for a new question because it depends on paired human and LLM responses that have not yet been observed. Whereas the classical optimal-allocation rule can be implemented once its variance inputs are known or estimated, we address this cold-start problem by using a meta-learner trained on historical paired data to predict \(A_q\) from question characteristics, enabling allocation for new surveys.

Marketing research has developed a related tradition of efficient experimental and survey design, especially in conjoint analysis and preference measurement. A foundational line of work develops adaptive methods that select which product profiles or choice sets to present to each respondent, tailoring question content in real time to maximize information about consumer preferences \citep{toubia2003fast, toubia2004polyhedral}. \citet{dzyabura2011active} extend this idea using machine learning to adaptively select the most informative questions for identifying consideration heuristics. A parallel literature optimizes the static design of choice experiments, i.e., which profiles to include and how to combine them, using prior information about respondent preferences to improve efficiency \citep{huber1996importance, sandor2001designing}. \citet{toubia2007research} unify these ideas under the principle of ``managerially efficient design'': experimental resources should be directed toward improving the estimates that matter most for the decision at hand. A complementary strand improves conjoint efficiency not by adapting the questions asked, but by pooling information across related studies: \citet{mccoy2022twoforone} develop a Bayesian cross-category learning approach that shares partworth information across categories with overlapping attributes, such as jointly estimating preferences for hiking jackets and sleeping bags. This cross-study pooling principle is conceptually analogous to our meta-learning step, which pools information across historical questions to predict rectification difficulty for new questions.

Our framework shares the design philosophy of this literature: allocate scarce respondent effort where it yields the greatest improvement in statistical efficiency. The specific resource allocation we study (how many respondents to recruit per question, given a fixed total budget) is a natural complement to the question-content and module-assignment decisions above. What makes non-uniform allocation over questions both feasible and valuable in our setting is the availability of cheap LLM predictions: they provide a task-specific signal about where human data are most needed, creating the heterogeneous difficulty structure that our allocation rule exploits. Thus, the main innovation in our study, compared to the earlier literature, is that we show how a designer can incorporate the LLM predictions to make the allocation over questions/tasks more efficient. 


\paragraph{Hybrid human--LLM designs and Prediction-Powered Inference.}
Hybrid human--LLM estimation builds on the logic of classical model-assisted survey sampling \citep{isaki1982survey, deville1992calibration, sarndal1992model}, which combines a design-based human sample with model-based auxiliary predictions to improve precision while preserving valid inference. A growing set of papers in marketing and survey research studies related ways of combining human data with LLM-generated predictions. In marketing, \citet{ye2025lola} provide an early example of this broader human--LLM hybrid logic: their \textsc{LOLA} framework uses LLM predictions to initialize adaptive content experiments and then updates decisions using observed human click behavior. Whereas \textsc{LOLA} focuses on sequential reward maximization within an experiment, our objective is pre-fielding sample allocation and valid inference across multiple survey estimands. In survey and choice-model settings, \citet{yin2026synthetic} derive a closed-form heterogeneous mixing rule for how much synthetic data to incorporate in a logit choice model, while \citet{huang2025many} develop an uncertainty-quantification framework for selecting the synthetic sample size needed for reliable inference under human--LLM misalignment. These papers focus on how much synthetic data to use for a given question, model, or estimand, typically taking the human-data component as given or separately specified. By contrast, our framework asks where to allocate a fixed human-label budget across multiple questions/tasks before any target human responses have been collected.

The recent Prediction-Powered Inference literature \citep[PPI;][]{angelopoulos2023prediction} and its \textsc{PPI++}\ extension \citep{angelopoulos2023ppi} provide a principled way to combine cheap machine predictions with gold-standard human labels, using a labeled sample to correct prediction bias while leveraging a large prediction pool for variance reduction. PPI-style methods are increasingly applied to business and social-science problems: \citet{ji2025predictions} revisit the surrogate-outcome problem through the lens of modern AI predictions, \citet{vafa2025estimating} apply PPI to estimate wage disparities, and \citet{wang2025finetune} combine domain-specific fine-tuning with rectification to improve inference when human labels are limited. Several related papers extend this logic to design and allocation problems. \citet{broska2025mixed} propose a PPI-based mixed-subjects framework and use power analysis to determine the optimal mix of human subjects and LLM predictions for a single estimand. \citet{fisch2024stratified} partition the input space for a single estimand into strata and allocate human labels across them, using autorater confidence as a proxy for unknown stratum-level quantities. \citet{angelopoulos2025costoptimal} determine when to query a costly strong rater rather than a cheap weak rater within a single evaluation task, operationalizing their policy through either transfer from a related labeled dataset or a labeled burn-in sample. Our framework differs by studying a pre-fielding, cross-question allocation problem: each survey question corresponds to a distinct estimand, and the manager must decide how many human respondents to assign to each question before observing any target human outcomes. We identify the question-level \textsc{PPI++} residual variance \(A_q\) as the relevant allocation index and predict it from historical paired human--LLM data, enabling managers to design an entirely new survey without collecting pilot human data for the target questions. 
Subsequent work by \citet{ye2026allocating} builds on our rectification-difficulty framework and studies a complementary online setting,
where rectification difficulty is not predicted ex-ante but learned adaptively from target-survey responses
using a UCB allocation policy.

\section{Problem Setup and Overview of Solution Concept}
\label{sec:setup}

We study a hybrid survey design problem in which LLM predictions are cheap to generate, whereas human responses are costly. The researcher has two sources of information: (i) a historical corpus $\mathcal{H}$ of previously fielded questions for which paired human and LLM responses are available, and (ii) a new survey $\mathcal{T}$ whose questions have not yet been fielded to humans. The design problem is to decide how many human responses to collect for each question in $\mathcal{T}$, subject to a budget constraint, while using LLM predictions as an auxiliary source of information.

\subsection{LLM surrogates, historical corpus, and new survey}

When question $q$ is administered to respondent $i$, it generates an observation $(\bm{X}_{qi},Y_{qi})$, where $\bm{X}_{qi}\in\cX$ denotes the full prompt information for that respondent-question pair (e.g., question wording and options, product profile, and possibly respondent characteristics), and $Y_{qi}\in\cY$ is the human response (e.g., a rating or a choice indicator). For each fixed question $q$, we assume the respondent-level observations are i.i.d.:
\[
(\bm{X}_{qi},Y_{qi}) \stackrel{\mathrm{i.i.d.}}{\sim} \cP_q,
\qquad i=1,2,\ldots,
\]
and we write $(\bm{X}_q,Y_q)\sim \cP_q$ for a generic draw from the target population for question $q$.

We use an LLM as a surrogate response generator through a prediction function $f:\cX\to\cY$. For any prompt $\bm{X}$, the LLM produces a surrogate response $f(\bm{X})$. In particular, for a labeled respondent prompt $\bm{X}_{qi}$, we denote the corresponding paired LLM prediction by
\[
Y^{\mathrm{LLM}}_{qi}:=f(\bm{X}_{qi}).
\]
Throughout, we treat the LLM prediction rule $f(\cdot)$ as fixed; it may be deterministic or stochastic.

\paragraph{Historical corpus $\mathcal{H}$.}
For each question $q\in\mathcal{H}$, we observe previously collected human responses $\{(\bm{X}_{qi},Y_{qi})\}_{i=1}^{n_q^{\mathcal{H}}}$, and can compute LLM predictions $f(\bm{X}_{qi})$ for the same respondent prompts. Thus, for historical questions, paired human and LLM responses are available.

\paragraph{New survey $\mathcal{T}$.}
Let $\mathcal{T}$ denote the set of new survey questions. At the design stage, the researcher observes each target question $q\in\mathcal{T}$ and can generate LLM predictions from the corresponding prompts, but has not yet collected any human outcomes. The design decision is how many human responses $n_q$ to collect for each $q\in\mathcal{T}$.

\paragraph{Synthetic pool and the synthetic-data-rich (SDR) regime.}
For each target question $q\in\mathcal{T}$, at the design stage the researcher may draw a large unlabeled synthetic pool of prompts
\[
\{\widetilde{\bm{X}}_{qi}\}_{i=1}^{m_q} \stackrel{\mathrm{i.i.d.}}{\sim} \cP_q(\bm{X}),
\]
and compute corresponding LLM-only predictions
\[
\tilde{Y}^{\mathrm{LLM}}_{qi}:=f(\widetilde{\bm{X}}_{qi}).
\]
These synthetic data are available before any human responses for question $q$ have been collected.

Because LLM predictions are comparatively cheap, we focus on the \emph{synthetic-data-rich} (SDR) regime, in which $m_q$ is large; for the main-text theory, we take $m_q\to\infty$. The binding resource constraint is therefore the budget for human responses.

Note that the historical corpus $\mathcal{H}$ and the target survey $\mathcal{T}$ play different roles. The target survey is the object of inference and design: for these questions, the researcher must decide the human sample sizes $\{n_q\}_{q\in\mathcal{T}}$. The historical corpus contains previously observed paired human and LLM responses that can be used to inform that design decision.

\subsection{Target estimands and design goal}

In the main exposition, for each $q\in\mathcal{T}$, the estimand is the population mean response
\begin{equation}
\theta_q^\star := \bE_{(\bm{X}_q,Y_q)\sim \cP_q}[Y_q].
\end{equation}
This scalar setup covers many market-research quantities, such as average ratings and response shares, because each can be written as the mean of a per-respondent outcome. Section~\ref{subsec:extensions} extends the framework to more general $M$-estimation targets.

Suppose that collecting one human response to question $q$ costs $c_q>0$ (e.g., recruitment, incentives, and platform fees). We allow these costs to vary across questions. In practice, some questions take longer to answer, are more cognitively demanding, or require higher respondent compensation, which induces question-specific collection costs. Given a total human-label budget $B$, the researcher chooses sample sizes $\{n_q\}_{q\in\mathcal{T}}$ subject to $\sum_{q\in\mathcal{T}} c_q\,n_q \le B$, with pre-specified importance weights $\{w_q\}_{q\in\mathcal{T}}$ reflecting how much precision is desired for each question.

Our design goal is to choose $\{n_q\}_{q\in\mathcal{T}}$ to minimize the weighted mean-squared estimation error across the target questions:
\begin{equation}
\min_{\{n_q\}_{q\in\mathcal{T}}}
\sum_{q\in\mathcal{T}} w_q\,\bE\!\left[(\widehat{\theta}_q-\theta_q^\star)^2\right]
\quad
\text{s.t.}
\quad
\sum_{q\in\mathcal{T}} c_q\,n_q \le B.
\end{equation}
Equivalently, one may fix a target precision level and minimize the total human-data cost; we discuss this dual formulation in Section~\ref{subsec:extensions} and Web Appendix~\ref{rmk:dual_and_single_q}.

\subsection{Overview of solution concept}
\label{subsec:solution_overview}

A useful starting benchmark is the case in which LLM predictions are perfectly aligned with human responses. In that case, the researcher could simply use the LLM predictions in place of collecting new human data, and the survey-design problem would essentially disappear. In practice, however, a growing literature shows that while LLM predictions are often informative, they are not fully accurate: they can be biased, and their reliability varies substantially across questions \citep{peng2025mega}. Thus, the relevant problem is not whether to use LLMs or humans, but how to combine cheap LLM predictions with scarce and costly human responses in a way that improves efficiency while preserving valid inference.

\paragraph{Challenges.}
Note that the manager/researcher must decide how to allocate the human-label budget across the questions in the new survey $\mathcal{T}$ before any new human responses have been collected. This design decision raises three related challenges. First, what question-level quantity should guide the allocation? A natural candidate from prior work is the LLM's predictive accuracy on each question, but accuracy captures how close LLM predictions are to human responses on average, not how useful the LLM is for precise estimation once its errors are corrected with human data; it is not obvious what the right index is, nor how the allocation should depend on it. Second, even given the right index, the researcher faces a cold-start problem: the index depends on the joint distribution of human and LLM responses, which has not been observed for the new survey. Third, once human responses have been collected, how should biased but informative LLM predictions be combined with those human responses for estimation and valid inference?

\paragraph{Key idea.}
Our solution rests on three components that together address these challenges. First, we propose a question-level index, the \emph{rectification difficulty} $A_q$, which measures how hard it is to correct the LLM's errors with human data; we show that this index, rather than LLM accuracy, is exactly the quantity that governs the variance of the rectified estimator (Section~\ref{sec:ppi_pp}). Second, given the index, we derive the optimal allocation rule: human labels should be allocated in proportion to $\sqrt{A_q}$, a closed-form square-root rule that extends classical Neyman allocation to the hybrid human--LLM setting (Section~\ref{subsec:oracle_allocation}). Third, to resolve the cold-start problem, we exploit the fact that LLM reliability is systematic rather than arbitrary: patterns in historical paired human--LLM data reveal which types of questions are hard to rectify, so a meta-learning model trained on historical surveys can predict $A_q$ for new questions before any human data on those questions are collected (Section~\ref{subsec:meta_learning}). Once the survey is fielded, the \textsc{PPI++}\ estimator combines the collected human responses with LLM predictions for valid inference.


Figure~\ref{fig:conceptual_overview} organizes these components into an operational two-phase workflow. Phase A (meta-learning) trains a regression model $\widehat{\phi}$ on the historical corpus $\mathcal{H}$ and outputs the predicted difficulties $\{\tilde{A}_q\}_{q \in \mathcal{T}}$. Phase B (allocation and inference) plugs these predictions into the square-root allocation rule (Theorem~\ref{thm:optimal_allocation}) and forms the \textsc{PPI++}\ estimator from the collected data (Propositions~\ref{thm:end_to_end} and~\ref{prop:robust_allocation}).

\begin{figure}[ht!]
\centering
\resizebox{\linewidth}{!}{%
\begin{tikzpicture}[
  font=\small,
  every node/.style={align=center},
  box/.style={draw, rounded corners=3pt, minimum height=15mm, text width=34mm,
              inner sep=4pt, font=\small},
  arr/.style={-{Latex[length=2.4mm]}, thick},
  phase/.style={font=\footnotesize\itshape}
]

\node[box] (H) at (0, 0) {Historical corpus $\mathcal{H}$};
\node[box, right=8mm of H] (meta) {Meta-learning $\widehat{\phi}$ \\[2pt] {\footnotesize trained on $\mathcal{H}$, \\ predicts $\tilde{A}_q$ for $q \in \mathcal{T}$}};

\node[box, right=22mm of meta] (alloc) {Allocate budget $B$ \\[2pt] {\footnotesize $\tilde{n}_q^\star \propto \sqrt{\tilde{A}_q}$ \\ (Theorem~\ref{thm:optimal_allocation})}};
\node[box, right=8mm of alloc] (estim) {\textsc{PPI++}\ estimator \\[2pt] {\footnotesize $\widehat{\theta}_q(\widehat{\lambda}_q)$, $q \in \mathcal{T}$ \\ (Props.~\ref{thm:end_to_end},~\ref{prop:robust_allocation})}};

\draw[arr] (H) -- (meta);
\draw[arr] (meta) -- node[above, font=\footnotesize] {$\{\tilde{A}_q\}_{q \in \mathcal{T}}$} (alloc);
\draw[arr] (alloc) -- (estim);

\coordinate (allocEstimMid) at ($(alloc.east)!0.5!(estim.west)$);
\node[font=\footnotesize, align=center, below=11mm of allocEstimMid, anchor=north]
  (collect) {collected human + LLM\\data for $q \in \mathcal{T}$};
\draw[arr] (collect.north) -- (allocEstimMid);

\coordinate (sepTop) at ($(meta.north east)!0.5!(alloc.north west) + (0, 1.1)$);
\coordinate (sepBot) at ($(meta.south east)!0.5!(alloc.south west) - (0, 1.7)$);
\draw[dashed, thick, black!50] (sepTop) -- (sepBot);

\node[phase] at ($(H.north)!0.5!(meta.north) + (0, 0.7)$) {Phase A: Meta-learning};
\node[phase] at ($(alloc.north)!0.5!(estim.north) + (0, 0.7)$) {Phase B: Allocation and inference};

\end{tikzpicture}}
\caption{Overview of the framework. Phase A trains a meta-learner $\widehat{\phi}$ on the historical corpus $\mathcal{H}$ and predicts the rectification difficulty $\tilde{A}_q$ of each new question; Phase B allocates the human-label budget across questions via the square-root rule and, once the data are collected, forms the \textsc{PPI++}\ estimator with valid confidence intervals.}
\label{fig:conceptual_overview}
\end{figure}

An important feature of the framework is worth emphasizing: it separates design from inference. Historical paired data are used to guide the allocation decision, while the final estimator for the target survey is constructed only after the new human responses are observed. This decoupling yields two guarantees that are worth keeping distinct. First, validity: conditional on any realized allocation, and hence on any predicted $\{\tilde{A}_q\}$, the post-collection estimator is asymptotically normal and its confidence intervals attain nominal coverage (Proposition~\ref{thm:end_to_end}). Second, efficiency: prediction errors in $\tilde{A}_q$ distort only the sample sizes $\tilde{n}_q^\star$, and the resulting efficiency loss is bounded (Proposition~\ref{prop:robust_allocation}). Errors in $\tilde{A}_q$ thus affect how precise the survey is, not whether its conclusions are valid.




\section{PPI++, Variance, and Rectification Difficulty}
\label{sec:ppi_pp}

This section introduces two ingredients the paper builds on: the \textsc{PPI++}\ estimator, the final estimator used in Phase B, and the \emph{rectification difficulty} $A_q$, a question-level measure and the prediction target of Phase A meta-learning. We first review benchmark estimators and motivate \textsc{PPI++}, then characterize its variance, and use that characterization to define the rectification difficulty. Throughout this section, we fix an arbitrary target question $q \in \mathcal{T}$ and suppress the subscript.

\subsection{Benchmark estimators}

Our estimand of interest is the true population mean
\begin{equation}
\theta^\star = \bE[Y].
\end{equation}
We consider three benchmark estimators.

\textbf{Human-only estimator.} A natural benchmark is the human-only sample mean, $\frac{1}{n}\sum_{i=1}^{n} Y_i$. This estimator uses the \textit{gold-standard} human preference data and, as such, is unbiased but can have high variance when $n$ is small.

\textbf{LLM-only estimator.} At the other extreme, one might instead use the mean of the LLM predictions from the large synthetic pool, $\frac{1}{m}\sum_{i=1}^{m} \tilde{Y}^{\mathrm{LLM}}_i$, which is cheap to compute but is generally biased because $\bE[Y^{\mathrm{LLM}}] \neq \bE[Y]$. 

\textbf{Prediction-Powered Inference estimator.} The PPI estimator \citep{angelopoulos2023prediction} combines the two data sources (human and LLM) by using the human sample to correct the average bias of the LLM while leveraging cheap LLM predictions to reduce variance:
\begin{equation}
\widehat{\theta}_{\mathrm{PPI}}
=
\frac{1}{n}\sum_{i=1}^{n}Y_i
+
\left(
\frac{1}{m}\sum_{i=1}^{m}\tilde{Y}^{\mathrm{LLM}}_i
-
\frac{1}{n}\sum_{i=1}^{n}Y_i^{\mathrm{LLM}}
\right).
\end{equation}
The first term is the human-only estimator, while the second term is an LLM-based correction term that adjusts the human estimate using the difference between the unlabeled and labeled LLM averages. Here the two LLM symbols play different roles: $Y_i^{\mathrm{LLM}} = f(\bm{X}_i)$ is the LLM prediction for labeled respondent $i$, paired with the human response $Y_i$, whereas $\tilde{Y}_i^{\mathrm{LLM}} = f(\widetilde{\bm{X}}_i)$ is the LLM response to an additional prompt drawn from the same population, for which no human response is ever collected. The PPI framework is now widely used for integrating LLM predictions with human labels in a principled way \citep[see also][]{broska2025mixed,ji2025predictions,wang2025finetune}.

\paragraph{Intuition behind PPI.} An equivalent regrouping of the same estimator makes the rectification logic clear.
Let $\tilde{\theta}_{\mathrm{LLM}} := \frac{1}{m}\sum_{i=1}^{m}\tilde{Y}_i^{\mathrm{LLM}}$ denote the LLM-only plug-in estimate, and let $\widehat{\Delta} := \frac{1}{n}\sum_{i=1}^{n}\bigl(Y_i^{\mathrm{LLM}}-Y_i\bigr)$ denote the labeled-sample estimate of the LLM's average prediction error. Then the PPI estimator can be written as
\[
\widehat{\theta}_{\mathrm{PPI}}
=
\underbrace{\tilde{\theta}_{\mathrm{LLM}}}_{\text{plug-in prediction term}}
-
\underbrace{\widehat{\Delta}}_{\text{bias-correction term / rectifier}}.
\]
The plug-in term $\tilde{\theta}_{\mathrm{LLM}}$ inherits the LLM's bias noted above, while $\widehat{\Delta}$ estimates exactly that bias, since $\bE[\widehat{\Delta}] = \bE[Y^{\mathrm{LLM}}-Y] = \bE[Y^{\mathrm{LLM}}]-\bE[Y]$. Subtracting it therefore ``rectifies'' the plug-in estimate and restores unbiasedness: $\bE[\widehat{\theta}_{\mathrm{PPI}}] = \bE[\tilde{\theta}_{\mathrm{LLM}}] - \bE[\widehat{\Delta}] = \bE[Y] = \theta^\star$.

In the synthetic-data-rich (SDR) regime ($m \to \infty$), the variance of the PPI estimator is:
\[
\Var\!\left(\widehat{\theta}_{\mathrm{PPI}}\right)
=
\frac{1}{n}\Var(Y - Y^{\mathrm{LLM}}).
\]
When the LLM closely tracks human responses, so that $\Cov(Y, Y^{\mathrm{LLM}})$ is large, the variance of the PPI estimator can be substantially smaller than that of the human-only sample mean, $\Var(Y)/n$. However, when the LLM signal is weak, the bias-correction term in the PPI estimator adds noise rather than removing it. A short calculation gives
\[
\Var(Y - Y^{\mathrm{LLM}})
\;=\;
\Var(Y)+\Var(Y^{\mathrm{LLM}})-2\,\Cov(Y, Y^{\mathrm{LLM}}),
\]
so $\Var(Y - Y^{\mathrm{LLM}}) > \Var(Y)$ precisely when $\Cov(Y, Y^{\mathrm{LLM}}) < \tfrac{1}{2}\Var(Y^{\mathrm{LLM}})$. Whenever this threshold fails, the PPI variance coefficient $\Var(Y - Y^{\mathrm{LLM}})$ exceeds the human-only coefficient $\Var(Y)$, making PPI strictly worse than the sample mean. This motivates a more flexible estimator that adapts the degree of reliance on the LLM.

\subsection{PPI++ estimator and its variance decomposition}

The \textsc{PPI++} estimator \citep{angelopoulos2023ppi} addresses the limitations of PPI by introducing a tuning parameter $\lambda \in [0,1]$ that controls the weight placed on the LLM correction as follows:
\begin{equation}
\label{eq:ppi++}
\widehat{\theta}(\lambda)
=
\frac{1}{n}\sum_{i=1}^{n}Y_i
+
\lambda\left(
\frac{1}{m}\sum_{i=1}^{m}\tilde{Y}^{\mathrm{LLM}}_i
-
\frac{1}{n}\sum_{i=1}^{n}Y_i^{\mathrm{LLM}}
\right).
\end{equation}
When $\lambda = 0$, the estimator ignores LLM predictions and reduces to the sample mean or human-only $\widehat{\theta}(0)=\frac{1}{n}\sum_{i=1}^{n}Y_i$. When $\lambda = 1$, it reduces to standard PPI, $\widehat{\theta}(1) = \widehat{\theta}_{\mathrm{PPI}}$. Further, like PPI, \textsc{PPI++} is unbiased for $\theta^\star$ for any fixed $\lambda$.

\begin{lemma}[MSE and Variance Decomposition for PPI++]
\label{lem:mse_var}
For any fixed $\lambda$, the \textsc{PPI++} estimator is unbiased for $\theta^\star$, so
\[
\bE\!\left[(\widehat{\theta}(\lambda)-\theta^\star)^2\right]
=
\Var\!\left(\widehat{\theta}(\lambda)\right).
\]
Moreover,
\[
\Var\!\left(\widehat{\theta}(\lambda)\right)
=
\frac{1}{n}\Var\!\left(Y-\lambda Y^{\mathrm{LLM}}\right)
+
\frac{\lambda^2}{m}\Var\!\left(\tilde{Y}^{\mathrm{LLM}}\right).
\]
In the SDR regime ($m\to\infty$), this reduces to $\Var\!\left(\widehat{\theta}(\lambda)\right) =
\Var\!\left(Y-\lambda Y^{\mathrm{LLM}}\right)/n$.
\end{lemma}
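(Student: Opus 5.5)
The plan is to decompose the estimator into two independent sample averages and read off the variance directly. Write
\[
\widehat{\theta}(\lambda)
=
\frac{1}{n}\sum_{i=1}^{n}\bigl(Y_i-\lambda Y_i^{\mathrm{LLM}}\bigr)
+
\frac{\lambda}{m}\sum_{i=1}^{m}\tilde{Y}^{\mathrm{LLM}}_i ,
\]
which is just a regrouping of \eqref{eq:ppi++}.

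\emph{Unbiasedness.} The labeled pairs $(\bm{X}_i,Y_i)$ and the synthetic prompts $\widetilde{\bm{X}}_i$ are drawn from the same marginal $\cP_q(\bm{X})$, and $f$ is a fixed (possibly stochastic) rule. Hence $Y_i^{\mathrm{LLM}}$ and $\tilde{Y}_i^{\mathrm{LLM}}$ have the same distribution, so $\bE[Y^{\mathrm{LLM}}]=\bE[\tilde{Y}^{\mathrm{LLM}}]$. Linearity of expectation then gives
\[
\bE[\widehat{\theta}(\lambda)]=\bE[Y]-\lambda\bE[Y^{\mathrm{LLM}}]+\lambda\bE[\tilde{Y}^{\mathrm{LLM}}]=\theta^\star .
\]
The usual bias--variance identity, $\mathrm{MSE}=\mathrm{bias}^2+\Var$, then yields the first display, with the bias term equal to zero.

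\emph{Variance.} I would make explicit the assumption, implicit in the setup, that the synthetic pool is drawn independently of the labeled sample and that any LLM randomness is independent across calls. Under this assumption the two averages above are independent, so their variances add. Within each average the summands are i.i.d. (the labeled pairs are i.i.d.\ by assumption, and so are the synthetic prompts). Therefore
\[
\Var\Bigl(\tfrac1n\sum_{i=1}^{n}(Y_i-\lambda Y_i^{\mathrm{LLM}})\Bigr)=\frac{1}{n}\Var(Y-\lambda Y^{\mathrm{LLM}}),
\qquad
\Var\Bigl(\tfrac{\lambda}{m}\sum_{i=1}^{m}\tilde{Y}_i^{\mathrm{LLM}}\Bigr)=\frac{\lambda^2}{m}\Var(\tilde{Y}^{\mathrm{LLM}}).
\]
Adding the two gives the stated formula. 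The SDR limit follows by letting $m\to\infty$ with $n$ fixed: the second term vanishes provided $\Var(\tilde{Y}^{\mathrm{LLM}})<\infty$, so I would state finite second moments as a standing assumption.

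The only real subtlety is justifying the independence between the two averages, and in particular the independence of the LLM's internal randomness across calls when $f$ is stochastic. This is not a computation but a modeling assumption, and I would state it explicitly. Everything else is routine linearity of expectation and variance.
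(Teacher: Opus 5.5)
Your proposal is correct and follows essentially the same route as the paper: regroup $\widehat{\theta}(\lambda)$ into a labeled-sample average of $Y_i-\lambda Y_i^{\mathrm{LLM}}$ plus a synthetic-pool average, use $\bE[Y^{\mathrm{LLM}}]=\bE[\tilde{Y}^{\mathrm{LLM}}]$ for unbiasedness, and add the variances of the two independent i.i.d.\ averages before sending $m\to\infty$. Your explicit independence assumption (including for the LLM's internal randomness) and your finite-second-moment assumption are the same conditions the paper's proof relies on.
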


The proof is straightforward and is deferred to Web Appendix~\ref{app:proof_mse_var}. 

Lemma~\ref{lem:mse_var} provides the key intuition for the rest of the framework. The first term of the variance, $\Var(Y-\lambda Y^{\mathrm{LLM}})/n$, is the sampling variation from the labeled human sample after partial rectification of the LLM prediction. The second term, $\lambda^2 \Var(\tilde{Y}^{\mathrm{LLM}})/m$, comes from sampling variation in the synthetic LLM pool. Because LLM responses can be generated very cheaply, the synthetic-pool size $m$ is effectively a free design choice in our setting: the researcher can draw as many synthetic prompts as desired. In the SDR idealization of Section~\ref{sec:setup} ($m\to\infty$), the second variance term vanishes. In practice $m$ is finite but can be made large; the standard error then simply adds back the easily computable term $\lambda^2 \widehat{\Var}(\tilde{Y}^{\mathrm{LLM}})/m$, so the SDR idealization neither hides cost nor affects the validity of inference. As a result, the performance of \textsc{PPI++} is governed by the single quantity $\Var(Y-\lambda Y^{\mathrm{LLM}})$, which measures how much residual uncertainty remains after using human data to correct the LLM. This quantity will be the basis for our definition of rectification difficulty in the next subsection.

\subsection{Rectification difficulty}
\label{ssec:rect_diff}
Lemma~\ref{lem:mse_var} reduces the question-level performance of \textsc{PPI++}\ to a single scalar, $\Var(Y-\lambda Y^{\mathrm{LLM}})$: it plays exactly the role for \textsc{PPI++}\ that the population variance $\Var(Y)$ plays for the classical sample mean. We formally define this scalar as \emph{rectification difficulty}, which will serve as the central design primitive of our framework.

\begin{definition}[Rectification difficulty at tuning parameter $\lambda$]
\label{def:rect_diff_lambda}
We define the \emph{rectification difficulty} at tuning parameter $\lambda$ as
\begin{equation}
A(\lambda) := \Var(Y-\lambda Y^{\mathrm{LLM}}).
\end{equation}
\end{definition}
Intuitively, this quantity measures how hard it is to use human data to rectify the LLM for the question at hand. Smaller values of $A(\lambda)$ mean that the LLM already tracks the human outcome well after correction; larger values mean that more human data are needed to achieve a given level of precision. More precisely, because $A(\lambda)$ is the variance of the rectified outcome $Y - \lambda Y^{\mathrm{LLM}}$, \textsc{PPI++} precision depends on how strongly $Y^{\mathrm{LLM}}$ co-varies with $Y$ across respondents, not simply on how close their values are on average. An LLM that predicts the same value near $\bE[Y]$ for every respondent gives a good LLM-only estimator, yet offers essentially no rectification power.

Because $A(\lambda)$ is quadratic in $\lambda$, its unconstrained minimizer has the closed form
\[
\beta^\star
=
\frac{\Cov(Y,Y^{\mathrm{LLM}})}{\Var(Y^{\mathrm{LLM}})},
\]
the population regression (OLS) coefficient of $Y$ on $Y^{\mathrm{LLM}}$.
Following \citet{angelopoulos2023ppi}, the \textsc{PPI++} tuning clips this to $[0,1]$, $\lambda^\star=\Pi_{[0,1]}(\beta^\star)$ (with $\lambda^\star=0$ when $\Var(Y^{\mathrm{LLM}})=0$), so that $\lambda^\star$ minimizes $A(\lambda)$ over $\lambda\in[0,1]$. The \textsc{PPI++} estimator uses this variance-minimizing tuning, yielding variance
\begin{equation}
\Var\!\left(\widehat{\theta}(\lambda^\star)\right)
=
\frac{A(\lambda^\star)}{n}
\end{equation}
in the SDR regime. We can then write $A := A(\lambda^\star)$ for the minimized rectification difficulty and refer to it simply as the \emph{rectification difficulty} of the question. 

The rectification difficulty $A$ admits several useful interpretations. First, when the unconstrained coefficient lies in the unit interval ($\beta^\star\in[0,1]$, so $\lambda^\star=\beta^\star$), $A$ is the residual variance after optimally regressing $Y$ on $Y^{\mathrm{LLM}}$, and $A = \Var(Y)(1-\rho^2)$, where $\rho = \mathrm{Corr}(Y,Y^{\mathrm{LLM}})$.\footnote{This identity holds exactly because $\beta^\star$ is the population OLS coefficient of $Y$ on $Y^{\mathrm{LLM}}$ and $A$ is the resulting population residual variance; it is an algebraic identity that does not require joint normality or linearity of $\bE[Y\mid Y^{\mathrm{LLM}}]$.} More generally, when clipping binds, $A = A(\lambda^\star)$ is the minimum residual variance over the constrained class $\lambda\in[0,1]$. Second, the human-only sample mean corresponds to $\lambda=0$, for which $
A(0)=\Var(Y)$. Because $\lambda^\star$ minimizes $A(\lambda)$, we have
\[
A = A(\lambda^\star) \le A(0)=\Var(Y).
\]
Hence, \textsc{PPI++} weakly reduces the variance coefficient relative to the human-only sample mean: it uses the LLM when doing so is helpful and ignores it when it is not.  Third, restoring the question index $q$, let $\lambda_q^\star$ be the variance-minimizing tuning parameter and define
\begin{equation}
A_q := \Var\!\left(Y_q-\lambda_q^\star Y_q^{\mathrm{LLM}}\right).
\end{equation}
Under SDR ($m_q\to\infty$),
\begin{equation}
\Var\!\left(\widehat{\theta}_q(\lambda_q^\star)\right)=\frac{A_q}{n_q}.
\end{equation}
This variance-scaling relation is the bridge from estimation to design: it shows that, once rectification difficulty is known, the sample-allocation problem becomes a question of how to distribute the human budget across questions with different values of $A_q$. Section~\ref{sec:framework} develops this design problem. 

Although $\Var(Y_q - \lambda_q^\star Y_q^{\mathrm{LLM}})$ appears implicitly in \textsc{PPI++}\ variance derivations, prior work does not isolate it as a question-level object or use it for design. Our contribution is to identify $A_q$ as the correct variance index for hybrid human--LLM survey allocation. Two separate dominance properties then follow. First, at any fixed allocation, $A_q \le \Var(Y_q)$ ensures that \textsc{PPI++}\ weakly dominates the sample-mean estimator question by question. Second, under the \textsc{PPI++}\ design objective $J(\bm{A},\bm{n}) := \sum_q w_q A_q / n_q$ (developed in Section~\ref{sec:framework}), the $A_q$-based allocation $\bm{n}^\star(\bm{A})$ is by construction the optimizer and therefore weakly dominates any feasible alternative---including the classical Neyman rule applied to $\Var(Y_q)$---under that objective.

\paragraph{Rectification difficulty vs. Accuracy.} It is useful to note that rectification difficulty is distinct from raw LLM accuracy. Accuracy evaluates the LLM response as a substitute for the human response, penalizing discrepancies between $Y^{\mathrm{LLM}}$ and $Y$. Rectification difficulty instead evaluates the LLM as an auxiliary variable in a corrected estimator. Since \textsc{PPI++} uses human labels to estimate and remove systematic LLM errors, precision is governed by the residual respondent-level variation in $Y-\lambda^\star Y^{\mathrm{LLM}}$, not by average closeness in levels. Thus, an LLM can be inaccurate but easy to rectify if its errors are systematic: if $Y^{\mathrm{LLM}}=Y+c$, raw accuracy can be poor when $c$ is large, but $A=0$ because the correction removes the constant shift. Conversely, an LLM can be accurate in aggregate but provide no variance reduction: if $Y^{\mathrm{LLM}}=\bE[Y]$ for every respondent, the LLM-only mean is correct, but $\Var(Y^{\mathrm{LLM}})=0$, $\lambda^\star=0$, and $A=\Var(Y)$. Section~\ref{subsec:ppi_diagnostics} documents this dissociation empirically.

\paragraph{Rectification difficulty and LLM heterogeneity.}
A direct implication is that rectification difficulty is governed by whether the LLM reproduces \emph{individual-level} heterogeneity, not by whether it matches the population mean. When the LLM returns nearly the same prediction for every respondent, $\Var(Y^{\mathrm{LLM}})$ is close to zero, the optimal tuning $\lambda^\star$ collapses to zero, and $A_q$ reverts to the human-only variance $\Var(Y_q)$, so the LLM contributes no rectification power even when its average is accurate. This links rectification difficulty to a well-documented limitation of LLM respondents, namely that they tend to understate within-population heterogeneity and collapse toward modal responses \citep{peng2025mega, toubia2025database}. Questions on which this collapse is most severe are exactly those with the highest $A_q$, and the allocation rule responds by directing human respondents toward them. Seen this way, $A_q$ also serves as a common yardstick for efforts to improve synthetic respondents: any intervention that injects genuine respondent-level variation into the LLM, whether through richer persona prompting, conditioning on individual covariates, or domain-specific fine-tuning \citep{wang2025finetune}, lowers $A_q$ and thereby reduces the human budget needed to reach a target precision. Our framework is therefore complementary to that line of work; it does not require the LLM to capture heterogeneity, but it quantifies, through $A_q$, how much any such improvement is worth for survey design.

\section{Survey Design Framework}
\label{sec:framework}

In Section~\ref{sec:ppi_pp}, we showed that, under the variance-minimizing \textsc{PPI++} tuning, the estimator for question $q$ has variance $\Var(\widehat{\theta}_q(\lambda_q^\star)) = A_q/n_q$ in the SDR regime. In this section, we use that variance characterization to derive an optimal allocation framework that addresses our central design question of how to choose the number of human responses, for each question $q$, $\{n_q\}_{q\in\mathcal{T}}$, for the new survey questions in $\mathcal{T}$.

This section is organized as follows. First, in Section~\ref{subsec:oracle_allocation}, we solve the allocation problem under the benchmark case in which the rectification difficulties $\{A_q\}_{q\in\mathcal{T}}$ are known. This yields an oracle allocation rule and clarifies what the efficient design would look like if the design-relevant quantities were observed. Second, because the rectification diffculties are unknown for new questions, Section~\ref{subsec:allocation_unknown} introduces a two-phase procedure that operationalizes the oracle rule with predicted difficulties: Phase~A (Section~\ref{subsec:meta_learning}) trains a meta-learner on historical paired human--LLM data to predict $\tilde{A}_q$ for each target question, and Phase~B (Section~\ref{subsec:end_to_end}) combines these predictions with the \textsc{PPI++}\ estimator and establishes end-to-end inference guarantees. Finally, in Section~\ref{subsec:extensions} we discuss several extensions of the framework.

\subsection{Optimal allocation with known rectification difficulty}
\label{subsec:oracle_allocation}

We begin with the oracle benchmark in which the rectification difficulties $\{A_q\}_{q\in\mathcal{T}}$ are known. By Lemma~\ref{lem:mse_var}, under the variance-minimizing tuning $\lambda=\lambda_q^\star$, the mean squared error equals the variance:
\[
\bE\!\left[\bigl(\widehat{\theta}_q(\lambda_q^\star)-\theta_q^\star\bigr)^2\right]
=
\Var\!\left(\widehat{\theta}_q(\lambda_q^\star)\right)
=
\frac{A_q}{n_q},
\]
in the SDR regime where $m_q\to\infty$. Thus, if $\{A_q\}$ were known, choosing the human sample sizes $\{n_q\}$ reduces to a deterministic variance-minimization problem.

For each question $q$, take as given: (i) rectification difficulty $A_q>0$, (ii) per-response human cost $c_q>0$, and (iii) an importance weight $w_q\ge 0$ reflecting how much the researcher values precision on that question. The total human-data budget is $B$. The oracle allocation problem is
\begin{align}
\label{eq:opt_problem}
\min_{\{n_q\}_{q\in\mathcal{T}}}
\quad & \sum_{q\in\mathcal{T}} w_q \frac{A_q}{n_q} \quad
\text{s.t.} \quad \sum_{q\in\mathcal{T}} c_q n_q \le B, \quad  n_q \ge 0 \quad \forall \; q \in \mathcal{T}.
\end{align}

\begin{theorem}[Oracle Neyman-type allocation under known rectification difficulty]
\label{thm:optimal_allocation}
Suppose $A_q>0$ and $c_q>0$ for all $q\in\mathcal{T}$, and assume there exists at least one $q$ with $w_q>0$. Then the unique optimal solution to Equation~\eqref{eq:opt_problem} satisfies $n_q^\star=0$ for all $q$ with $w_q=0$, and for $q$ with $w_q>0$,
\begin{equation}
\label{eq:closed_form_sol}
n_q^\star
=
\frac{B}{\sum_{j\in \mathcal{T}: w_j>0} \sqrt{w_j A_j c_j}}
\sqrt{\frac{w_q A_q}{c_q}}.
\end{equation}
\end{theorem}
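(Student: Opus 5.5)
The plan is to treat \eqref{eq:opt_problem} as a convex program and solve it in closed form, handling a few degenerate features along the way. The objective is $\infty$ when $n_q=0$ and $w_q>0$, and the $n_q$ with $w_q=0$ do not enter the objective. Let $S=\{q: w_q>0\}$, which is nonempty by assumption. First, any feasible point with finite objective must have $n_q>0$ for every $q\in S$. Second, for $q\notin S$, any positive $n_q$ consumes budget $c_q n_q>0$ without lowering the objective. Moving that budget to some $j\in S$ strictly lowers $w_jA_j/n_j$, so an optimum must have $n_q^\star=0$ off $S$. This settles the first claim and reduces the problem to the coordinates in $S$ with $n_q>0$.

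Second, on $S$ each term $w_qA_q/n_q$ is strictly decreasing in $n_q$. Hence the budget constraint must bind at an optimum, since otherwise we could scale up some $n_q$. We are therefore left with $\min \sum_{q\in S} w_qA_q/n_q$ subject to $\sum_{q\in S}c_qn_q=B$ and $n_q>0$.

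I would then prove optimality and uniqueness together by Cauchy--Schwarz rather than through KKT, which avoids having to argue that a minimizer exists. For any feasible $n$,
\[
\Bigl(\sum_{q\in S}\sqrt{w_qA_qc_q}\Bigr)^2=\Bigl(\sum_{q\in S}\sqrt{\tfrac{w_qA_q}{n_q}}\sqrt{c_qn_q}\Bigr)^2\le \Bigl(\sum_{q\in S}\tfrac{w_qA_q}{n_q}\Bigr)\Bigl(\sum_{q\in S}c_qn_q\Bigr).
\]
This gives the lower bound $\sum_{q\in S} w_qA_q/n_q\ge (\sum_{q\in S}\sqrt{w_qA_qc_q})^2/B$. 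Equality holds if and only if the two vectors are proportional, that is, $\sqrt{w_qA_q/n_q}=\kappa\sqrt{c_qn_q}$ for some constant $\kappa$, equivalently $n_q\propto\sqrt{w_qA_q/c_q}$. Imposing the binding budget pins down the constant and gives exactly \eqref{eq:closed_form_sol}. As a cross-check, the Lagrangian stationarity condition $w_qA_q/n_q^2=\mu c_q$ yields the same square-root form.

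The only delicate point is uniqueness when $S\neq\mathcal{T}$. It needs the strict-improvement argument above for coordinates off $S$, together with the fact that the equality case in Cauchy--Schwarz holds only under proportionality. Both follow from the positivity of $A_q$, $c_q$ and $w_q$ on $S$. I would also state the problem as a relaxation over real $n_q\ge 0$, ignoring integrality, since the theorem is stated in that continuous form.
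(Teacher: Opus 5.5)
Your proof is correct, and it takes a different route from the paper's. The paper argues through convexity. Each term $w_qA_q/n_q$ is strictly convex on $n_q>0$, and the optimum is interior on $\{q:w_q>0\}$ because the objective blows up at zero. The KKT conditions are then taken as necessary and sufficient, and the stationarity condition $w_qA_q/n_q^2=\mu c_q$ together with the binding budget pins down $\mu$ and hence the closed form.

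You instead apply Cauchy--Schwarz to the vectors $(\sqrt{w_qA_q/n_q})_q$ and $(\sqrt{c_qn_q})_q$. In one step this gives the lower bound $J\ge (\sum_{q\in S}\sqrt{w_qA_qc_q})^2/B$, shows that the bound is attained (so existence comes for free), and identifies the minimizer uniquely through the equality case. No appeal to KKT sufficiency, a constraint qualification, or a separate existence argument is needed. It also produces the optimal value $J^\star(\bm{A})$ directly, which the paper later needs in \eqref{eq:J_star_closed_form}. The paper's proof of Proposition~\ref{prop:robust_allocation} already has the same Cauchy--Schwarz structure.

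Your handling of the $w_q=0$ coordinates is also sharper than the paper's. The paper only says that a positive allocation there ``weakly'' worsens the objective, which by itself does not give uniqueness. Your budget-shifting argument gives a strict improvement. You could even drop that step: applying Cauchy--Schwarz with $\sum_{q\in S}c_qn_q\le B-\sum_{q\notin S}c_qn_q$ shows that equality forces $n_q=0$ off $S$.

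What the paper's Lagrangian route buys in exchange is reuse. The same stationarity calculation handles the cost-minimizing dual and the MAE cube-root rule in the appendix, whereas your approach would need a tailored inequality for each (H\"older rather than Cauchy--Schwarz for the MAE case). Both arguments tacitly assume $B>0$.
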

The full proof is given in Web Appendix~\ref{ssec:proof_thm_opt_all}. The objective is strictly convex and the constraint linear, so a unique minimum exists; the closed-form solution follows from the first-order conditions.

The square-root form in Equation~\eqref{eq:closed_form_sol} is a direct analogue of Neyman allocation in stratified sampling \citep{neyman1934two}, with questions playing the role of strata and rectification difficulty $A_q$ replacing the stratum variance. The allocation rule itself is intuitive: more important questions (larger $w_q$) receive more human labels; more expensive labels (larger $c_q$) receive fewer allocations; and questions on which the LLM is harder to rectify (larger $A_q$) receive more human labels. When the LLM closely tracks humans, $A_q$ is small and only a small human sample is needed; when the LLM is unreliable, $A_q$ is large and the allocation shifts toward that question.

\subsection{Allocation with unknown rectification difficulty}
\label{subsec:allocation_unknown}

The oracle rule in Theorem~\ref{thm:optimal_allocation} assumes that the rectification difficulties for the set of questions in the survey $\mathcal{T}$ ($\{A_q\}_{q\in\mathcal{T}}$) are known. For the setting that motivates this paper, i.e., a \emph{new} survey, rectification difficulties are not known. Estimating $A_q$ from the target survey itself would require collecting exactly the human data whose allocation we are trying to optimize. This is the core cold-start problem of LLM-augmented survey design. We resolve this with a two-phase \emph{survey design framework}, summarized below. Phase A (meta-learning) uses the historical corpus $\mathcal{H}$, where paired human responses and LLM predictions are available, to learn how rectification difficulty depends on question features, and then uses that learned mapping to predict $\tilde{A}_q$ for each target question $q \in \mathcal{T}$. Phase B (allocation and inference) plugs the predictions into the oracle allocation rule, collects human responses accordingly, and employs the \textsc{PPI++}\ estimator for each target question.

\begin{framed}
\noindent\textbf{Survey Design Framework} (full formulas in Sections~\ref{subsec:meta_learning}--\ref{subsec:end_to_end}).

\medskip
\textbf{Phase A: Meta-learning} (train on $\mathcal{H}$, predict difficulty for $\mathcal{T}$).
\begin{adjustwidth}{1.5em}{0pt}
\textbf{Step 1.} Estimate the rectification difficulty $\widehat{A}_q^{\,\mathcal{H}}$ on each historical question $q \in \mathcal{H}$.

\textbf{Step 2.} Fit a meta-learner $\widehat{\phi}$ from question features $\bm{z}_q$ to $\log A_q$.

\textbf{Step 3.} Predict $\tilde{A}_q = \exp(\widehat{\phi}(\bm{z}_q))$ for each target question $q \in \mathcal{T}$.
\end{adjustwidth}

\bigskip
\textbf{Phase B: Allocation and inference} (on $\mathcal{T}$).
\begin{adjustwidth}{1.5em}{0pt}
\textbf{Step 4.} Allocate $\tilde{n}_q^\star \propto \sqrt{w_q \tilde{A}_q / c_q}$ subject to $\sum_q c_q \tilde{n}_q^\star = B$ (Theorem~\ref{thm:optimal_allocation}).

\textbf{Step 5.} Field the survey: collect $\tilde{n}_q^\star$ human responses per question and their paired LLM predictions.

\textbf{Step 6.} Recompute $\widehat{\lambda}_q$ on the collected data and form the \textsc{PPI++}\ estimator $\widehat{\theta}_q(\widehat{\lambda}_q)$, with valid confidence intervals.
\end{adjustwidth}
\end{framed}

\subsubsection{Phase A: Meta-learning}
\label{subsec:meta_learning}

To make the oracle rule operational, we use the historical corpus $\mathcal{H}$, where paired human responses and LLM predictions are already available, to learn how rectification difficulty varies across questions. The idea is that questions that are hard for LLM-assisted estimation in past surveys often share characteristics with questions that will be hard in new surveys. We exploit this regularity to predict design-relevant difficulty for the target questions before any new human responses are collected.

The key innovation is that we do not try to predict the target survey responses themselves. Instead, we predict a question-level \emph{difficulty of rectification}: how hard it will be to use human data to correct the LLM for that question. This is exactly the quantity needed for design. Phase A accomplishes this in three steps.

\textbf{Step~1 (Estimate rectification difficulty on $\mathcal{H}$).}
For each historical question $q\in\mathcal{H}$, we compute the plug-in tuning parameter $\widehat{\lambda}_q^{\,\mathcal{H}} = \widehat{\Cov}(Y_q,Y_q^{\mathrm{LLM}})/\widehat{\Var}(Y_q^{\mathrm{LLM}})$
clipped to $[0,1]$ (and set to $0$ when $\widehat{\Var}(Y_q^{\mathrm{LLM}})=0$), together with the plug-in rectification difficulty
\[
\widehat{A}_q^{\,\mathcal{H}}
:=
\widehat{\Var}\!\left(Y_q-\widehat{\lambda}_q^{\,\mathcal{H}} Y_q^{\mathrm{LLM}}\right),
\]
computed from the paired sample $\{(Y_{qi},Y_{qi}^{\mathrm{LLM}})\}_{i=1}^{n_q^{\mathcal{H}}}$.
Under standard regularity conditions, $\widehat{A}_q^{\,\mathcal{H}} \to A_q$ in probability \citep{angelopoulos2023ppi}, where $A_q = A_q(\lambda_q^\star)$ is the rectification difficulty defined in Section~\ref{ssec:rect_diff}. The superscript $\mathcal{H}$ distinguishes this historical-sample estimate from the target-sample plug-in $\widehat{\lambda}_q$ used in Phase B below.

\textbf{Step~2 (Train predictive model).}
We represent each historical question by a feature vector $\bm{z}_q\in\mathbb{R}^K$---for example, LLM embeddings of the question wording and response options, the question type, and the number of feasible choice options---yielding training samples $\{(\bm{z}_q, \log \widehat{A}_q^{\,\mathcal{H}})\}_{q\in\mathcal H}$. We then fit a predictive model $\bm{z}_q\mapsto \phi(\bm{z}_q)$ by minimizing squared error in log difficulty:
\[
\widehat{\phi}
\in
\arg\min_{\phi\in\Phi}
\sum_{q\in\mathcal H}
\bigl(\log \widehat{A}_q^{\,\mathcal{H}}-\phi(\bm{z}_q)\bigr)^2,
\]
possibly with regularization. We model $\log A_q$ rather than $A_q$ directly because rectification difficulty is positive and can vary by orders of magnitude, so the log scale is more stable. The function class $\Phi$ is flexible and may consist of linear models, trees, or neural networks.

\textbf{Step~3 (Predict for $\mathcal{T}$).}
For each new question $q\in\mathcal{T}$, we predict
\[
\tilde{A}_q := \exp\!\bigl(\widehat{\phi}(\bm{z}_q)\bigr).
\]
We use the tilde notation here, rather than $\widehat{A}_q^{\,\mathcal{H}}$, to emphasize that $\tilde{A}_q$ is a design-stage prediction for a new survey question for which human responses have not yet been collected.

In sum, Phase A converts historical paired human--LLM data into design-stage predictions of rectification difficulty for new questions. This is what bridges the gap between the oracle allocation problem and a practical survey-design procedure: it enables zero-shot allocation of human effort before the target survey has been fielded.

\subsubsection{Phase B: Allocation and inference}
\label{subsec:end_to_end}

With the predicted difficulties $\widetilde{\bm{A}} = (\tilde{A}_q)_{q\in\mathcal T}$ in hand, the researcher can proceed from design to estimation. Phase B consists of three steps.

\textbf{Step~4 (Allocate human labels).}
Plug $\widetilde{\bm{A}}$ into the oracle rule in Theorem~\ref{thm:optimal_allocation} to obtain the design-stage allocation $\tilde{n}_q^\star$ for each $q\in\mathcal{T}$, rounding the continuous allocation to integers via largest-remainder rounding. In practice one would also impose a minimum allocation $\tilde{n}_q^\star \ge n_{\min}$ so that every fielded question has enough responses to estimate the plug-in tuning $\widehat{\lambda}_q$ and form a valid Wald interval; the simulations below use the unconstrained allocation and therefore include very small per-question samples at the lowest budgets, which we flag when relevant in our empirical analysis.

\textbf{Step~5 (Collect human responses).}
For each $q\in\mathcal{T}$, collect $\tilde{n}_q^\star$ human responses and record their paired LLM predictions.

\textbf{Step~6 (Estimate and conduct inference).}
For each $q\in\mathcal{T}$, compute the target-sample plug-in tuning parameter $\widehat{\lambda}_q = \widehat{\Cov}(Y_q,Y_q^{\mathrm{LLM}}) / \widehat{\Var}(Y_q^{\mathrm{LLM}})$ on the collected $\tilde{n}_q^\star$ labeled pairs (clipped to $[0,1]$), and form the \textsc{PPI++}\ estimator $\widehat{\theta}_q(\widehat{\lambda}_q)$ via Equation~\eqref{eq:ppi++}.

We now formalize the inference guarantees for the resulting estimator.

\begin{proposition}[Asymptotic normality of \textsc{PPI++}]
\label{thm:end_to_end}
Fix a target question $q\in\mathcal{T}$, and assume: (i) $\{(\bm{X}_{qi},Y_{qi})\}_{i=1}^{\tilde n_q^\star}$ are i.i.d.\ draws from $\cP_q$ with $\bE[Y_q^4]<\infty$ and $\bE[(Y_q^{\mathrm{LLM}})^4]<\infty$; (ii) the labeled sample and the synthetic pool $\{\widetilde{\bm{X}}_{qj}\}_{j=1}^{m_q}$ are drawn independently; (iii) $\tilde n_q^\star\to\infty$, $m_q\to\infty$, and $m_q/\tilde n_q^\star\to\infty$ (synthetic-data-rich regime); and (iv) $A_q>0$ (the degenerate case $A_q=0$, in which $Y_q-\lambda_q^\star Y_q^{\mathrm{LLM}}$ is almost surely constant, is excluded). Adopt the clipping convention of Section~\ref{sec:ppi_pp}: $\lambda_q^\star$ is the ratio $\Cov(Y_q, Y_q^{\mathrm{LLM}})/\Var(Y_q^{\mathrm{LLM}})$ projected onto $[0,1]$, with $\lambda_q^\star := 0$ when $\Var(Y_q^{\mathrm{LLM}})=0$, and analogously for the sample plug-in $\widehat{\lambda}_q$. Then, conditional on the historical corpus $\mathcal{H}$ and the resulting design-stage allocation $\tilde n_q^\star$,
\[
\frac{\widehat{\theta}_q(\widehat{\lambda}_q)-\theta_q^\star}{\sigma_q}
\xrightarrow{d}
\mathcal N(0,1),
\qquad
\sigma_q^2 \;:=\; \frac{A_q}{\tilde{n}_q^\star} \;+\; \frac{(\lambda_q^\star)^2\,\Var(Y_q^{\mathrm{LLM}})}{m_q},
\]
where $A_q = \Var\!\left(Y_q-\lambda_q^\star Y_q^{\mathrm{LLM}}\right)$.
\end{proposition}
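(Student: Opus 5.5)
Write $n:=\tilde n_q^\star$, $m:=m_q$, suppress $q$, and let $\bar Y,\bar Y^{\mathrm{LLM}}$ denote labeled-sample means and $\bar{\tilde Y}^{\mathrm{LLM}}$ the synthetic-pool mean. Because we condition on $\mathcal{H}$, the allocation $n$ is a deterministic sequence. The target-survey data are independent of $\mathcal{H}$, so the problem is a standard fixed-$n$ asymptotic problem. The plan is to split the estimation error as
\[
\widehat\theta(\widehat\lambda)-\theta^\star
=\underbrace{\bigl[\widehat\theta(\lambda^\star)-\theta^\star\bigr]}_{T_1}
+\underbrace{(\widehat\lambda-\lambda^\star)\bigl(\bar{\tilde Y}^{\mathrm{LLM}}-\bar Y^{\mathrm{LLM}}\bigr)}_{T_2}.
\]
This identity holds exactly, since $\widehat\theta(\lambda)$ is affine in $\lambda$. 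I would show that $T_1/\sigma\to\mathcal N(0,1)$ and that $T_2/\sigma\to 0$ in probability, and then conclude by Slutsky.

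\textbf{Step 1 (the oracle term $T_1$).} Write
\[
T_1=\frac1n\sum_{i=1}^n\bigl(Z_i-\bE Z\bigr)+\lambda^\star\,\frac1m\sum_{j=1}^m\bigl(\tilde Y_j^{\mathrm{LLM}}-\bE Y^{\mathrm{LLM}}\bigr),
\qquad Z_i:=Y_i-\lambda^\star Y_i^{\mathrm{LLM}}.
\]
The two sums are independent by (ii). By Lemma~\ref{lem:mse_var}, the variance of $T_1$ is exactly $\sigma^2$. The first sum, scaled by $\sqrt{n/A}$, converges to $\mathcal N(0,1)$ by the Lindeberg--L\'evy CLT, using finite second moments from (i) and $A>0$ from (iv). The second sum is $O_p(m^{-1/2})$. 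There are two sub-cases:
\begin{itemize}
\item If $\lambda^\star\Var(Y^{\mathrm{LLM}})=0$, the second sum contributes nothing.
\item Otherwise, the ratio $\frac{(\lambda^\star)^2\Var(Y^{\mathrm{LLM}})/m}{A/n}\to0$ by (iii), so $\sigma^2\sim A/n$ and the second sum is $o_p(\sigma)$.
\end{itemize}
In either case, Slutsky gives $T_1/\sigma\to\mathcal N(0,1)$. Alternatively, one can cite a triangular-array Lindeberg CLT for the independent sum directly, which avoids the case split.

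\textbf{Step 2 (the plug-in term $T_2$).} The second factor of $T_2$ is
\[
\bar{\tilde Y}^{\mathrm{LLM}}-\bar Y^{\mathrm{LLM}}=O_p(n^{-1/2})+O_p(m^{-1/2})=O_p(n^{-1/2}).
\]
Since $\sigma\ge\sqrt{A/n}$ with $A>0$, it suffices to show $\widehat\lambda-\lambda^\star=o_p(1)$. Sample variances and covariances are consistent under (i). The clipping map $\Pi_{[0,1]}$ is continuous, so the continuous mapping theorem gives $\widehat\lambda\to\lambda^\star$ whenever $\Var(Y^{\mathrm{LLM}})>0$. Then $T_2=o_p(1)\cdot O_p(n^{-1/2})=o_p(\sigma)$.

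\textbf{Main obstacle: the degenerate case $\Var(Y^{\mathrm{LLM}})=0$.} Here the ratio defining $\widehat\lambda$ is discontinuous, so consistency cannot come from the continuous mapping theorem. I would handle it directly. In this case $Y^{\mathrm{LLM}}$ is almost surely constant. Then $\widehat\Var(Y^{\mathrm{LLM}})=0$ almost surely, so $\widehat\lambda=0=\lambda^\star$ by the stated convention. The plug-in term also vanishes on its own, because $\bar Y^{\mathrm{LLM}}=\bar{\tilde Y}^{\mathrm{LLM}}$. So $T_2\equiv0$ and no rate argument is needed.

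More generally, $T_2$ only needs $|\widehat\lambda-\lambda^\star|$ to be bounded, because $\widehat\lambda,\lambda^\star\in[0,1]$. The boundedness alone gives $T_2=O_p(n^{-1/2})$; consistency is what is needed to upgrade this to $o_p(n^{-1/2})$. A further subtlety concerns the fourth-moment assumption. It is not needed for the argument above, which uses only second moments for the CLT and consistency. It would become relevant only if one also wants consistency of the variance estimator $\widehat\sigma^2$ for a studentized interval. I would remark on this rather than prove it.
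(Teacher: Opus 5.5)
Your proposal is correct and follows the paper's proof. It uses the same exact affine decomposition of $\widehat\theta(\widehat\lambda)-\theta^\star$ into the oracle error $\widehat\theta(\lambda^\star)-\theta^\star$ plus $(\widehat\lambda-\lambda^\star)(\bar{\tilde Y}^{\mathrm{LLM}}-\bar Y^{\mathrm{LLM}})$, a CLT with Slutsky for the first piece, negligibility of the second, and the same direct treatment of a constant surrogate. The only difference is that you need just consistency of $\widehat\lambda$ (via continuity of the clipping map), where the paper invokes an $O_p(n^{-1/2})$ rate under fourth moments; this lets you work with second moments and absorb the paper's separate boundary cases $\lambda^\star\in\{0,1\}$. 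Contrary to your closing remark, consistency of $\widehat\sigma^2$ also needs only second moments, so the fourth-moment condition matters only for the paper's rate argument, not for studentization.
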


This is a direct consequence of Theorem~1 in \citet{angelopoulos2023ppi}; we provide a self-contained proof in Web Appendix~\ref{app:end_to_end_proof}. The estimation error in $\widehat{\lambda}_q$ is $O_p((\tilde{n}_q^\star)^{-1/2})$, while its effect on $\widehat{\theta}_q(\widehat{\lambda}_q)$ is only $O_p((\tilde{n}_q^\star)^{-1})$, which is asymptotically negligible relative to the $O_p((\tilde{n}_q^\star)^{-1/2})$ sampling error. Thus, the plug-in estimator inherits the CLT of the oracle-$\lambda_q^\star$ case. 


An important implication is that prediction errors in $\tilde A_q$ affect the efficiency of the allocation but not its asymptotic validity. Concretely, the standard Wald interval
\[
\widehat{\theta}_q(\widehat{\lambda}_q) \;\pm\; z_{1-\alpha/2}\,\widehat{\sigma}_q,
\qquad
\widehat{\sigma}_q^2 \;:=\; \frac{\widehat{A}_q}{\tilde n_q^\star} \;+\; \frac{\widehat{\lambda}_q^{\,2}\,\widehat{\Var}(Y_q^{\mathrm{LLM}})}{m_q},
\]
constructed entirely from target-sample quantities (labeled pairs plus the synthetic pool), attains nominal $(1-\alpha)$ coverage in the SDR limit for \emph{any} positive $\tilde A_q$: Phase-A prediction error rescales the allocated sample size $\tilde n_q^\star$, and hence the width of the interval, but does not enter the coverage probability.

The next proposition quantifies the efficiency loss from misspecified predicted difficulties.

\begin{proposition}[Sensitivity to misspecified rectification difficulty]
\label{prop:robust_allocation}
Let $\bm{A} = (A_1, \dots, A_Q)$ and $\widetilde{\bm{A}}=(\tilde{A}_1,\dots,\tilde{A}_Q)$ be positive vectors, and let $\bm{n}^\star(\bm{A})$ and $\widetilde{\bm{n}}^\star = \bm{n}^\star(\widetilde{\bm{A}})$ denote the corresponding optimal continuous allocations. Suppose $|\log \tilde{A}_q-\log A_q|\le \varepsilon$ for all $q\in\mathcal{T}$. Then:
\begin{enumerate}[(i)]
\item $J(\bm{A},\widetilde{\bm{n}}^\star)\ge J^\star(\bm{A})$, with equality if and only if $\tilde{A}_q$ is proportional to $A_q$ across the questions with $w_q>0$.
\item The relative efficiency loss satisfies
\[
1 \le \frac{J(\bm{A},\widetilde{\bm{n}}^\star)}{J^\star(\bm{A})} \le \cosh^2(\varepsilon/2) \le e^\varepsilon ,
\]
where $\cosh^2(\varepsilon/2) = 1+\tfrac{\varepsilon^2}{4}+O(\varepsilon^4)$ as $\varepsilon\to0$.

\end{enumerate}
\end{proposition}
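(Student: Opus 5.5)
Substituting the closed form of Theorem~\ref{thm:optimal_allocation} into $J$ turns both claims into an inequality about a single ratio. Restrict attention to questions with $w_q>0$; questions with $w_q=0$ receive zero allocation under both $\bm{A}$ and $\widetilde{\bm{A}}$ and contribute nothing to $J$. Write $a_q:=\sqrt{w_qA_qc_q}$ and $r_q:=\sqrt{\tilde A_q/A_q}$. The hypothesis $|\log \tilde A_q-\log A_q|\le\varepsilon$ becomes $r_q\in[e^{-\varepsilon/2},e^{\varepsilon/2}]$.

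First, compute the optimal value. Plugging $n_q^\star$ into $\sum_q w_qA_q/n_q$ gives
\[
J^\star(\bm{A})=\frac{1}{B}\Bigl(\sum_q a_q\Bigr)^2 .
\]
Next, compute the misallocated value. Since $\tilde n_q^\star=B\sqrt{w_q\tilde A_q/c_q}\big/\sum_j\sqrt{w_j\tilde A_jc_j}$, a direct calculation gives
\[
J(\bm{A},\widetilde{\bm{n}}^\star)=\frac{1}{B}\Bigl(\sum_q a_q r_q\Bigr)\Bigl(\sum_q a_q/r_q\Bigr).
\]
Introduce the probability weights $p_q:=a_q/\sum_j a_j$. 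The efficiency ratio is then
\[
R:=\frac{J(\bm{A},\widetilde{\bm{n}}^\star)}{J^\star(\bm{A})}=\Bigl(\sum_q p_q r_q\Bigr)\Bigl(\sum_q p_q r_q^{-1}\Bigr).
\]

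For part (i), apply Cauchy--Schwarz to the vectors $(\sqrt{p_qr_q})$ and $(\sqrt{p_q/r_q})$. This gives $R\ge(\sum_q p_q)^2=1$. Equality holds iff the two vectors are proportional, i.e.\ $r_q$ is constant in $q$, which means $\tilde A_q\propto A_q$ on the set $\{w_q>0\}$. Alternatively, $R\ge 1$ follows directly from the optimality of $\bm{n}^\star(\bm{A})$ together with the uniqueness statement in Theorem~\ref{thm:optimal_allocation}.

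For part (ii), the upper bound is a Kantorovich-type inequality, and it is the main step. Set $m:=e^{-\varepsilon/2}$ and $M:=e^{\varepsilon/2}$. For $r\in[m,M]$ we have $(r-m)(M-r)\ge0$, which rearranges to
\[
r+\frac{mM}{r}\le m+M .
\]
Averaging against $p$ gives $\mathbb{E}_p[r]+mM\,\mathbb{E}_p[1/r]\le m+M$. By AM--GM,
\[
\mathbb{E}_p[r]\cdot mM\,\mathbb{E}_p[1/r]\le \frac{(m+M)^2}{4},
\]
so $R\le (m+M)^2/(4mM)$. With $mM=1$ and $m+M=2\cosh(\varepsilon/2)$, this is exactly $\cosh^2(\varepsilon/2)$.

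It remains to compare with $e^\varepsilon$ and give the expansion. We have $\cosh(\varepsilon/2)=(e^{\varepsilon/2}+e^{-\varepsilon/2})/2\le e^{\varepsilon/2}$ because $e^{-\varepsilon/2}\le e^{\varepsilon/2}$; squaring gives $\cosh^2(\varepsilon/2)\le e^\varepsilon$. Finally, $\cosh^2(x)=(1+\cosh 2x)/2$, so at $x=\varepsilon/2$ the Taylor expansion is $1+\varepsilon^2/4+O(\varepsilon^4)$.

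The only delicate bookkeeping is making sure the normalizing sums over $\{w_j>0\}$ cancel correctly in $R$. The cancellation works because both allocations use the same budget $B$, so $B$ drops out of the ratio.
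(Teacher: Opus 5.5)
Your proof is correct and follows the paper's argument essentially step for step. It uses the same reduction of the efficiency ratio to $\bigl(\sum_q \pi_q u_q\bigr)\bigl(\sum_q \pi_q u_q^{-1}\bigr)$ with $u_q=\sqrt{\tilde A_q/A_q}$, Cauchy--Schwarz for the lower bound and equality case, and the $(u-m)(M-u)\ge 0$ Kantorovich step plus AM--GM for the $\cosh^2(\varepsilon/2)$ bound. The differences are cosmetic: you restrict explicitly to $\{w_q>0\}$, which the paper leaves implicit, and you use the one-line comparison $\cosh(\varepsilon/2)\le e^{\varepsilon/2}$ instead of the paper's identity $(x-1)(3x+1)/(4x)\ge 0$.
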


The proof is given in Web Appendix~\ref{subsec:robustness_A}. The practical reading of Proposition~\ref{prop:robust_allocation} is that only \emph{relative} errors in $\tilde{A}_q$ matter. By the equality condition in part~(i), multiplying all $\tilde{A}_q$ by a common constant leaves the allocation unchanged: a meta-learner that over- or under-predicts every difficulty by the same factor incurs no efficiency loss. Indeed, the main source of efficiency loss is misranking questions relative to one another. For the same reason, the worst-case bound in part~(ii) ($\cosh^2(\varepsilon/2)\approx 1+\varepsilon^2/4$, second order in $\varepsilon$) is conservative, because a common multiplicative error inflates $\varepsilon$ yet causes no loss. As a sharper typical-case benchmark, Web Appendix~\ref{app:avg_case_robustness} shows that when the log-scale prediction errors $\log\tilde{A}_q - \log A_q$ are approximately Gaussian with variance $\sigma_\eta^2$, the expected efficiency loss is approximately $1+\sigma_\eta^2/4$; we treat this as descriptive rather than a formal guarantee. Empirically, the meta-learned allocation recovers 61--79\% of the oracle gains out of sample within each survey (Sections~\ref{sec:empirics} and~\ref{sec:cces}), and 68\% of the oracle gains in a harder cross-dataset test in which the meta-learner is trained on one survey and applied to a different survey generated by a different LLM (Section~\ref{subsec:robustness_eval}), indicating robustness to imperfect predictions and to distribution shift between $\mathcal{H}$ and $\mathcal{T}$.

In sum, the two-phase framework turns the oracle benchmark of Section~\ref{subsec:oracle_allocation} into an operational design procedure. Its inputs are modest: a historical corpus of paired human--LLM responses (which any organization that has previously fielded surveys already possesses), text embeddings of the old and new questions, and a budget. Its outputs are: (i) the sample sizes $\{\tilde n_q^\star\}$ and (ii) after fielding the new survey, \textsc{PPI++}\ estimates with valid confidence intervals. The division of labor between the two phases is what makes the procedure reliable in practice: meta-learning affects only where human labels go, and Proposition~\ref{prop:robust_allocation} bounds the efficiency cost of its errors, while statistical validity rests solely on the post-collection estimator (Proposition~\ref{thm:end_to_end}). To our knowledge, no existing method provides this combination of allocation rule across heterogeneous and new survey questions and end-to-end inference guarantees.

\subsection{Extensions}
\label{subsec:extensions}

The framework above is presented for scalar population means under a weighted-MSE objective and a per-question allocation cost. However, the machinery is quite general and can be extended in two broad directions. We discuss each below:

\subsubsection{Direction I: Alternative design problems.}
The first set of extensions retains the \textsc{PPI++}\ mean estimator and varies only the design problem (objective, constraint, or allocation unit). Because Proposition~\ref{thm:end_to_end} holds for any fixed allocation, the inference guarantees carry over unchanged to every variant in this direction: once the allocation is set and data are collected, $\widehat{\theta}_q(\widehat{\lambda}_q)$ delivers valid inference regardless of how the allocation was chosen.

\squishlist
\item \textbf{Cost-minimizing dual formulation.} The budget-constrained problem in Equation~\eqref{eq:opt_problem} has a natural counterpart: given a target weighted MSE $J_{\mathrm{target}}$, minimize the total human-data cost $\sum_q c_q n_q$ subject to $\sum_q w_q A_q / n_q \le J_{\mathrm{target}}$. Both problems share the same first-order conditions up to rescaling of the Lagrange multiplier, so the optimal allocation proportions are identical: $n_q^\star \propto \sqrt{w_q A_q / c_q}$. Only the overall scale changes. For a single question with unit cost, this reduces to the standard \textsc{PPI++}\ sample-size determination $n^\star = A_q / J_{\mathrm{target}}$, with $A_q$ replacing the classical variance $\Var(Y_q)$. See Web Appendix~\ref{rmk:dual_and_single_q}.

\item \textbf{Alternative loss functions.} The framework also accommodates loss functions other than MSE. Under a weighted mean-absolute-error (MAE) objective $\sum_q w_q \, \bE\bigl[|\widehat{\theta}_q(\lambda_q^\star) - \theta_q^\star|\bigr]$, asymptotic normality (Proposition~\ref{thm:end_to_end}) gives $\bE[|\widehat{\theta}_q - \theta_q^\star|] \propto \sqrt{A_q / n_q}$, and substituting into the budget-constrained problem yields the cube-root rule $n_q^\star \propto (w_q / c_q)^{2/3} A_q^{1/3}$. The cube-root dependence on $A_q$ is less sensitive to heterogeneity in $\{A_q\}$ than the square-root rule under MSE, which can be desirable when difficulty predictions are noisy. Other loss functions are handled analogously, with the allocation exponent determined by how the expected loss scales with $n_q$: absolute-error-type objectives (e.g., quantile losses) behave like MAE and yield cube-root-type rules, while locally quadratic losses (e.g., Huber) retain the $1/n_q$ scaling and the square-root rule. See Web Appendix~\ref{rmk:alternative_loss}.

\item \textbf{Power analysis for hypothesis testing.} The variance scaling $\Var(\widehat{\theta}_q) = A_q / n_q$ also supports \textsc{PPI++}-based power analysis. To detect an effect of size $\delta_q$ at significance level $\alpha$ with power $1 - \beta$, the standard calculation gives the required sample size $n_q = A_q \, (z_{1-\alpha/2} + z_{1-\beta})^2 / \delta_q^2$, where $z_p$ is the standard-normal $p$-quantile. Because $A_q \le \Var(Y_q)$, the \textsc{PPI++}-based design always requires weakly fewer human labels than the classical human-only power calculation. The framework also accommodates simultaneous multiple-hypothesis testing through standard global error-rate corrections (e.g., Bonferroni FWER, \v{S}id\'ak under independence). See Web Appendix~\ref{rmk:power_analysis}.

\item \textbf{Wave-level (block-level) allocation.} Our main framework treats per-question allocation, which is appropriate when questions can be fielded independently. In many surveys, however, respondents answer multiple questions in a single session (matrix sampling), so responses are correlated within a respondent. A natural extension treats each survey \emph{wave}---a block of questions fielded jointly to a shared respondent pool---as the allocation unit. Stacking the within-wave estimating equations, the wave-level sandwich covariance accounts for the within-respondent correlations, and its scalarization $A_w$ feeds into the same square-root rule, $n_w^\star \propto \sqrt{w_w A_w / c_w}$. Section~\ref{subsec:robustness_eval} validates this extension empirically by treating each Twin-2K-500 task as a wave. See Web Appendix~\ref{rmk:matrix_sampling}.
\squishend

\subsubsection{Direction II: Beyond scalar mean estimands.}
The second direction generalizes the estimand itself, which requires its own asymptotic theory rather than a direct appeal to Proposition~\ref{thm:end_to_end}.

\squishlist
\item \textbf{General $M$-estimation.} The variance scaling $A_q / n_q$ extends well beyond scalar population means to a broad class of $M$-estimators that are central in marketing applications: linear regression coefficients (used in demand and pricing models), categorical response proportions (e.g., choice shares, brand preference), and multinomial-logit partworths (used in conjoint analysis). For these estimands, the asymptotic variance of \textsc{PPI++}\ takes a sandwich form $\bm{\Sigma}_q(\lambda) = \bm{H}_q^{-1} \bm{V}_q^{\Delta(\lambda)} \bm{H}_q^{-\top}$ that retains the $1/n_q$ scaling, so the allocation framework applies after scalarizing this matrix into a scalar difficulty index $A_q$ (e.g., trace for A-optimality, determinant for D-optimality). Web Appendix~\ref{appsec:M-estimation} presents the sandwich derivation, scalarization criteria, and analysis for a broad set of estimands (e.g., OLS regression, multinomial logit, and categorical response), together with the corresponding asymptotic-normality result.
\squishend

\section{Empirical Analysis on Twin-2K-500}
\label{sec:empirics}

We now turn to the empirical evaluation of our framework on the Twin-2K-500 digital-twin dataset \citep{toubia2025database}. 
This dataset is especially well-suited to our analysis for four reasons. First, it provides paired human and LLM responses at the respondent--question level, which is exactly the structure needed to compute question-specific rectification difficulty. Second, it spans a broad range of behavioral-economics tasks and response formats, creating the heterogeneity in LLM reliability that our allocation framework is designed to exploit. Third, the large number of respondents per question allows us to treat the full observed sample as a ground-truth population for controlled Monte Carlo evaluation. Fourth, the multi-wave design, including the Wave~4 retest battery, enables robustness checks on the stability of rectification difficulty across waves.

In Section~\ref{subsec:data_setup}, we describe the data and our setup. In Section~\ref{subsec:ppi_diagnostics}, we examine \textsc{PPI++} and the rectification difficulty $A_q$ of questions in this dataset. In Section~\ref{subsec:meta_learning_results}, we evaluate the meta-learning model's predictive accuracy. In Section~\ref{subsec:performance_eval}, we compare our proposed method with other benchmarks. Finally, in Section~\ref{subsec:robustness_eval}, we present a series of robustness checks.

\subsection{Dataset and setup}
\label{subsec:data_setup}

\paragraph{Data source.}
We use the Twin-2K-500 dataset, a large-scale digital twin benchmark in which over 2{,}000 US respondents each answer 500+ questions spanning demographics, psychological scales, cognitive tasks, economic preferences, and behavioral economics experiments. Each human respondent is paired with a ``digital twin,'' a comprehensive text profile of the respondent, including demographic information. This profile is combined with survey questions to prompt the LLM, and the LLM's responses to each question are recorded.

The dataset contains four survey waves: Waves~1--3 (test) administered the full question battery, and Wave~4 (retest) re-administered a subset of questions to the same respondent panel. Following \citet{toubia2025database}, we focus on the Wave~4 retest battery, which enables a test-retest robustness check, and restrict attention to multiple-choice questions with varying response formats (binary, Likert, and other ordered scales). We exclude product-pricing and open-ended questions. The final sample comprises 68 questions spanning 14 task types. A \emph{task} is a templated question family sharing the same experimental structure but varying the specific item; for example, the ``Nonseparability of Risks and Benefits'' task contains multiple questions that each ask respondents to rate a different technology (bicycles, nuclear power, pesticides, etc.) on the same benefit scale. In total, we have 86{,}448 respondent--question observations, with mean $n_q = 1{,}271$ and range 651--2{,}058.

For cross-question comparability, all responses are rescaled to $[0,1]$ via min--max normalization using each question's scale endpoints, and the population mean $\theta_q^\star = \bE[Y_q]$ on this scale is our estimand.\footnote{Although the questions are multiple-choice, the response options are ordinal (e.g., Likert scales representing degrees of agreement), so rescaling to $[0,1]$ and computing population means is meaningful. This follows the same normalization used in \citet{toubia2025database}.}

\paragraph{LLM predictions and historical/target split.}
To ensure replicability, we use the dataset's pre-computed LLM-imputed responses rather than making new LLM calls; we refer readers to \citet{toubia2025database} for the imputation and prompting details. Following \citet{toubia2025database}, we report LLM accuracy for question $q$ as the mean normalized agreement between human and LLM responses,
\[
\mathrm{acc}_q \;:=\; 1 - \frac{1}{n_q}\sum_{i=1}^{n_q} \bigl|Y_{qi} - Y_{qi}^{\mathrm{LLM}}\bigr|,
\]
on the $[0,1]$-rescaled responses. We use this metric only as a descriptive summary; as discussed in Sections~\ref{ssec:rect_diff} and \ref{subsec:ppi_diagnostics}, it is the rectification difficulty $A_q$, not $\mathrm{acc}_q$, that governs \textsc{PPI++}\ design.

Because the dataset contains a fixed pool of questions, we emulate the historical/target split using task-held-out folds: questions in held-out tasks play the role of $\mathcal{T}$, while the remaining tasks serve as $\mathcal{H}$. We hold out at the task level rather than the question level because questions within the same task are highly similar, so question-level holdout would leak template-specific information and overstate out-of-sample predictive accuracy. Task-level holdout is therefore a more conservative and more appropriate benchmark.



\subsection{PPI++ diagnostics and rectification difficulty}
\label{subsec:ppi_diagnostics}

\paragraph{Full-sample diagnostic quantities.}
In this subsection, we use the full observed population for each question to characterize how useful the LLM is in this dataset. For each question, we form the full-sample plug-ins $\widehat{\lambda}^{\,\mathrm{full}}_q$ and $\widehat{A}^{\,\mathrm{full}}_q$ using the Step~1 formulas of the proposed framework in Section~\ref{subsec:allocation_unknown}, with $\widehat{\lambda}^{\,\mathrm{full}}_q$ clipped to $[0,1]$. We treat them as the best available proxies for $\lambda_q^\star$ and $A_q$ throughout the empirical analysis.

\paragraph{LLM accuracy is not an ideal design index for \textsc{PPI++}.}
LLM predictions are reasonably accurate on average in this dataset---$\overline{\mathrm{acc}}_q = 75.7\%$ across the 68 questions, not far below humans' own test--retest agreement of $81.7\%$. Table~\ref{tab:Aq-by-task} aggregates the question-level diagnostics to the 14 behavioral-economics tasks, sorts rows by $\overline{\widehat{A}^{\,\mathrm{full}}_q}$ (easiest to hardest to rectify), and reports within-task averages together with each task's rank by $\overline{\mathrm{acc}}_q$ (1 = highest accuracy). If accuracy tracked rectification difficulty, the accuracy ranks would run $1, 2, \ldots, 14$ down the rows; instead they are scrambled, and the disagreement is sharpest on the cases that matter for design. Anchoring is the clearest example: it sits at row 10 (quite hard to rectify, $\overline{\widehat{\lambda}^{\,\mathrm{full}}_q} = 0.01$) yet has accuracy rank 4---the LLM gets the population mean about right, so aggregate accuracy looks favorable, but its per-respondent predictions carry almost no information about $Y_q$ after partial rectification, and \textsc{PPI++} provides essentially no variance reduction. Because \textsc{PPI++} precision depends on $\Cov(Y_q, Y_q^{\mathrm{LLM}})$ rather than on $\mathrm{acc}_q$, the right quantity to predict at the design stage is $\widehat{A}^{\,\mathrm{full}}_q$ itself.\footnote{A task-level breakdown of LLM accuracy and rectification difficulty, together with human test--retest reliability comparisons and a simple illustrative example, is provided in Web Appendix~\ref{appsec:data_summary}.}

\begin{table}[ht!]
\centering
\caption{Task-level summary, Twin-2K-500.}
\label{tab:Aq-by-task}
\footnotesize
\begin{tabular}{lcccccc}
\toprule
Task & \#Qs & $\bar{N}$ & $\overline{\widehat{A}^{\,\mathrm{full}}_q}$ & $\overline{\widehat{\lambda}^{\,\mathrm{full}}_q}$ & $\overline{\mathrm{acc}}_q$ & Acc.\ rank \\
\midrule
Outcome Bias & 2 & 1029 & 0.041 & 0.36 & 85.9\% & 1 \\
Conjunction (Linda) & 6 & 1029 & 0.048 & 0.07 & 78.0\% & 5 \\
Nonseparability of Risks and Benefits & 8 & 2058 & 0.051 & 0.21 & 80.5\% & 2 \\
WTA/WTP – Thaler Problem & 3 & 686 & 0.054 & 0.31 & 79.2\% & 3 \\
Framing Problem & 2 & 1029 & 0.059 & 0.01 & 71.9\% & 11 \\
Myside Bias & 2 & 1029 & 0.062 & 0.28 & 77.6\% & 6 \\
False Consensus & 10 & 2058 & 0.074 & 0.45 & 74.2\% & 9 \\
Less is More & 9 & 686 & 0.087 & 0.38 & 75.7\% & 8 \\
Omission Bias & 1 & 2058 & 0.105 & 0.40 & 73.1\% & 10 \\
Anchoring & 4 & 1029 & 0.145 & 0.01 & 78.7\% & 4 \\
Probability Matching vs Maximizing & 16 & 1030 & 0.170 & 0.00 & 76.6\% & 7 \\
Abs vs Rel Savings & 2 & 1029 & 0.203 & 0.03 & 66.4\% & 12 \\
Allais Problem & 2 & 1029 & 0.234 & 0.00 & 50.6\% & 14 \\
Denominator Neglect & 1 & 2058 & 0.235 & 0.00 & 56.8\% & 13 \\
\midrule
Overall & 68 & 1271 & 0.106 & 0.19 & 75.7\% & --- \\
\bottomrule
\end{tabular}

\medskip
\begin{minipage}{\textwidth}
\scriptsize\noindent\textit{Notes.} Within-task averages of full-sample plug-ins $\widehat{A}^{\,\mathrm{full}}_q$, $\widehat{\lambda}^{\,\mathrm{full}}_q$, and the LLM accuracy $\mathrm{acc}_q = 1 - \tfrac{1}{n_q}\sum_i |Y_{qi} - Y_{qi}^{\mathrm{LLM}}|$. Rows are sorted by $\overline{\widehat{A}^{\,\mathrm{full}}_q}$ ascending (easiest to rectify at the top). The final column gives each task's rank by $\overline{\mathrm{acc}}_q$ (1 = highest).
\end{minipage}
\end{table}

\paragraph{Why \textsc{PPI++} helps.}
Standard PPI ($\lambda = 1$) reduces variance relative to the sample mean if and only if $\Cov(Y_q, Y_q^{\mathrm{LLM}}) > \tfrac{1}{2}\Var(Y_q^{\mathrm{LLM}})$ (from the calculation in Section~\ref{sec:ppi_pp}). In our data, this threshold is violated for 58 of 68 questions ($85.3\%$): 37 questions show strictly higher standard-PPI variance than the sample-mean variance and 21 are essentially unchanged, because LLM predictions often exhibit low respondent-level variance and even smaller covariance with human responses. Averaging across the 68 questions, the sample-mean variance is $\tfrac{1}{68}\sum_q \widehat{\Var}(Y_q) = 0.110$, while the standard-PPI variance is $\tfrac{1}{68}\sum_q \widehat{\Var}(Y_q - Y_q^{\mathrm{LLM}}) = 0.123$: naively incorporating LLM predictions \emph{inflates} variance on average. \textsc{PPI++} fixes this by adapting the tuning parameter to each question. Under $\widehat{\lambda}^{\,\mathrm{full}}_q$ the resulting rectification difficulty satisfies $\widehat{A}^{\,\mathrm{full}}_q \le \widehat{\Var}(Y_q)$: setting $\lambda = 0$ already recovers the sample-mean variance, and the variance-minimizing $\widehat{\lambda}^{\,\mathrm{full}}_q$ can only improve on that benchmark. \textsc{PPI++} therefore weakly dominates the sample mean question by question.

\paragraph{Distribution of tuning parameters.}
We first examine the distribution of the adaptive tuning parameter. Across the 68 questions, $\widehat{\lambda}^{\,\mathrm{full}}_q$ averages 0.187 (median: 0.068, range: 0.000--0.716); see Figure~\ref{fig:Aq-distribution}(A). The distribution has a sizable mass at zero: 26 questions have $\widehat{\lambda}^{\,\mathrm{full}}_q = 0$, meaning that the LLM provides no useful respondent-level signal and \textsc{PPI++} reduces to the sample mean, while the remaining 42 questions have $\widehat{\lambda}^{\,\mathrm{full}}_q > 0$. The average variance reduction relative to the sample mean is only 4.1\%, with substantial heterogeneity across tasks (Web Appendix~\ref{appsec:data_summary}). This modest ceiling is driven by the weak respondent-level LLM signal: \textsc{PPI++} alone, under a fixed allocation rule, cannot dramatically outperform the sample mean.

\paragraph{Distribution of rectification difficulty.}
We next examine the distribution of the resulting rectification difficulty. The full-sample \textsc{PPI++} difficulty $\widehat{A}^{\,\mathrm{full}}_q$ ranges from 0.024 (WTA/WTP--Thaler) to 0.239 (Probability Matching), with mean 0.106 and median 0.076; see Figure~\ref{fig:Aq-distribution}(B). This heterogeneity is the fundamental source of gains from optimal allocation: the Neyman-style rule $n_q^\star \propto \sqrt{A_q}$ directs human labels toward high-difficulty questions, whereas uniform allocation spreads the budget equally across both easy and hard questions. Questions that are harder for the LLM also tend to have lower human test--retest reliability, suggesting a shared source of difficulty rooted in question ambiguity (Web Appendix~\ref{appsec:data_summary}, Figure~\ref{fig:test-retest}).

\begin{figure}[ht!]
    \centering
    \includegraphics[width=0.85\linewidth]{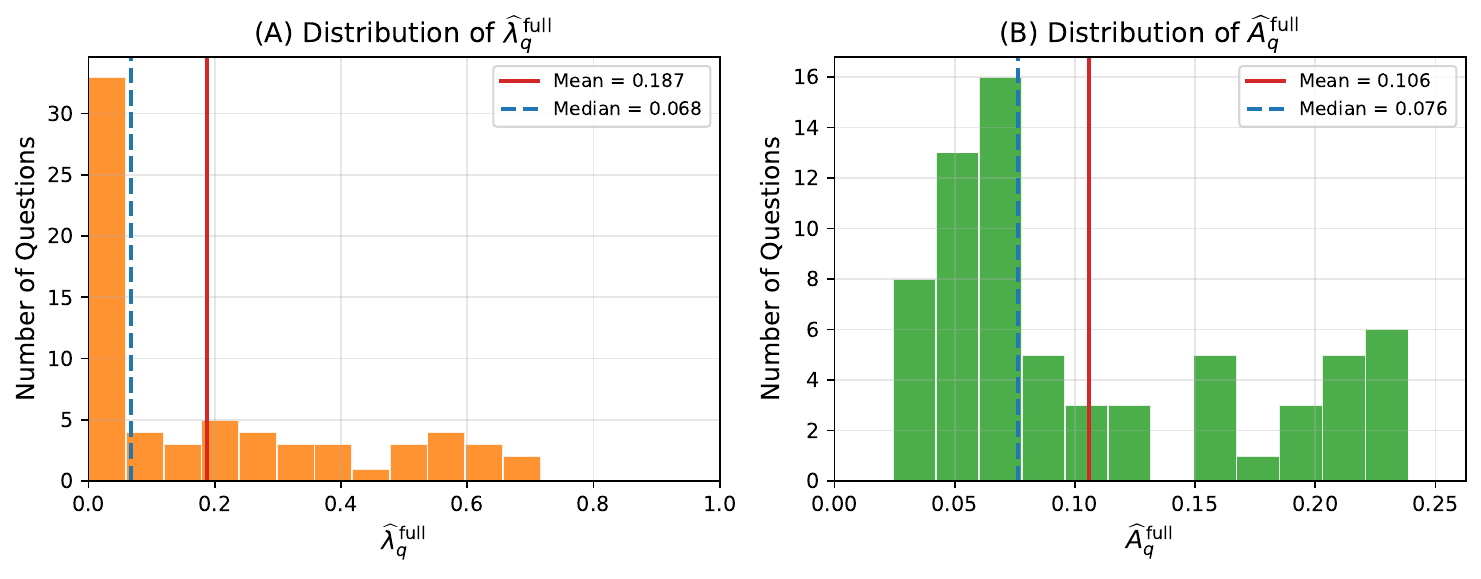}
    \caption{Distribution of full-sample (A)~tuning parameter $\widehat{\lambda}^{\,\mathrm{full}}_q$ and (B)~rectification difficulty $\widehat{A}^{\,\mathrm{full}}_q$ across the 68 Twin-2K-500 questions.}
    \label{fig:Aq-distribution}
\end{figure}

Overall, these diagnostics show substantial heterogeneity in question-level difficulty. The rectification difficulty $\widehat{A}^{\,\mathrm{full}}_q$ identifies which questions need more human data in a way that is not recoverable from accuracy alone. Naively allocating equal human effort across all questions is unlikely to be efficient, which motivates the meta-learning exercise in the next subsection.

\subsection{Meta-learning performance}
\label{subsec:meta_learning_results}

\paragraph{Evaluation protocol.}
We evaluate the meta-learning model out of sample via task-level 5-fold \emph{GroupKFold} cross-validation: the 14 tasks are partitioned into 5 groups (all questions of a task stay in the same group to avoid template leakage), and the model is refit 5 times, each time training on 4 groups and predicting $\tilde{A}_q$ for the held-out group. Pooling the five held-out predictions yields a single out-of-sample prediction vector $\{\tilde{A}_q\}_{q=1}^{68}$ covering every question in the dataset. We repeat this procedure for 20 random task-level splits and report means with $2.5$th/$97.5$th-percentile 95\% CIs across the 20 replications.


\paragraph{Model specification.}
Following Section~\ref{subsec:meta_learning}, we represent each question by embedding the concatenation of the question stem and response options using OpenAI's \texttt{text-embedding-3-large} model (3{,}072 dimensions), reduce dimensionality via PCA, and append simple question-level covariates such as the number of response options and the scale range. The feature set is constructed from the question text alone; respondent-level persona information enters only through the twin LLM responses used to compute the training targets $\widehat{A}^{\,\mathrm{full}}_q$, never through the meta-learner's features. We then fit a regularized linear regression to predict $\log(\widehat{A}^{\,\mathrm{full}}_q)$, selecting the pipeline configuration (PCA variance threshold and covariate set) via cross-validation.

\paragraph{Results.}
Table~\ref{tab:meta-learning} summarizes the cross-validated prediction quality. The text embeddings explain about 56\% of the variance in log-scale rectification difficulty ($R^2 = 0.556$ [0.524, 0.572]), with Spearman $\rho = 0.734$ [0.708, 0.754] and Pearson $r = 0.748$ [0.728, 0.758]. Thus, although the model does not perfectly recover $\widehat{A}^{\,\mathrm{full}}_q$, it captures a substantial share of the cross-question variation in rectification difficulty.

\paragraph{Implication for design.}
These results suggest that rectification difficulty is meaningfully predictable from question characteristics alone, even without any pilot human data for the target survey. This is the key requirement for our design approach: the meta-learning model need not predict $\widehat{A}^{\,\mathrm{full}}_q$ perfectly, but it must recover enough of the cross-question heterogeneity to support informative, non-uniform allocation. The next subsection evaluates whether these predicted difficulties translate into improvements in survey design relative to standard benchmarks.

\begin{table}[ht!]
\centering
\footnotesize
\caption{Out-of-sample correlation between meta-learned predictions $\log \tilde{A}_q$ and the ground-truth full-sample difficulty $\log \widehat{A}^{\,\mathrm{full}}_q$ on Twin-2K-500.}
\label{tab:meta-learning}
\begin{tabular}{lcc}
\toprule
Metric & Mean & 95\% CI \\
\midrule
Spearman $\rho$ & 0.734 & [0.708, 0.754] \\
Pearson $r$ & 0.748 & [0.728, 0.758] \\
RMSE & 0.417 & [0.409, 0.431] \\
$R^2$ & 0.556 & [0.524, 0.572] \\
\bottomrule
\end{tabular}

\medskip
\begin{minipage}{\textwidth}
\scriptsize\noindent\textit{Notes.} Means and 95\% CIs across 20 replications. Within each replication, 5-fold task-level GroupKFold yields out-of-sample $\tilde{A}_q$ for every $q$; correlations and $R^2$ are computed between the pooled $\log \tilde{A}_q$ and $\log \widehat{A}^{\,\mathrm{full}}_q$ across the 68 questions, then averaged across the 20 replications.
\end{minipage}
\end{table}

\subsection{Survey-design performance}
\label{subsec:performance_eval}

We now evaluate whether the predicted rectification difficulties $\tilde{A}_q$ translate into better estimation efficiency in our proposed survey design. Our main empirical finding is that the proposed design-based approach delivers substantially larger gains than \textsc{PPI++} alone under uniform allocation, and recovers most of the improvement attainable under the oracle allocation benchmark.

\paragraph{Benchmark methods and evaluation protocol.}
We compare four methods, each defined by an estimator and an allocation rule; they differ only in how the human-label budget is allocated and whether estimation uses the sample mean or \textsc{PPI++}. In all method labels, tables, and figures below, ``\textsc{PPI}'' denotes the \textsc{PPI++}\ estimator (Section~\ref{sec:ppi_pp}); standard PPI ($\lambda=1$) is never used in the empirical comparison. The four methods are:

\squishlist
\item \textbf{\textsc{SM + Uniform}}: the standard benchmark in survey research, which allocates the budget equally across questions (uniform allocation) and uses the Sample Mean (SM) as the estimator for each question.

\item \textbf{\textsc{PPI + Uniform}}: applies the \textsc{PPI++} estimator to data collected under uniform allocation. Because LLM responses are available at negligible marginal cost, this method improves precision over the sample mean without altering the data-collection protocol. It thus isolates the gain from improved estimation while keeping the data-collection protocol unchanged.

\item \textbf{\textsc{PPI + Opt.\ (Pred.)}}: applies \textsc{PPI++} with allocation based on the meta-learned predicted difficulties $\tilde{A}_q$. This is our proposed method.

\item \textbf{\textsc{PPI + Opt.\ (Oracle)}}: applies \textsc{PPI++} with allocation based on the population-proxy difficulties $\widehat{A}^{\,\mathrm{full}}_q$ computed from the full observed pool. This is infeasible in practice and serves as an upper bound on the attainable gain.
\squishend

For each of the 20 meta-learning replications of Section~\ref{subsec:meta_learning_results}, we take the pooled out-of-sample predictions $\{\tilde{A}_q\}_{q=1}^{68}$ and form each method's per-question allocation across all 68 questions for a given budget $B$, mapping the continuous allocation to integers via largest-remainder rounding. We then draw the allocated number of human responses per question from the observed pool and form the method's estimator. The Monte Carlo draws are repeated 200 times per replication, yielding $20 \times 200 = 4{,}000$ simulated datasets per budget, and the pipeline is run over the grid $B \in \{100, 200, \ldots, 1000, 1200, 1500, 2000\}$. Reported 95\% CIs are $2.5$th/$97.5$th percentiles across the 20 replication-level averages; they quantify pipeline variability, not the statistical uncertainty of a single deployed survey.

We evaluate each method using MSE, RMSE, and MAE over the target questions in $\mathcal{T}$, where
\[
\mathrm{MSE} = \frac{1}{|\mathcal{T}|}\sum_{q \in \mathcal{T}} (\widehat{\theta}_q-\theta_q^\star)^2,
\qquad
\mathrm{RMSE} = \sqrt{\mathrm{MSE}},
\qquad
\mathrm{MAE} = \frac{1}{|\mathcal{T}|}\sum_{q \in \mathcal{T}} \left|\widehat{\theta}_q-\theta_q^\star\right|.
\]

For each metric-method combination, we report the percentage reduction relative to the baseline \textsc{SM + Uniform}. For example, the reduction for a given method for the MSE metric is
\[
r = 1 - \frac{\mathrm{MSE}_{\mathrm{method}}}{\mathrm{MSE}_{\mathrm{SM+Uniform}}}.
\]
Because MSE scales as $1/B$ under continuous allocation, an $r$\% MSE reduction is equivalent to requiring $r$\% fewer human labels to match the precision of the baseline. We also report \emph{gain coverage}, defined as $r/r_{\textsc{Oracle}} \times 100\%$, i.e., the fraction of the oracle improvement recovered by the predicted-allocation method.

Two further columns of Table~\ref{tab:compact-reduction} translate the reductions into budget terms. \emph{Cost savings} is computed from the analytical objective: we find the budget $B_{\mathrm{equiv}}$ at which \textsc{SM + Uniform} would attain the method's weighted MSE at budget $B$, and report $1 - B/B_{\mathrm{equiv}}$, the fraction of the baseline budget that the method renders unnecessary. Because the analytical objective scales exactly as $1/B$, this quantity coincides with the analytical MSE reduction; it can differ modestly from the Monte Carlo reduction, which additionally reflects the integer rounding of allocations at small budgets. \emph{Effective size gain} expresses the Monte Carlo MSE reduction $r$ as an equivalent enlargement of the human sample: a method with reduction $r$ achieves the same MSE as the baseline would with a human-label budget larger by $r/(1-r)$ (e.g., an 11.4\% reduction corresponds to a 12.9\% larger budget).

A useful feature of these percentage reductions is that they are approximately invariant to the total budget $B$. Under any allocation rule, the analytical objective takes the form $J(\bm{A},\bm{n}) = \sum_q w_q A_q / n_q$, and because each $n_q$ is proportional to $B$, we have $J \propto 1/B$ for all methods. As a result, the ratio $J_{\mathrm{method}}/J_{\mathrm{baseline}}$, and hence the percentage reduction, is approximately constant across the budget grid. Absolute MSE still declines with budget (Web Appendix~\ref{appsec:absolute_mse}, Table~\ref{tab:absolute-mse}), but the relative gains are stable.

\paragraph{Results.}
Table~\ref{tab:compact-reduction} summarizes the results. For each method-metric combination, the reported percentage reduction is the average across Monte Carlo simulation runs, and the bracketed values are budget-averaged 95\% confidence intervals computed across replications. Two findings stand out. First, \textsc{PPI++} by itself delivers only modest gains when the budget remains uniformly allocated. \textsc{PPI + Uniform} achieves a 3.6\% reduction in MSE relative to \textsc{SM + Uniform}. This is consistent with the diagnostics in Section~\ref{subsec:ppi_diagnostics}: because the respondent-level LLM signal is weak for many questions, improved estimation alone cannot yield large gains. Second, reallocating human effort using predicted rectification difficulty produces much larger improvements. Our proposed method, \textsc{PPI + Opt.\ (Pred.)}, achieves an 11.4\% reduction in MSE, more than tripling the gain from \textsc{PPI++} under uniform allocation. Its gain coverage is 78.6\%, implying that the meta-learned allocation recovers nearly four-fifths of the improvement attainable under the oracle benchmark.

\begin{table}[ht!]
\centering
\caption{Twin-2K-500: percentage reduction in error metrics vs.\ \textsc{SM + Uniform}.}
\label{tab:compact-reduction}
{\scriptsize
\setlength{\tabcolsep}{2.5pt}
\renewcommand{\arraystretch}{1.0}
\begin{tabular}{lcccccc}
\toprule
Method & MSE Red.\ (\%) & RMSE Red.\ (\%) & MAE Red.\ (\%) & Cost Savings (\%) & \shortstack{Eff.\ Size\\ Gain (\%)} & \shortstack{Gain\\ Coverage (\%)} \\
\midrule
\textsc{PPI + Uniform} & \phantom{0}3.6\% \scriptsize{[\phantom{0}2.7, \phantom{0}4.3]} & \phantom{0}1.8\% \scriptsize{[\phantom{0}1.4, \phantom{0}2.2]} & \phantom{0}2.2\% \scriptsize{[\phantom{0}1.8, \phantom{0}2.6]} & \phantom{0}3.6\% \scriptsize{[\phantom{0}3.6, \phantom{0}3.6]} & \phantom{0}3.7\% & 24.5\% \\
\textsc{PPI + Opt.\ (Pred.)} & 11.4\% \scriptsize{[10.2, 12.2]} & \phantom{0}6.0\% \scriptsize{[\phantom{0}5.3, \phantom{0}6.4]} & \phantom{0}4.4\% \scriptsize{[\phantom{0}3.9, \phantom{0}4.8]} & \phantom{0}9.0\% \scriptsize{[\phantom{0}8.7, \phantom{0}9.1]} & 12.9\% & 78.6\% \\
\textsc{PPI + Opt.\ (Oracle)} & 14.5\% \scriptsize{[13.9, 15.1]} & \phantom{0}7.6\% \scriptsize{[\phantom{0}7.3, \phantom{0}7.9]} & \phantom{0}5.1\% \scriptsize{[\phantom{0}4.7, \phantom{0}5.4]} & 12.3\% \scriptsize{[12.3, 12.3]} & 17.0\% & 100.0\% \\
\bottomrule
\end{tabular}
}

\medskip
\begin{minipage}{\textwidth}
\scriptsize\noindent\textit{Notes.} Point estimates are Monte Carlo means. Bracketed intervals are 95\% split-replication ranges (2.5th/97.5th percentiles across the 20 meta-learning replications, i.e., pipeline variability rather than single-survey sampling uncertainty), averaged across the 13 budgets $B \in \{100, 200, \ldots, 2{,}000\}$. Gain coverage is $r / r_{\textsc{Oracle}} \times 100\%$. Because MSE scales as $1/B$ under continuous allocation, an $r\%$ MSE reduction is equivalent to $r\%$ fewer human labels to match baseline precision. Cost savings is the analytical budget reduction at which \textsc{SM + Uniform} matches the method's MSE; effective size gain, $r/(1-r)$, is the equivalent increase in human-label budget implied by an MSE reduction of $r$ (Section~\ref{subsec:performance_eval}).
\end{minipage}
\end{table}

\paragraph{Where do the gains come from?}
The total 11.4\% MSE reduction of \textsc{PPI + Opt.\ (Pred.)} relative to \textsc{SM + Uniform} can be decomposed into two components. The \emph{estimator effect}---switching from the sample mean to \textsc{PPI++} while holding allocation fixed---yields a 3.6\% MSE reduction. This gain is essentially free, because it uses the same human labels together with LLM responses that are already available at negligible marginal cost. The \emph{allocation effect}---optimizing allocation using the predicted difficulties while holding the estimator fixed---adds a further 7.8 percentage points of MSE reduction, increasing the total gain from 3.6\% to 11.4\%. Thus, the dominant source of improvement is not the \textsc{PPI++} estimator by itself, but the reallocation of human effort toward high-difficulty questions. We also benchmark against the classical Neyman design that allocates by the human-response variance $\Var(Y_q)$ without using any LLM information; our framework outperforms this benchmark as well, with the full comparison reported in Web Appendix~\ref{appsec:sm_opt_decomposition}.

\paragraph{How does the allocation change?}
Figure~\ref{fig:allocation-scatter} visualizes the predicted allocation at $B=2000$. The figure reports the \emph{continuous} allocation implied by the square-root rule before integer rounding, averaged over the 20 meta-learning replications, and plots allocated sample sizes $\tilde{n}_q^\star$ against the full-sample rectification difficulties $\widehat{A}^{\,\mathrm{full}}_q$, colored by task. Under uniform allocation, each question receives $B/|\mathcal{T}| \approx 29.4$ labels. The meta-learned allocation instead ranges from $22.2$ for the easiest questions to $35.3$ for the hardest, showing that the predicted difficulties induce a meaningful reallocation of human effort toward harder questions. The compressed range relative to the oracle (which spans roughly $14.8$ to $46.3$ labels) reflects the regularization of the meta-learner, but the Spearman rank correlation with the oracle allocation is $\rho = 0.714$, and by Proposition~\ref{prop:robust_allocation} the efficiency loss is controlled by log-ratio misspecification, consistent with the $78.6\%$ gain coverage reported in Table~\ref{tab:compact-reduction}.

\begin{figure}[ht!]
    \centering
    \includegraphics[width=\textwidth]{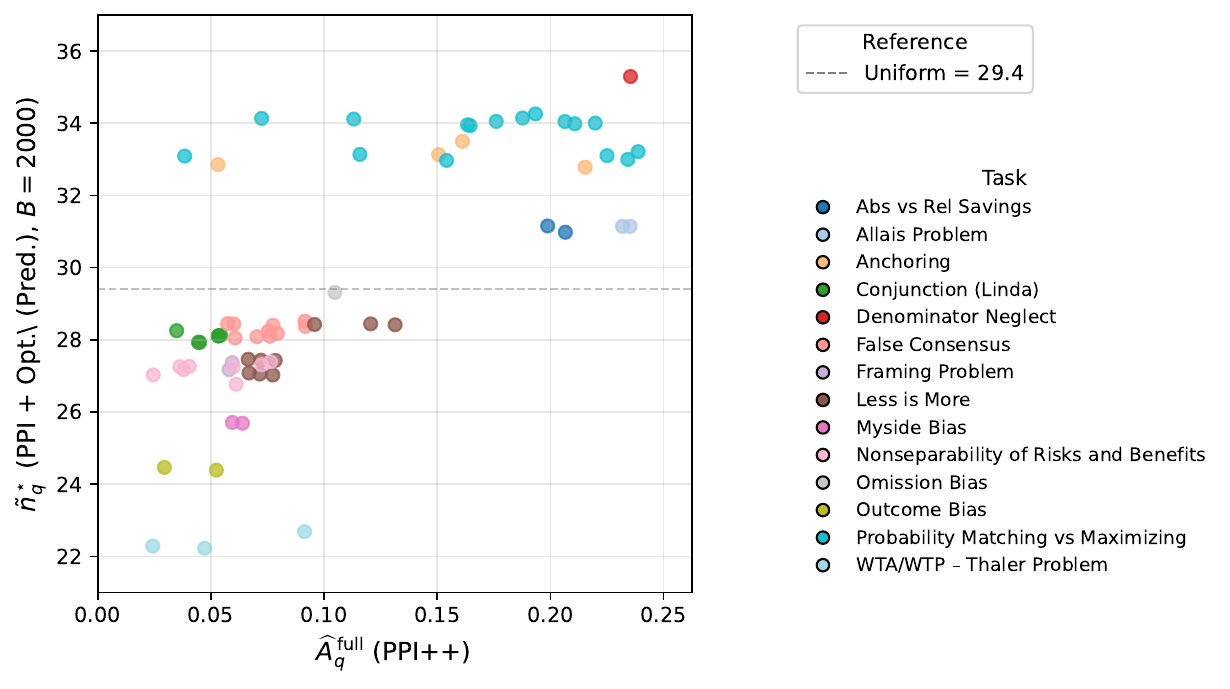}
    \caption{Per-question predicted allocation at $B = 2000$ vs.\ full-sample rectification difficulty $\widehat{A}^{\,\mathrm{full}}_q$, color-coded by task. Dashed line is the uniform allocation $B/|\mathcal{T}| \approx 29.4$.}
    \label{fig:allocation-scatter}
\end{figure}

\subsection{Robustness checks}
\label{subsec:robustness_eval}

We assess robustness along five dimensions: the cross-wave stability of rectification difficulty within Twin-2K-500, cross-dataset transfer of the meta-learner to a genuinely new survey environment, the value of zero-shot meta-learning relative to a target-survey pilot, aggregation from question-level to task-level allocation, and the sensitivity of gains to greater heterogeneity in question difficulty. The first three stress the inputs of the framework (where the difficulty estimates come from); the last two stress the design environment in which the allocation operates.

\squishlist

\item \textbf{Cross-wave stability of $A_q$.}
We compute the full-sample rectification difficulties from Waves~1--3 and use them for allocation while evaluating performance on Wave~4. The cross-wave and within-wave difficulties correlate at $\rho = 0.99$, and the resulting allocation yields a 14.1\% MSE reduction, virtually identical to the 14.5\% of the within-wave oracle: rectification difficulty is a stable property of questions, so practitioners can estimate it reliably from prior waves or other historical data.

\item \textbf{Cross-dataset transfer.}
To assess whether the meta-learning generalizes to a completely different dataset, we train the meta-learner on a political survey dataset CCES (see Section~\ref{sec:cces}), and predict rectification difficulty $A_q$ on Twin-2K-500. The cross-dataset prediction achieves Spearman $\rho = 0.52$ and yields a 9.6\% MSE reduction, 68\% of the oracle, indicating that which questions are hard to rectify is driven largely by question characteristics that generalize across survey contexts. Details can be found in Web Appendix~\ref{appsec:cross_dataset_transfer}.

\item \textbf{Zero-shot meta-learning vs.\ a pilot baseline.}
A natural alternative to estimating $A_q$ is to run a pilot on the target survey: spend part of the budget collecting a few responses per question, estimate $A_q$ directly, and allocate the remainder by the square-root rule (the pilot responses are reused in the final estimate). In the same Monte Carlo pipeline, the pilot's benefit grows with its size but stays below meta-learning. At $B=2000$, a $10$-response pilot (a third of the budget) raises MSE by $4.6\%$ rather than reducing it (it does worse than \textsc{SM + Uniform} because of noisy estimation of $A_q$), while $15$- and $20$-response pilots (half and two-thirds of the budget) reduce MSE by $6.6\%$ and $8.7\%$, all short of the $11.4\%$ reduction from zero-shot meta-learning. A useful pilot must therefore spend most of the budget. A pilot is also infeasible at smaller budgets (it requires $Q\,n_{\text{pilot}} < B$, e.g.\ $B>680$ for $10$ responses per question). Zero-shot meta-learning instead spends none of the budget on a pilot, applies at any budget, and recovers $78.6\%$ of the oracle gain.

\item \textbf{Task-level allocation.}
Many surveys allocate respondents at the module or task level, with each respondent completing all questions within an assigned block (matrix sampling; \citealt{raghunathan1995split, adiguzel2008split}; cf.\ the wave-level extension in Section~\ref{subsec:extensions}). We aggregate the 68 questions into $T=14$ tasks. Because a respondent assigned to task $t$ answers all $|t|$ of its questions, the difficulty driving the allocation is the weighted \emph{sum} $\sum_{q\in t} w_q A_q$ (equivalently $w_t A_t$, where $A_t$ is the within-task average and $w_t \propto |t|$), and we set $n_t \propto \sqrt{w_t A_t}$. Aggregation strengthens prediction: question-level errors partially cancel within tasks, so the Spearman correlation between predicted and true difficulty rises from $0.73$ to $0.89$, and the meta-learned allocation recovers $95.4\%$ of the task-level oracle gain (a consequence of greater between-task than within-task heterogeneity in difficulty). Under a respondent budget (matrix sampling, $B$ respondents each answering their whole task), \textsc{PPI + Opt.\ (Pred.)} attains a $24.6\%$ MSE reduction at $B=2000$; because each respondent then contributes $|t|\approx 5$ responses, this is not directly comparable to the $11.4\%$ obtained at the question level under a per-response budget (Table~\ref{tab:compact-reduction}), and at a fixed total number of responses the two allocation gains are comparable. The framework thus applies, with no loss of efficiency, at the operationally natural module granularity at which surveys are typically fielded.

\item \textbf{Sensitivity to heterogeneity in question difficulty.}
We amplify the dispersion of $A_q$ via a mean-preserving log-scale transform, $\log A_q^{(\alpha)} = \bar{\mu} + \alpha(\log A_q - \bar{\mu})$, where $\alpha \ge 1$ controls the spread. At $B=2000$, the MC MSE reduction for \textsc{PPI + Opt.\ (Pred.)} grows from 10.6\% at $\alpha = 1$ (baseline) to 20.9\% at $\alpha = 2$ and 34.1\% at $\alpha = 5$, while capturing roughly 72--81\% of the oracle gains even though the predictions $\tilde{A}_q$ are held fixed at their $\alpha = 1$ values (see Web Appendix~\ref{appsec:robustness_table}, Table~\ref{tab:robustness} and Figure~\ref{fig:robustness}). Surveys with greater heterogeneity in LLM reliability, such as those mixing factual items with subjective judgments, therefore benefit substantially more from optimal allocation.

\squishend

\paragraph{Discussion and Summary.}


Taken together, the Twin-2K-500 results yield three main conclusions. First, rectification difficulty varies substantially across questions and is not captured by raw LLM accuracy. Second, this variation is sufficiently predictable from question features to support sample allocation: our proposed method reduces MSE by 11.4\% relative to \textsc{SM + Uniform} and recovers 78.6\% of the oracle gain, with most of the improvement coming from reallocating human labels rather than from \textsc{PPI++} alone. Third, the gains are robust to using difficulty estimates from prior waves, transferring the meta-learner across datasets, and allocating at the task level, and they increase as heterogeneity in question difficulty grows. Section~\ref{sec:cces} next examines whether these findings generalize to a different survey domain and a different response-generating LLM.

\section{Empirical Analysis on CCES 2024}
\label{sec:cces}

In this section, we assess whether our framework generalizes beyond the behavioral-economics setting of Twin-2K-500 by applying it to the 2024 Cooperative Election Study \citep[CCES;][]{cces2024}, a large-scale political survey. Relative to Twin-2K-500, the CCES setting differs in three important ways. First, the domain is political rather than behavioral-economic. Second, question difficulty is substantially more heterogeneous, ranging from routine factual items (e.g., voter registration status) to highly polarized policy preferences (e.g., Ukraine and Gaza). Third, the LLM used to generate the surrogate survey responses is Google's Gemini~2.5~Flash rather than GPT-4o, while the meta-learning step continues to use OpenAI embeddings to construct question features.

We preview two main findings from this analysis. First, question difficulty is more heterogeneous than in Twin-2K-500, so the headroom for optimal allocation is larger: the oracle gain is 17.1\% versus 14.5\%. Second, despite a weaker LLM signal and noisier difficulty predictions, our method still delivers a comparable realized gain (10.5\% vs.\ 11.4\%), and the gain flows almost entirely through the allocation channel rather than the estimator (Section~\ref{subsec:cces_performance_eval}): a low-signal environment is precisely where careful survey design creates the value. Because the cross-dataset robustness check in Section~\ref{subsec:robustness_eval} already uses CCES, we do not repeat a separate robustness subsection here.

\subsection{Dataset and setup}
\label{subsec:cces_data_setup}

\paragraph{Data source.}
We use the CCES 2024 \citep{cces2024}, a nationally representative survey administered to over 60{,}000 US adults. We focus on questions amenable to our framework: 133 subquestions nested within 40 parent questions, with approximately 5{,}000 respondents per question. The 40 parent questions span five question types: multiple-choice (17), binary (4), grid-agree (4), grid-support (9), and multi-item (6). Topics range from economic perceptions and party identification to foreign policy (Ukraine, Gaza), gun regulation, immigration, abortion, the environment, race and gender attitudes, and vote choice. As in Twin-2K-500, all responses are rescaled to $[0,1]$ via min--max normalization using each question's scale endpoints.

\paragraph{LLM predictions and ground-truth population.}
For each respondent--question pair, we generate LLM predictions using Google's Gemini~2.5~Flash with persona-conditioned prompting. Each respondent's demographic profile (age, gender, race, education, income, state, and party registration) is encoded as text, following the same prompt template as in Twin-2K-500. The full CCES sample serves as the ground-truth population, from which we compute the full-sample plug-ins $\widehat{\lambda}^{\,\mathrm{full}}_q$ and $\widehat{A}^{\,\mathrm{full}}_q$ exactly as in Section~\ref{subsec:ppi_diagnostics}.

\paragraph{Question weights.}
Unlike Twin-2K-500, the CCES contains parent questions with varying numbers of sub-items. To prevent parent questions with many subquestions from dominating the overall objective, we assign equal total importance to each parent question. Specifically, for subquestion $q$, we set
\[
w_q = \frac{1}{Q_{\mathrm{parents}} \times n_{\mathrm{subs},q}},
\]
where $Q_{\mathrm{parents}}$ is the number of parent questions and $n_{\mathrm{subs},q}$ is the number of subquestions under parent question $q$. Thus, a parent question with many sub-items receives the same aggregate weight as a parent question with only one sub-item. These weights enter both the design objective and the evaluation: all reported metrics (MSE, RMSE, MAE, and cost savings) aggregate per-question errors with the same weights, e.g., $\mathrm{MSE} = \sum_{q} w_q (\widehat{\theta}_q - \theta_q^\star)^2$, which reduces to the unweighted average of Section~\ref{subsec:performance_eval} when $w_q = 1/Q$.

\paragraph{Simulation protocol.}
The simulation protocol follows Section~\ref{subsec:performance_eval} with three changes: GroupKFold uses parent questions as groups, so that all subquestions from the same parent are held out together; results are reported at a single budget, $B = 5{,}000$ (about 38 responses per subquestion under uniform allocation), since percentage reductions are approximately budget-invariant; and 95\% CIs are computed across 50 replications of the pipeline, each with a fresh GroupKFold split and 2{,}000 Monte Carlo draws.

\subsection{PPI++ diagnostics and rectification difficulty}
\label{subsec:cces_ppi_diagnostics}

LLM predictions achieve a mean normalized agreement of 61.1\% across the 133 CCES questions, substantially below the 75.7\% observed in Twin-2K-500. The \textsc{PPI++}\ dominance property $\widehat{A}^{\,\mathrm{full}}_q \le \widehat{\Var}(Y_q)$ continues to hold question by question, so \textsc{PPI++}\ estimator weakly dominates the sample mean.

\paragraph{Illustrative examples.}
The resulting heterogeneity in LLM usefulness is substantial. On factual items such as ``Are you registered to vote?'', the LLM achieves 95.0\% accuracy and $\widehat{\lambda}^{\,\mathrm{full}}_q = 0.49$, reflecting genuine individual-level predictive power. In contrast, on politically sensitive topics the LLM tends to produce near-constant predictions across respondents. For instance, on Gaza conflict sub-items (``What should the U.S.\ do?'') and the 2024 presidential vote choice, the LLM outputs are nearly identical for every respondent regardless of demographics, yielding $\widehat{\Var}(Y_q^{\mathrm{LLM}}) \approx 0$, $\widehat{\lambda}^{\,\mathrm{full}}_q = 0$, and no variance reduction. \textsc{PPI++}\ automatically detects these cases and falls back to the sample mean.

\paragraph{Distribution of tuning parameters.}
The distribution of $\widehat{\lambda}^{\,\mathrm{full}}_q$ (Figure~\ref{fig:cces-Aq-distribution}A) averages 0.132 (vs.\ 0.187 in Twin-2K-500), with median 0.087. Forty of 133 questions (30\%) have $\widehat{\lambda}^{\,\mathrm{full}}_q = 0$---the LLM provides no useful individual-level signal on these items, and \textsc{PPI++}\ reduces to the sample mean. The zero-$\lambda$ questions concentrate on politically polarized topics: Ukraine policy items (5 of 8 sub-items), Gaza conflict items (7 of 9), and the 2024 presidential vote choice. One possible explanation is that safety-alignment practices in frontier LLMs may discourage the model from expressing partisan preferences \citep{bakker2022fine, ouyang2022instruct}. 

\paragraph{Distribution of rectification difficulty.}
The resulting $\widehat{A}^{\,\mathrm{full}}_q$ distribution (Figure~\ref{fig:cces-Aq-distribution}B) has a higher mean than in Twin-2K-500 ($0.146$ vs.\ $0.106$) and is more dispersed on the log scale that governs the multiplicative allocation rule: the standard deviation of $\log\widehat{A}^{\,\mathrm{full}}_q$ is $0.76$, versus $0.63$ in Twin-2K-500. (On the raw scale, the coefficient of variation is similar across the two datasets, $0.53$ vs.\ $0.63$; it is the log-scale spread, which the square-root rule and the meta-learner operate on, that drives the larger oracle gain.) Questions with $\widehat{\lambda}^{\,\mathrm{full}}_q > 0$ exhibit reduced $\widehat{A}^{\,\mathrm{full}}_q$ (the LLM absorbs some variance), while the 30\% of questions with $\widehat{\lambda}^{\,\mathrm{full}}_q = 0$ retain their full population variance $\widehat{\Var}(Y_q)$.

\begin{figure}[ht!]
    \centering
    \includegraphics[width=0.85\linewidth]{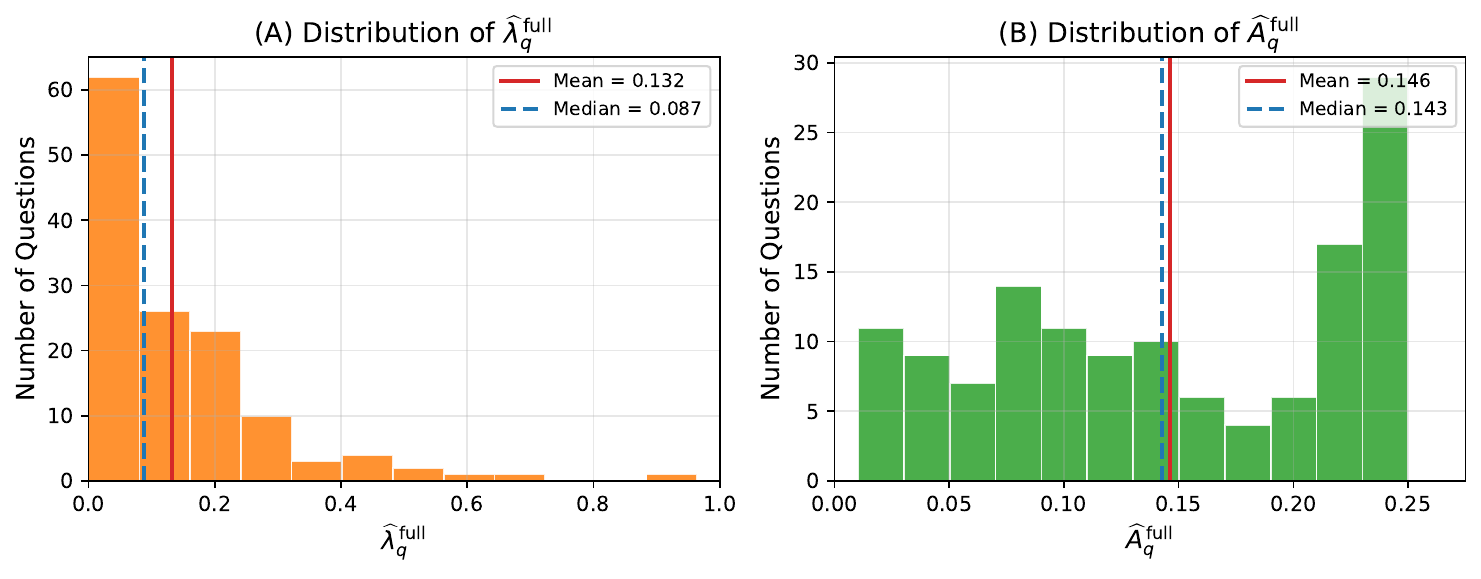}
    \caption{Distribution of full-sample (A)~tuning parameter $\widehat{\lambda}^{\,\mathrm{full}}_q$ and (B)~rectification difficulty $\widehat{A}^{\,\mathrm{full}}_q$ across the 133 CCES 2024 subquestions.}
    \label{fig:cces-Aq-distribution}
\end{figure}

\subsection{Meta-learning performance}
\label{subsec:cces_meta_learning}


We apply the same meta-learning pipeline as in Section~\ref{subsec:meta_learning_results}, using 5-fold \emph{GroupKFold} with parent questions as groups and repeating the entire cross-validation procedure 50 times with fresh random fold assignments, as described in Section~\ref{subsec:cces_data_setup}. Table~\ref{tab:cces-meta-learning} summarizes the cross-validated prediction quality. The text embeddings explain about 17\% of the variance in log-scale rectification difficulty ($R^2 = 0.165$ [0.113, 0.220]), with Spearman $\rho = 0.476$ [0.422, 0.542] and Pearson $r = 0.413$ [0.357, 0.470]. These are weaker than Twin-2K-500 ($\rho = 0.734$, $R^2 = 0.556$). The gap reflects the greater topical diversity of the CCES (40 heterogeneous parent questions versus 14 behavioral-economics tasks) and greater variation in response format (2 to 41 levels), which make it harder for text embeddings alone to capture difficulty structure. Nevertheless, the allocation framework requires only approximate rankings of $\sqrt{A_q}$, and Proposition~\ref{prop:robust_allocation} bounds the efficiency loss from moderate prediction errors.

\begin{table}[ht!]
\centering
\footnotesize
\caption{Out-of-sample correlation between meta-learned predictions $\log \tilde{A}_q$ and the ground-truth full-sample difficulty $\log \widehat{A}^{\,\mathrm{full}}_q$ on CCES 2024.}
\label{tab:cces-meta-learning}
\begin{tabular}{lcc}
\toprule
Metric & Mean & 95\% CI \\
\midrule
Spearman $\rho$ & 0.476 & [0.422, 0.542] \\
Pearson $r$ & 0.413 & [0.357, 0.470] \\
RMSE & 0.694 & [0.670, 0.715] \\
$R^2$ & 0.165 & [0.113, 0.220] \\
\bottomrule
\end{tabular}

\medskip
\begin{minipage}{\textwidth}
\scriptsize\noindent\textit{Notes.} Means and 95\% CIs across 50 replications. Within each replication, 5-fold parent-question-level GroupKFold yields out-of-sample $\tilde{A}_q$ for every $q$; correlations and $R^2$ are computed between the pooled $\log \tilde{A}_q$ and $\log \widehat{A}^{\,\mathrm{full}}_q$ across the 133 subquestions, then averaged across the 50 replications.
\end{minipage}
\end{table}

\subsection{Survey-design performance}
\label{subsec:cces_performance_eval}

\paragraph{Results.}
Table~\ref{tab:cces-compact-reduction} reports the same methods and metrics as Table~\ref{tab:compact-reduction}, computed with the parent-level weights, as described in Section~\ref{subsec:cces_data_setup}, at the single budget $B = 5{,}000$.
\textsc{PPI + Uniform} achieves a modest 1.5\% MSE reduction [0.7, 2.2], consistent with the weak LLM signal across most CCES questions.

In sharp contrast, \textsc{PPI + Opt.\ (Oracle)} achieves a 17.1\% MSE reduction [16.3, 17.9], \emph{larger} than the 14.5\% in Twin-2K-500. This larger oracle gain reflects the greater heterogeneity in $A_q$ across CCES questions: high-difficulty policy items coexist with easy factual items, and the optimal allocation concentrates labels where they matter most. This is exactly the regime predicted by the robustness analysis in Section~\ref{subsec:robustness_eval}, where amplifying $A_q$ dispersion ($\alpha > 1$) leads to larger allocation gains.

Our proposed method, \textsc{PPI + Opt.\ (Pred.)}, delivers a 10.5\% MSE reduction [9.5, 11.4], capturing 61.3\% of the oracle gains. This is comparable to the 11.4\% MSE reduction in Twin-2K-500, even though the meta-learning prediction is substantially weaker: the larger oracle headroom in the CCES compensates for the lower gain coverage (61.3\% vs.\ 78.6\%). Equivalently, our framework matches traditional survey precision with roughly 10\% fewer human labels.

\begin{table}[ht!]
\centering
\caption{CCES 2024: percentage reduction in error metrics vs.\ \textsc{SM + Uniform}.}
\label{tab:cces-compact-reduction}
{\scriptsize
\setlength{\tabcolsep}{2.5pt}
\renewcommand{\arraystretch}{1.0}
\begin{tabular}{lcccccc}
\toprule
Method & MSE Red.\ (\%) & RMSE Red.\ (\%) & MAE Red.\ (\%) & Cost Savings (\%) & \shortstack{Eff.\ Size\\ Gain (\%)} & \shortstack{Gain\\ Coverage (\%)} \\
\midrule
\textsc{PPI + Uniform} & \phantom{0}1.5\% \scriptsize{[\phantom{0}0.7, \phantom{0}2.2]} & \phantom{0}0.8\% \scriptsize{[\phantom{0}0.3, \phantom{0}1.1]} & \phantom{0}0.7\% \scriptsize{[\phantom{0}0.3, \phantom{0}1.1]} & \phantom{0}1.6\% \scriptsize{[\phantom{0}1.6, \phantom{0}1.6]} & \phantom{0}1.5\% & \phantom{0}8.8\% \\
\textsc{PPI + Opt.\ (Pred.)} & 10.5\% \scriptsize{[\phantom{0}9.5, 11.4]} & \phantom{0}5.4\% \scriptsize{[\phantom{0}4.8, \phantom{0}5.9]} & \phantom{0}7.7\% \scriptsize{[\phantom{0}7.2, \phantom{0}8.3]} & 11.5\% \scriptsize{[10.7, 12.2]} & 11.7\% & 61.3\% \\
\textsc{PPI + Opt.\ (Oracle)} & 17.1\% \scriptsize{[16.3, 17.9]} & \phantom{0}9.0\% \scriptsize{[\phantom{0}8.5, \phantom{0}9.4]} & \phantom{0}9.1\% \scriptsize{[\phantom{0}8.6, \phantom{0}9.5]} & 18.1\% \scriptsize{[18.1, 18.1]} & 20.7\% & 100.0\% \\
\bottomrule
\end{tabular}
}

\medskip
\begin{minipage}{\textwidth}
\scriptsize\noindent\textit{Notes.} Point estimates are Monte Carlo means. Bracketed intervals are 95\% split-replication ranges (2.5th/97.5th percentiles across 50 meta-learning replications, i.e., pipeline variability rather than single-survey sampling uncertainty) at $B = 5{,}000$ (with 2{,}000 MC draws each). Gain coverage is $r / r_{\textsc{Oracle}} \times 100\%$. Because MSE scales as $1/B$ under continuous allocation, an $r\%$ MSE reduction is equivalent to $r\%$ fewer human labels to match baseline precision. Cost savings is the analytical budget reduction at which \textsc{SM + Uniform} matches the method's MSE; effective size gain, $r/(1-r)$, is the equivalent increase in human-label budget implied by an MSE reduction of $r$.
\end{minipage}
\end{table}


\paragraph{Where do the gains come from?}
The total 10.5\% MSE reduction of \textsc{PPI + Opt.\ (Pred.)} relative to \textsc{SM + Uniform} can be decomposed into two channels. The \emph{estimator effect}---switching from the sample mean to \textsc{PPI++}\ while holding allocation fixed---yields a modest 1.5\% MSE reduction. The \emph{allocation effect}---optimizing allocation while holding the estimator fixed---adds 9.0 percentage points (from 1.5\% to 10.5\%), accounting for 86\% of the total gain.

This decomposition is particularly instructive when compared to Twin-2K-500, where the estimator effect was 3.6\% and the allocation effect was 7.8 percentage points. In the CCES, allocation accounts for an even larger share of the total gain because the estimator effect is weaker. This contrast between the two datasets reinforces the paper's central message: the value of our framework lies primarily in optimal allocation, not in \textsc{PPI++}\ alone.

\paragraph{How does the allocation change?}
To illustrate, consider the allocations at $B = 5{,}000$. Under uniform allocation, each subquestion receives $B/Q \approx 37.6$ labels. The oracle allocation assigns as few as 6.3 labels to the easiest sub-items and up to 105.4 to the hardest. The meta-learned allocation exhibits the same qualitative pattern, ranging from 21.8 to 80.0 labels, with a Spearman rank correlation of $\rho = 0.839$ against the oracle allocation, confirming that even imperfect meta-learning produces a reasonable ranking of subquestion difficulty. This allocation rank correlation ($0.839$) is higher than the difficulty-prediction correlation ($\rho = 0.476$; Table~\ref{tab:cces-meta-learning}) because the predicted and oracle allocations are both proportional to $\sqrt{w_q A_q}$ and therefore share the same parent-level weights $w_q$, whose dispersion dominates the allocation ranking; the allocation agreement thus overstates the accuracy of the underlying difficulty predictions. Figure~\ref{fig:cces-allocation-scatter} visualizes this allocation.

The allocation pattern is substantively intuitive: factual and near-consensus items receive the fewest labels because the LLM already provides a reliable signal there, while contested policy attitudes receive the most; the framework automatically directs human effort toward the questions where it is most needed.

Taken together, the CCES validation demonstrates that our framework generalizes across domains, datasets, and LLMs. The weak LLM signal on political surveys makes the case for optimal allocation \emph{stronger}: because \textsc{PPI++} alone delivers only modest gains, the only path to meaningful improvement is through smarter allocation of human effort.

\begin{figure}[ht!]
    \centering
    \includegraphics[width=0.8\textwidth]{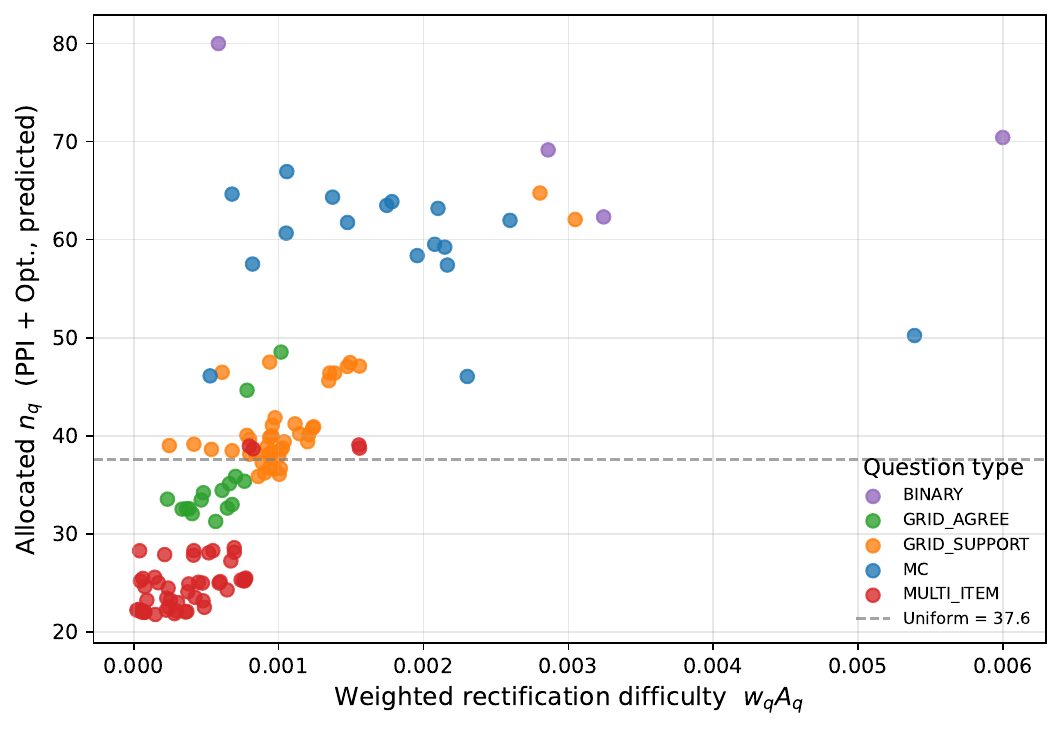}
    \caption{Per-subquestion predicted allocation at $B = 5{,}000$ vs.\ the \emph{weighted} rectification difficulty $w_q\widehat{A}^{\,\mathrm{full}}_q$ for CCES 2024, color-coded by question type; the dashed line is the uniform allocation $B/Q \approx 37.6$. Because CCES uses non-uniform parent-level importance weights $w_q$, the allocation tracks $w_q\widehat{A}^{\,\mathrm{full}}_q$ rather than $\widehat{A}^{\,\mathrm{full}}_q$ alone; for Twin-2K-500 (Figure~\ref{fig:allocation-scatter}) $w_q$ is constant, so this reduces to the difficulty axis there.}
    \label{fig:cces-allocation-scatter}
\end{figure}

\section{Conclusion}
\label{sec:conclusion}

We develop an end-to-end framework for designing LLM-augmented surveys when human responses are costly and LLM predictions are cheap but unevenly informative. The central design primitive is rectification difficulty \(A_q\), the residual uncertainty that remains after the LLM prediction is incorporated optimally through \textsc{PPI++}. This quantity links estimation to design: because the estimator's variance scales as \(A_q/n_q\), the optimal allocation follows a Neyman-type square-root rule, \(n_q^\star \propto \sqrt{w_qA_q/c_q}\). To make this rule operational before a new survey is fielded, we use historical paired human--LLM data and question embeddings to predict rectification difficulty for unseen questions, enabling zero-shot allocation without a target-survey pilot.

The framework is designed to preserve a clear separation between survey design and statistical inference. The meta-learner determines where human responses are collected, but the final \textsc{PPI++} estimates and standard errors are constructed from the newly collected target-survey data. Consequently, errors in predicted difficulty may reduce efficiency and widen confidence intervals, but they do not compromise asymptotic validity under the stated regularity conditions. We also show that the associated efficiency loss is bounded and depends on relative prediction errors across questions.

Our empirical results show that the primary value of this framework comes from allocating human responses more effectively, rather than from changing the estimator alone. Across the Twin-2K-500 behavioral-economics survey and the CCES 2024 political survey, \textsc{PPI++} under uniform allocation yields only modest MSE reductions of 1.5--3.6\%. Combining \textsc{PPI++} with the meta-learned allocation increases these reductions to 10.5--11.4\%, recovering 61--79\% of the gains attainable under infeasible oracle allocation. The gains are largest when rectification difficulty is heterogeneous across questions, precisely the settings in which uniform allocation is most wasteful. The analysis also shows why raw LLM accuracy is an inadequate design metric: an LLM may appear accurate in aggregate while contributing little respondent-level signal for reducing estimation variance.

For practitioners, the framework provides a concrete workflow: estimate rectification difficulty from historical paired data, predict it for the questions in a new survey, allocate the human-response budget using the square-root rule, and then combine the collected human responses with LLM predictions through \textsc{PPI++}. The same logic extends beyond question means to regression coefficients, categorical response probabilities, and conjoint partworths, and can be applied at the level of questions, modules, or survey waves.

\singlespacing
\putbib[mybib]
\end{bibunit}

\clearpage
\pagenumbering{roman}
\singlespacing

\begin{bibunit}
\begin{appendices}

\makeatletter
\renewcommand{\theHsection}{app.\thesection}
\renewcommand{\theHsubsection}{app.\thesubsection}
\setcounter{equation}{0}
\renewcommand{\theequation}{\thesection.\arabic{equation}}
\renewcommand{\theHequation}{app.\theequation}
\setcounter{figure}{0}
\renewcommand{\thefigure}{\thesection.\arabic{figure}}
\renewcommand{\theHfigure}{app.\thefigure}
\setcounter{table}{0}
\renewcommand{\thetable}{\thesection.\arabic{table}}
\renewcommand{\theHtable}{app.\thetable}
\setcounter{proposition}{0}
\renewcommand{\theproposition}{\thesection.\arabic{proposition}}
\renewcommand{\theHproposition}{app.\theproposition}
\setcounter{example}{0}
\renewcommand{\theexample}{\thesection.\arabic{example}}
\renewcommand{\theHexample}{app.\theexample}
\makeatother

\section{Design-Problem Extensions}
\label{app:extensions}

This appendix collects the four design-problem extensions from Direction~I of Section~\ref{subsec:extensions}: cost-minimizing dual, alternative loss functions, power analysis, and wave-level (matrix-sampling) allocation. All retain the \textsc{PPI++}\ mean estimator, so the inference guarantees of Proposition~\ref{thm:end_to_end} carry over unchanged. Throughout, $A_q$ denotes the population rectification difficulty defined in Section~\ref{ssec:rect_diff}, $\tilde{A}_q$ a design-stage prediction (e.g., from meta-learning), $w_q \ge 0$ an importance weight, $c_q > 0$ a per-response cost, $B$ the total human-data budget, and $\mathcal{T}$ the target survey.

\subsection{Cost-minimizing dual formulation and the single-question case}
\label{rmk:dual_and_single_q}

The main text considers the budget-constrained design problem~\eqref{eq:opt_problem}. We now derive the equivalent cost-minimizing dual and show that a single-question special case recovers a classical \textsc{PPI++}\ sample-size formula.

\paragraph{Dual cost-minimizing problem.}
Given a target weighted MSE $J_{\mathrm{target}} > 0$, the cost-minimizing problem is
\[
\min_{\{n_q\}_{q\in\mathcal{T}}} \quad \sum_{q\in\mathcal{T}} c_q n_q
\qquad \text{s.t.} \qquad \sum_{q\in\mathcal{T}} \frac{w_q A_q}{n_q} \le J_{\mathrm{target}},
\quad n_q \ge 0.
\]
At the optimum the constraint binds (otherwise we could uniformly shrink each $n_q$ and lower cost). Introducing a Lagrange multiplier $\nu > 0$ for the binding MSE constraint, the Lagrangian
\[
\mathcal{L}(\bm{n}, \nu) \;=\; \sum_{q} c_q n_q + \nu \Bigl( \sum_q \tfrac{w_q A_q}{n_q} - J_{\mathrm{target}} \Bigr)
\]
has stationarity condition $c_q - \nu \, w_q A_q / n_q^2 = 0$, giving
\begin{equation}\label{eq:cost_min_solution}
n_q^\star = \sqrt{\frac{\nu w_q A_q}{c_q}} \;\propto\; \sqrt{\frac{w_q A_q}{c_q}}.
\end{equation}
The proportions match those of the budget-constrained primal (Theorem~\ref{thm:optimal_allocation}); only the overall scale is set by $\nu$ rather than $B$. Substituting~\eqref{eq:cost_min_solution} back into the binding MSE constraint gives $\sum_q w_q A_q / n_q^\star = \nu^{-1/2} \sum_q \sqrt{w_q A_q c_q} = J_{\mathrm{target}}$, which solves to $\sqrt{\nu} = J_{\mathrm{target}}^{-1} \sum_q \sqrt{w_q A_q c_q}$, and therefore the minimum total cost is
\[
B_{\min} \;=\; \sum_q c_q n_q^\star \;=\; \sqrt{\nu} \sum_q \sqrt{w_q A_q c_q} \;=\; \frac{1}{J_{\mathrm{target}}} \Bigl( \sum_{q \in \mathcal{T}} \sqrt{w_q A_q c_q} \Bigr)^2.
\]
This is the dual analogue of the optimal-MSE expression $J^\star = \tfrac{1}{B} \bigl(\sum_q \sqrt{w_q A_q c_q}\bigr)^2$ derived in the proof of Theorem~\ref{thm:optimal_allocation}.

\paragraph{Single-question special case.}
When $|\mathcal{T}| = 1$ with unit cost $c = 1$ and unit weight $w = 1$, both formulations reduce to the standard \textsc{PPI++}\ sample-size determination. To achieve target MSE $J_{\mathrm{target}}$ on a single estimand, the required sample size is $n^\star = A / J_{\mathrm{target}}$, with the population rectification difficulty $A$ replacing the classical variance $\Var(Y)$. This recovers the textbook formula for fixed-precision sample-size choice but with the appropriate hybrid human--LLM variance index.

\subsection{Alternative loss functions: the cube-root rule under MAE}
\label{rmk:alternative_loss}

The square-root allocation rule depends on the choice of loss. We derive the analogous rule under a weighted \emph{mean-absolute-error} (MAE) objective and discuss when the resulting cube-root rule may be preferable.

\paragraph{Derivation.}
Asymptotic normality of the \textsc{PPI++}\ estimator (Proposition~\ref{thm:end_to_end}), specialized to the SDR limit $m_q/n_q \to \infty$, gives $\sqrt{n_q}\bigl(\widehat{\theta}_q(\widehat{\lambda}_q) - \theta_q^\star\bigr) \Rightarrow \mathcal{N}(0, A_q)$, so
\[
\bE\bigl[|\widehat{\theta}_q(\widehat{\lambda}_q) - \theta_q^\star|\bigr] \;=\; \sqrt{\tfrac{2 A_q}{\pi n_q}} \cdot (1 + o(1))
\;\propto\; \sqrt{\tfrac{A_q}{n_q}}.
\]
The weighted MAE design problem is then
\[
\min_{\{n_q\}_{q \in \mathcal{T}}} \quad \sum_{q \in \mathcal{T}} w_q \sqrt{\tfrac{A_q}{n_q}}
\qquad \text{s.t.} \qquad \sum_{q \in \mathcal{T}} c_q n_q \le B.
\]
Lagrangian stationarity gives $-\tfrac{1}{2} w_q \sqrt{A_q} \, n_q^{-3/2} + \mu c_q = 0$, i.e.\
\begin{equation}
n_q^\star \;=\; \Bigl(\tfrac{w_q \sqrt{A_q}}{2 \mu c_q}\Bigr)^{2/3} \;\propto\; \Bigl(\tfrac{w_q}{c_q}\Bigr)^{2/3} A_q^{1/3}.
\end{equation}
Substituting into the binding budget constraint $\sum_q c_q n_q^\star = B$ pins down $\mu$ and yields the closed-form normalization analogous to Theorem~\ref{thm:optimal_allocation}.

\paragraph{Practical implication.}
The cube-root dependence on $A_q$ is less aggressive than the square-root rule under MSE: the MAE-optimal allocation puts more weight on easier questions and less on the hardest ones, relative to the square-root rule. This is desirable when difficulty predictions $\tilde{A}_q$ are noisy, because the cube-root rule is more robust to over-amplification of large $\tilde{A}_q$ outliers. Other losses admit the same treatment via the asymptotic normality of $\widehat{\theta}_q$, with the allocation exponent determined by how the expected loss scales with $n_q$: quantile (check) losses scale as $n_q^{-1/2}$ like MAE and yield the same cube-root rule, whereas a Huber loss with a fixed threshold is locally quadratic, scales as $1/n_q$, and recovers the square-root rule.

\subsection{Power analysis for hypothesis testing}
\label{rmk:power_analysis}

The variance scaling $\Var(\widehat{\theta}_q) = A_q / n_q$ supports \textsc{PPI++}-based power analysis for testing question-specific null hypotheses such as $H_{0,q}: \theta_q^\star = \theta_{q,0}$.

\paragraph{Single-test calculation.}
To detect a deviation $|\theta_q^\star - \theta_{q,0}| \ge \delta_q$ at significance level $\alpha$ with power $1 - \beta$, a standard Wald-test calculation under asymptotic normality yields the required sample size
\begin{equation}\label{eq:power_sample_size}
n_q \;=\; \frac{A_q \, (z_{1-\alpha/2} + z_{1-\beta})^2}{\delta_q^2},
\end{equation}
where $z_p$ is the standard-normal $p$-quantile. The classical human-only power calculation replaces $A_q$ with $\Var(Y_q)$. Because \textsc{PPI++} guarantees $A_q \le \Var(Y_q)$, the \textsc{PPI++}-based design always requires weakly fewer human labels for the same power.

\paragraph{Multiple-testing correction.}
When all $|\mathcal{T}|$ tests are run simultaneously, family-wise error-rate control requires per-test levels $\{\alpha_q\}_{q \in \mathcal{T}}$ with $\sum_q \alpha_q \le \alpha$ (Bonferroni) or $1 - \prod_q (1 - \alpha_q) \le \alpha$ (\v{S}id\'ak under independence). Substituting $\alpha_q$ into Equation~\eqref{eq:power_sample_size} gives test-specific requirements $n_q(\alpha_q)$; combining with the budget constraint $\sum_q c_q n_q \le B$ then becomes a constrained joint design problem. Unlike the MSE problem, this generally has no closed-form solution and is solved numerically, but the per-question structure is unchanged.

\subsection{Wave-level (block-level) allocation for matrix-sampling designs}
\label{rmk:matrix_sampling}

Our main framework allocates per question, which is appropriate when questions can be fielded to disjoint respondent groups. In many practical surveys, however, respondents answer multiple questions in a single session (\emph{matrix sampling}), so within-respondent responses are correlated. We extend the framework to the natural \emph{wave-level} (or \emph{block-level}) granularity in which each \emph{wave} $w$ is a block of questions $\mathcal{W}_w \subseteq \mathcal{T}$ fielded jointly to a shared pool of $n_w$ respondents.

\paragraph{Stacked sandwich covariance.}
Let $\bm{\theta}_w = (\theta_q^\star)_{q \in \mathcal{W}_w}$ be the wave-level parameter. Stacking the per-question \textsc{PPI++}\ estimating equations for $q \in \mathcal{W}_w$ into a joint system, the $M$-estimation machinery in Web Appendix~\ref{appsec:M-estimation} gives the wave-level sandwich covariance
\[
\bm{\Sigma}_w(\lambda) \;=\; \bm{H}_w^{-1} \, \bm{V}_w^{\Delta(\lambda)} \, \bm{H}_w^{-\top},
\qquad
\bm{V}_w^{\Delta(\lambda)} \;=\; \Var\bigl((\bm{\Delta}_{q,w})_{q \in \mathcal{W}_w}\bigr),
\]
where $\bm{\Delta}_{q,w}$ is the rectified score residual from Web Appendix~\ref{appsec:M_estimation:ppi}. Crucially, $\bm{V}_w^{\Delta(\lambda)}$ contains the within-respondent cross-question covariances; ignoring these would understate the per-respondent variance under matrix sampling.

\paragraph{Scalarization and allocation.}
Scalarizing $\bm{\Sigma}_w(\lambda)$ via the trace (A-optimality, $A^{(A)}_w = \mathrm{tr}(\bm{\Omega}_w \bm{\Sigma}_w(\lambda))$) or determinant (D-optimality, $A^{(D)}_w = \det(\bm{\Sigma}_w(\lambda))^{1/d_w}$) following Web Appendix~\ref{appsec:M_estimation:scalarize} yields a wave-level difficulty index $A_w$. The wave-level allocation problem,
\[
\min_{\{n_w\}_w} \sum_w \frac{w_w A_w}{n_w}
\qquad \text{s.t.} \qquad \sum_w c_w n_w \le B,
\]
is structurally identical to Theorem~\ref{thm:optimal_allocation}, so the optimal allocation retains the same square-root form
\[
n_w^\star \;\propto\; \sqrt{\frac{w_w A_w}{c_w}}.
\]

\paragraph{Empirical validation.}
Section~\ref{subsec:robustness_eval} validates this extension empirically on Twin-2K-500: each task in the dataset is a block of questions sharing a respondent pool (a wave in the present sense), and the allocation gains carry over essentially unchanged when the unit of allocation is the task rather than the individual question.

\section{Extension to General $M$-Estimation}
\label{appsec:M-estimation}

To streamline notation, throughout this appendix, we fix an arbitrary question $q$ and drop the subscript $q$.
Concretely, we write $\cP$ for $\cP_q$, $\bm{\theta}^\star$ for $\bm{\theta}_q^\star$, $d$ for $d_q$, $n$ for $n_q$, $m$ for $m_q$, and similarly drop $q$ on all derived quantities (e.g., $\bm{H}$ for $\bm{H}_q$, $\bm{V}^{\Delta(\lambda)}$ for $\bm{V}_q^{\Delta(\lambda)}$, and $\bm{\Sigma}(\lambda)$ for $\bm{\Sigma}_q(\lambda)$).
All expectations and variances are taken with respect to $(\bm{X},Y)\sim \cP$ unless stated otherwise.

\subsection{Population $M$-estimation and estimating equations}
\label{appsec:M_estimation:setup}

In the main text, we focused on scalar population means to build intuition. Here we describe a general $M$-estimation framework that covers many estimands used in marketing applications.

Let $\ell(\bm{X},Y;\bm{\theta})$ be a loss function that is convex and twice differentiable
in $\bm{\theta}\in \mathbb{R}^d$, and suppose the population risk has a unique minimizer:
\begin{align}
\label{eqn:pop_risk_def}
\bm{\theta}^\star
\;=\;
\argmin_{\bm{\theta} \in \mathbb{R}^d}\;
\bE\!\left[\ell(\bm{X},Y;\bm{\theta})\right].
\end{align}
Define the \emph{score} (estimating) function
\[
\bm{\psi}(\bm{X},Y;\bm{\theta})
\;:=\;
\nabla_{\bm{\theta}}\ell(\bm{X},Y;\bm{\theta})
\;\in\; \mathbb{R}^d.
\]
Under standard regularity conditions that permit differentiating under the expectation,
the first-order optimality condition for~\eqref{eqn:pop_risk_def} is the population moment equation
\begin{align}
\label{eqn:pop_score_moment}
\bE\!\left[\bm{\psi}(\bm{X},Y;\bm{\theta}^\star)\right] \;=\; \bm{0}.
\end{align}
We also define the population Jacobian/Hessian matrix
\begin{align*}
\bm{H}
\;:=\;
\bE\!\left[\nabla_{\bm{\theta}}\bm{\psi}(\bm{X},Y;\bm{\theta}^\star)\right]
\;=\;
\bE\!\left[\nabla_{\bm{\theta}}^2 \ell(\bm{X},Y;\bm{\theta}^\star)\right]
\;\in\; \mathbb{R}^{d\times d}.
\end{align*}
For identification and asymptotic normality we assume $\bm{H}$ is nonsingular (and, under convexity, typically
positive definite).

\subsection{Examples in marketing applications}
\label{appsec:M_estimation:examples}

We illustrate the notation through four common examples relevant to survey design and demand estimation.

\begin{example}[Mean estimation]
\label{ex:Mest_mean}
Mean estimation is the special case where $\bm{\theta}=\theta\in \mathbb{R}$, the loss is
$\ell(Y;\theta)=\frac{1}{2}(Y-\theta)^2$, and the score is $\psi(Y;\theta)=\theta-Y$.
The first-order condition $\bE[\psi(Y;\theta^\star)]=0$ yields $\theta^\star=\bE[Y]$, recovering the mean estimand
studied in the main text.
\end{example}

\begin{example}[Categorical response estimation]
\label{ex:Mest_categorical}
Estimating the distribution of categorical responses is fundamental in survey-based market research, for example
when the outcome is a $K$-way multiple-choice question (market share, brand preference, top choice, etc.).
Let $Y\in\{1,\dots,K\}$ denote the chosen category and consider the probability vector
$\bm{\theta}=(\Pr(Y=1),\dots,\Pr(Y=K))^\top \in \mathbb{R}^K$.

One convenient $M$-estimation formulation uses a squared-error loss on the indicator vector:
\[
\ell(Y;\bm{\theta})
\;=\;
\sum_{k=1}^K \frac{1}{2}\Big(\bI\{Y=k\}-\theta_k\Big)^2.
\]
The score is
\[
\bm{\psi}(Y;\bm{\theta})
\;=\;
\nabla_{\bm{\theta}}\ell(Y;\bm{\theta})
\;=\;
\bm{\theta}
-
\big(\bI\{Y=1\},\dots,\bI\{Y=K\}\big)^\top.
\]
The population moment condition $\bE[\bm{\psi}(Y;\bm{\theta}^\star)]=\bm{0}$ implies $\theta_k^\star=\Pr(Y=k)$ for each
$k$, i.e., it recovers the categorical response probabilities.
(If desired, one can additionally enforce the simplex constraints $\theta_k\ge 0$ and $\sum_k \theta_k=1$; this does
not change the basic PPI rectification idea below.)
\end{example}

\begin{example}[Linear regression]
\label{ex:Mest_ols}
A classic use case is estimating price--demand relationships, where $Y$ is demand and $\bm{X}\in \mathbb{R}^d$ encodes
price and other product attributes. Linear regression corresponds to the loss
\[
\ell(\bm{X},Y;\bm{\theta})
\;=\;
\frac{1}{2}\big(Y-\bm{X}^\top\bm{\theta}\big)^2,
\qquad
\bm{\psi}(\bm{X},Y;\bm{\theta})
\;=\;
\bm{X}\big(\bm{X}^\top\bm{\theta}-Y\big).
\]
The moment condition $\bE[\bm{X}(\bm{X}^\top\bm{\theta}^\star-Y)]=\bm{0}$ yields the population normal equations
and, when $\bE[\bm{X}\bm{X}^\top]$ is invertible,
\[
\bm{\theta}^\star \;=\; \bE[\bm{X}\bm{X}^\top]^{-1}\bE[\bm{X}Y],
\]
the population ordinary least squares solution.
\end{example}

\begin{example}[Multinomial logit (MNL) choice model]
\label{ex:Mest_mnl}
In conjoint analysis and discrete choice modeling, a respondent chooses among $K$ product profiles with feature
vectors $\bm{X}_1,\dots,\bm{X}_K$ (e.g., attribute encodings). Let $Y\in\{1,\dots,K\}$ be the chosen option and
$\bm{\theta}\in \mathbb{R}^d$ the partworth vector. The MNL negative log-likelihood is
\begin{align*}
\ell(\bm{X},Y;\bm{\theta})
&=
\log\!\Big(\sum_{k=1}^K \exp(\bm{X}_k^\top \bm{\theta})\Big)
-
\sum_{k=1}^K (\bm{X}_k^\top \bm{\theta})\,\bI\{Y=k\}.
\end{align*}
Define the model-implied choice probabilities
\[
P_k(\bm{\theta})
\;:=\;
\frac{\exp(\bm{X}_k^\top \bm{\theta})}{\sum_{\kappa=1}^K \exp(\bm{X}_\kappa^\top \bm{\theta})},
\qquad k=1,\dots,K.
\]
Then the score takes the familiar form
\[
\bm{\psi}(\bm{X},Y;\bm{\theta})
\;=\;
\sum_{k=1}^K \big(P_k(\bm{\theta})-\bI\{Y=k\}\big)\bm{X}_k.
\]
The population condition $\bE[\bm{\psi}(\bm{X},Y;\bm{\theta}^\star)]=\bm{0}$ defines $\bm{\theta}^\star$; in practice it
is typically solved numerically (e.g., via Newton or quasi-Newton methods).
\end{example}

\subsection{Prediction-powered $M$-estimator and asymptotic variance}
\label{appsec:M_estimation:ppi}

We now describe the prediction-powered construction for general $M$-estimation, starting with \textsc{PPI++} (our default) and viewing PPI as a special case.
Suppose we have a surrogate predictor (e.g., an LLM) that, given $\bm{X}$, produces a pseudo-outcome
$Y^{\mathrm{LLM}}=f(\bm{X})$. A natural way to build a surrogate score is
\[
\bm{\psi}^{\mathrm{LLM}}(\bm{X};\bm{\theta})
\;:=\;
\bm{\psi}(\bm{X}, f(\bm{X}); \bm{\theta}),
\]
i.e., we plug the pseudo-outcome into the original score.
(We write $\bm{\psi}^{\mathrm{LLM}}(\bm{X};\bm{\theta})$ to emphasize it is computable without observing $Y$.)

\noindent\textbf{PPI++ rectified estimating equation.}
Let $\{(\bm{X}_i,Y_i)\}_{i=1}^n$ be the labeled sample used for rectification, and let the synthetic pool be sufficiently large that we can treat $\bE[\bm{\psi}^{\mathrm{LLM}}(\bm{X};\bm{\theta})]$ as known (SDR asymptotics).
For tuning $\lambda\in[0,1]$, the \textsc{PPI++} $M$-estimator $\widehat{\bm{\theta}}(\lambda)$ is defined as a solution to
\begin{equation}
\label{eqn:Mestimator_score_pp}
\bm{0}
\;=\;
\frac{1}{n}\sum_{i=1}^{n}\bm{\psi}(\bm{X}_i, Y_i; \bm{\theta})
\;+\;
\lambda\Bigg(
\bE\!\left[\bm{\psi}^{\mathrm{LLM}}(\bm{X}; \bm{\theta})\right]
-
\frac{1}{n}\sum_{i=1}^{n}\bm{\psi}^{\mathrm{LLM}}(\bm{X}_i; \bm{\theta})
\Bigg).
\end{equation}
Setting $\lambda=0$ yields the classical $M$-estimator; setting $\lambda=1$ yields PPI.

\noindent\textbf{Special cases.}
When $\lambda=1$, the \textsc{PPI++} estimating equation~\eqref{eqn:Mestimator_score_pp} can be rewritten as the PPI estimating equation
\begin{align*}
\bm{0}
\;=\;
\bE\!\left[\bm{\psi}^{\mathrm{LLM}}(\bm{X}; \bm{\theta})\right]
\;+\;
\frac{1}{n}\sum_{i=1}^{n}\Big(\bm{\psi}(\bm{X}_i, Y_i; \bm{\theta})-\bm{\psi}^{\mathrm{LLM}}(\bm{X}_i; \bm{\theta})\Big).
\end{align*}
Setting $\lambda=0$ recovers the classical $M$-estimator.

\noindent\textbf{Consistency.}
Evaluating the left-hand side of Equation~\eqref{eqn:Mestimator_score_pp} at $\bm{\theta}^\star$ and taking expectations gives
\[
\bE\!\left[\bm{\psi}(\bm{X},Y;\bm{\theta}^\star)\right]
+
\lambda\Big(\bE\!\left[\bm{\psi}^{\mathrm{LLM}}(\bm{X}; \bm{\theta}^\star)\right]-\bE\!\left[\bm{\psi}^{\mathrm{LLM}}(\bm{X}; \bm{\theta}^\star)\right]\Big)
=
\bE\!\left[\bm{\psi}(\bm{X},Y;\bm{\theta}^\star)\right]
=
\bm{0},
\]
where the final equality is exactly the population first-order condition~\eqref{eqn:pop_score_moment}.
Thus, for any fixed $\lambda\in[0,1]$, the rectified estimating equation targets the same population parameter $\bm{\theta}^\star$ even if the surrogate predictor is misspecified. Under standard $M$-estimation regularity conditions, this implies $\widehat{\bm{\theta}}(\lambda)\xrightarrow{p}\bm{\theta}^\star$ as $n\to\infty$ (e.g., Theorem~2.7 of \citealt{newey1994large}).

\noindent\textbf{Asymptotic normality and sandwich covariance (SDR).}
Linearizing Equation~\eqref{eqn:Mestimator_score_pp} around $\bm{\theta}^\star$ yields the usual $M$-estimation influence-function representation. In the SDR regime (treating $\bE[\bm{\psi}^{\mathrm{LLM}}(\bm{X};\bm{\theta})]$ as a population expectation), the only sampling variability comes from the rectified score residual
\[
\bm{\Delta}(\lambda;\bm{X},Y;\bm{\theta}^\star)
\;:=\;
\bm{\psi}(\bm{X},Y;\bm{\theta}^\star)-\lambda\,\bm{\psi}^{\mathrm{LLM}}(\bm{X};\bm{\theta}^\star).
\]
Define its covariance matrix
\begin{align*}
\bm{V}^{\Delta(\lambda)}
\;:=\;
\Var\!\Big(\bm{\Delta}(\lambda;\bm{X},Y;\bm{\theta}^\star)\Big)
\;=\;
\Var\!\Big(\bm{\psi}(\bm{X},Y;\bm{\theta}^\star)-\lambda\,\bm{\psi}^{\mathrm{LLM}}(\bm{X};\bm{\theta}^\star)\Big).
\end{align*}
Then, under standard regularity conditions,
\[
\sqrt{n}\big(\widehat{\bm{\theta}}(\lambda)-\bm{\theta}^\star\big)
\;\xrightarrow{d}\;
\mathcal{N}\!\left(\bm{0},\,\bm{\Sigma}(\lambda)\right),
\qquad
\bm{\Sigma}(\lambda)
=
\bm{H}^{-1}\bm{V}^{\Delta(\lambda)}\bm{H}^{-\top},
\]
and therefore
\begin{align}
\label{eqn:sandwich_variance}
\Var\!\big(\widehat{\bm{\theta}}(\lambda)\big)
\;\approx\;
\frac{1}{n}\,\bm{\Sigma}(\lambda).
\end{align}
The form~\eqref{eqn:sandwich_variance} is a direct analogue of the scalar case in the main text: the labeled sample size $n$ appears only through the $1/n$ factor, and all task/question dependence is summarized by the matrix $\bm{\Sigma}(\lambda)$. When $\lambda=1$, this reduces to the PPI covariance (Theorem~1 of \citealt{angelopoulos2023ppi}).

\subsection{Scalarizing the covariance matrix for design}
\label{appsec:M_estimation:scalarize}

For multi-dimensional $M$-estimation, the asymptotic uncertainty is a covariance matrix
$\Var(\widehat{\bm{\theta}}(\lambda))\approx \bm{\Sigma}(\lambda)/n$ as in Equation~\eqref{eqn:sandwich_variance}.
To apply the allocation framework in the main text (which requires a scalar ``difficulty index'' per question), we must map this matrix into a scalar design objective. If the estimand is a differentiable scalar functional $g:\mathbb{R}^d\to \mathbb{R}$, one can directly apply the Delta method. 

When the goal is accurate estimation of the entire parameter vector (e.g., all partworths), one typically scalarizes the covariance matrix using a classical optimal design criterion. Below, we provide two common strategies that preserve the same $1/n$ scaling and are therefore directly compatible with our survey sample allocation design. It includes minimizing the trace and determinant of the covariance matrix, which correspond to the A-optimal and D-optimal objective, respectively.
There are more objectives in the optimal experimental design literature; see \citet{pukelsheim2006optimal, atkinson2007optimum, fedorov2013theory, silvey2013optimal}.

\subsubsection{Trace.}

A natural vector loss is the squared error, possibly after applying a linear transformation
$\bm{\eta}=\bm{L}\bm{\theta}$ (e.g., standardizing coefficients or focusing on a subset).
Since $\Var(\bm{L}\widehat{\bm{\theta}})=\bm{L}\Var(\widehat{\bm{\theta}})\bm{L}^\top$ and the squared bias is $o(1/n)$, we have
\[
\bE\!\left[\|\bm{L}\widehat{\bm{\theta}}(\lambda)-\bm{L}\bm{\theta}^\star\|_2^2\right]
\approx
\mathrm{tr}\!\Big(\Var(\bm{L}\widehat{\bm{\theta}}(\lambda))\Big)
\approx
\frac{1}{n}\,\mathrm{tr}\!\Big(\bm{L}\bm{\Sigma}(\lambda)\bm{L}^\top\Big).
\]
Using cyclicity of the trace, $\mathrm{tr}(\bm{L}\bm{\Sigma}(\lambda)\bm{L}^\top)
=\mathrm{tr}(\bm{\Omega}\bm{\Sigma}(\lambda))$ with $\bm{\Omega}:=\bm{L}^\top\bm{L}\succeq \bm{0}$.
Thus a weighted trace objective can be written as
\[
\mathrm{tr}\!\Big(\bm{\Omega}\Var(\widehat{\bm{\theta}}(\lambda))\Big)
\approx
\frac{1}{n}\,\mathrm{tr}\!\Big(\bm{\Omega}\bm{\Sigma}(\lambda)\Big).
\]
It penalizes the \emph{sum} of (appropriately weighted) variances
and is often preferred when one cares about average accuracy across coefficients, or when coefficients have different
scales and a meaningful weighting $\bm{\Omega}$ is available. We define the corresponding difficulty index as
\begin{equation}
\label{eqn:difficulty_A}
A^{(A)}(\lambda)
\;:=\;
\mathrm{tr}\!\Big(\bm{\Omega}\bm{\Sigma}(\lambda)\Big)
=
\mathrm{tr}\!\Big(\bm{\Omega}\bm{H}^{-1}\bm{V}^{\Delta(\lambda)}\bm{H}^{-\top}\Big),
\end{equation}
so that the A-optimal objective scales as $A^{(A)}(\lambda)/n$.

\subsubsection{Determinant.}
A complementary objective is based on the volume of the asymptotic confidence ellipsoid for $\bm{\theta}^\star$, which is proportional to $\det(\Var(\widehat{\bm{\theta}}(\lambda)))^{1/2}$.
Directly minimizing $\det(\Var(\widehat{\bm{\theta}}(\lambda)))$ is a common strategy.

To retain the linear $1/n$ scaling used in our allocation framework, we use the $d$-th root of the determinant:
\[
\det\!\Big(\Var(\widehat{\bm{\theta}}(\lambda))\Big)^{1/d}
\approx
\det\!\Big(\tfrac{1}{n}\bm{\Sigma}(\lambda)\Big)^{1/d}
=
\frac{1}{n}\,\det\!\Big(\bm{\Sigma}(\lambda)\Big)^{1/d}.
\]
This yields the scalar difficulty index
\begin{equation}
\label{eqn:difficulty_D}
A^{(D)}(\lambda)
\;:=\;
\det\!\Big(\bm{\Sigma}(\lambda)\Big)^{1/d}
=
\left[\det\!\Big(\bm{H}^{-1}\bm{V}^{\Delta(\lambda)}\bm{H}^{-\top}\Big)\right]^{1/d}.
\end{equation}
Compared to the trace criterion, the determinant depends on the product of eigenvalues (a volume measure) rather than their sum, and ranks allocations consistently under invertible linear reparameterizations of $\bm{\theta}$ (e.g., rescaling units), since such transformations change the criterion only by a multiplicative constant; this can be desirable when coefficient scales are arbitrary.

Both Equations~\eqref{eqn:difficulty_A} and~\eqref{eqn:difficulty_D} lead to objectives proportional to $A(\lambda)/n$, matching the structure used in the main text. Consequently, the same allocation analysis applies after scalarizing $\bm{\Sigma}(\lambda)$ and (optionally) tuning $\lambda$ to minimize the chosen scalar objective.

\section{Omitted Proofs}
\label{app:optimization_proofs}

\subsection{Proof of Lemma~\ref{lem:mse_var}}
\label{app:proof_mse_var}

Taking expectations on both sides of Equation~\eqref{eq:ppi++} yields
\[
\bE[\widehat{\theta}(\lambda)] = \bE[Y] = \theta^\star,
\]
so the MSE equals the variance for fixed $\lambda$. Here, $\bE[Y^{\mathrm{LLM}}] = \bE[\tilde{Y}^{\mathrm{LLM}}]$ because both equal $\bE[f(\bm{X})]$ with $\bm{X}\sim \cP(\bm{X})$.

Subtracting $\theta^\star$ from both sides,
\begin{align*}
\widehat{\theta}(\lambda)-\theta^\star
&=
\frac{1}{n}\sum_{i=1}^{n}\Big(Y_i-\lambda Y_i^{\mathrm{LLM}}\Big)
+
\frac{\lambda}{m}\sum_{i=1}^{m}\tilde{Y}_i^{\mathrm{LLM}}
-\theta^\star \\
&=
\frac{1}{n}\sum_{i=1}^{n}\Big(Y_i-\lambda Y_i^{\mathrm{LLM}}\Big)
+
\frac{\lambda}{m}\sum_{i=1}^{m}\tilde{Y}_i^{\mathrm{LLM}}
-\bE[Y-\lambda Y^{\mathrm{LLM}}]
-\lambda \bE[\tilde{Y}^{\mathrm{LLM}}] \\
&=
\frac{1}{n}\sum_{i=1}^{n}\Big(Y_i-\lambda Y_i^{\mathrm{LLM}}-\bE[Y-\lambda Y^{\mathrm{LLM}}]\Big)
+
\frac{\lambda}{m}\sum_{i=1}^{m}\Big(\tilde{Y}_i^{\mathrm{LLM}}-\bE[\tilde{Y}^{\mathrm{LLM}}]\Big).
\end{align*}

Define
\[
U
:=
\frac{1}{n}\sum_{i=1}^{n}\Big(Y_i-\lambda Y_i^{\mathrm{LLM}}-\bE[Y-\lambda Y^{\mathrm{LLM}}]\Big),
\qquad
V
:=
\frac{\lambda}{m}\sum_{i=1}^{m}\Big(\tilde{Y}_i^{\mathrm{LLM}}-\bE[\tilde{Y}^{\mathrm{LLM}}]\Big),
\]
so that
\[
\widehat{\theta}(\lambda)-\theta^\star = U+V.
\]
Each term is centered, so $\bE[U]=\bE[V]=0$. Moreover, $U$ depends only on the labeled sample, since $Y_i^{\mathrm{LLM}}=f(\bm{X}_i)$ is a function of the labeled prompt $\bm{X}_i$ (and, if $f$ is stochastic, an independent randomization seed), while $V$ depends only on the synthetic sample. Because the two samples, including any LLM randomization, are drawn independently, $(U,V)$ are independent and $\Cov(U,V)=0$. Therefore,
\[
\Var\!\left(\widehat{\theta}(\lambda)\right)
=
\Var(U+V)
=
\Var(U)+\Var(V).
\]

Because the summands in $U$ are i.i.d. with variance $\Var(Y-\lambda Y^{\mathrm{LLM}})$, we have
\[
\Var(U)=\frac{1}{n}\Var(Y-\lambda Y^{\mathrm{LLM}}).
\]
Similarly,
\[
\Var(V)=\frac{\lambda^2}{m}\Var(\tilde{Y}^{\mathrm{LLM}}).
\]
Thus,
\[
\Var\!\left(\widehat{\theta}(\lambda)\right)
=
\frac{1}{n}\Var\!\left(Y-\lambda Y^{\mathrm{LLM}}\right)
+
\frac{\lambda^2}{m}\Var\!\left(\tilde{Y}^{\mathrm{LLM}}\right).
\]
Under SDR, this simplifies to
\[
\Var\!\left(\widehat{\theta}(\lambda)\right)
=
\frac{1}{n}\Var\!\left(Y-\lambda Y^{\mathrm{LLM}}\right).
\]
\hfill\Halmos

\subsection{Proof of Theorem~\ref{thm:optimal_allocation}}
\label{ssec:proof_thm_opt_all}

We explicitly derive the optimal allocation rule by analyzing the properties of the objective function and solving the associated Lagrangian.

First, we establish the strict convexity of the optimization problem, which guarantees the uniqueness of the solution. Let $f_q(n_q) = w_q A_q / n_q$. For any $q$ where $w_q > 0$, the derivatives with respect to $n_q > 0$ are
\[
    f_q'(n_q) = -\frac{w_q A_q}{n_q^2}
    \quad \text{and} \quad
    f_q''(n_q) = \frac{2 w_q A_q}{n_q^3} > 0.
\]
Since $f''_q(n_q) > 0$, each term is strictly convex in $n_q$ for $q$ with $w_q>0$. For questions with $w_q=0$, the objective does not depend on $n_q$, and because $c_q>0$ any positive allocation to such questions can only tighten the budget constraint and weakly worsen the achievable objective value. Hence, at the unique optimum we have $n_q^\star=0$ for all $q$ with $w_q=0$, and the remaining subproblem over $\{q:w_q>0\}$ is strictly convex. For any $q$ with $w_q > 0$, $f_q(n_q) = w_q A_q/n_q \to \infty$ as $n_q \to 0^+$, so the optimum must be interior. As the budget constraint $\sum_{q\in\mathcal{T}} c_q n_q \le B$ defines a convex feasible set, the Karush--Kuhn--Tucker (KKT) conditions are both necessary and sufficient for optimality, and the solution is unique.

To find this solution, we form the Lagrangian function with multiplier $\mu$:
\[
    \mathcal{L}(\bm{n},\mu) = \sum_{q\in\mathcal{T}} \frac{w_q A_q}{n_q} + \mu\left(\sum_{q\in\mathcal{T}} c_q n_q - B\right).
\]
The first-order stationarity condition $\frac{\partial \mathcal{L}}{\partial n_q} = 0$ yields
\[
    -\frac{w_q A_q}{n_q^2} + \mu c_q = 0 \implies n_q = \sqrt{\frac{w_q A_q}{\mu c_q}}.
\]
Because the objective function is strictly decreasing in $n_q$, the budget constraint must bind at optimality (i.e., $\sum c_q n_q = B$). Substituting the expression for $n_q$ into the active budget constraint allows us to solve for the Lagrange multiplier $\mu$:
\[
    \sum_{q\in\mathcal{T}} c_q \sqrt{\frac{w_q A_q}{\mu c_q}} = B
    \implies
    \frac{1}{\sqrt{\mu}} \sum_{q\in\mathcal{T}} \sqrt{w_q A_q c_q} = B.
\]
Finally, substituting the derived value of $1/\sqrt{\mu}$ back into the expression for $n_q$ yields the closed-form optimal allocation rule stated in Equation~\eqref{eq:closed_form_sol}. \hfill\Halmos

\subsection{Proof of Proposition~\ref{thm:end_to_end}}
\label{app:end_to_end_proof}

Fix an arbitrary question $q \in \mathcal{T}$ and suppress the subscript $q$ for notational brevity. Write $\mu_{\mathrm{LLM}} := \bE[Y^{\mathrm{LLM}}]$ and $\bar{\tilde Y}^{\mathrm{LLM}} := m^{-1}\sum_{j=1}^m \tilde Y_j^{\mathrm{LLM}}$ for the synthetic-pool average used in Step~6. The labeled sample $\{(Y_i, Y_i^{\mathrm{LLM}})\}_{i=1}^{\tilde n}$ and the synthetic pool $\{\tilde Y_j^{\mathrm{LLM}}\}_{j=1}^m$ are independent.

\noindent\textbf{Preliminaries.}
The \textsc{PPI++} estimator for the mean with tuning parameter $\lambda$ is
\[
\widehat{\theta}(\lambda) \;=\; \frac{1}{\tilde{n}}\sum_{i=1}^{\tilde{n}} (Y_i - \lambda\,Y_i^{\mathrm{LLM}}) + \lambda\,\bar{\tilde Y}^{\mathrm{LLM}}.
\]
The pairs $(Y_i, Y_i^{\mathrm{LLM}})$ are i.i.d.\ draws from the population distribution.

We proceed in two steps: first establish the CLT for a fixed $\lambda^\star$, then extend to the plug-in $\widehat{\lambda}$.

\emph{Step 1 (fixed $\lambda^\star$).}
Using $\theta^\star = \bE Y = \bE(Y - \lambda^\star Y^{\mathrm{LLM}}) + \lambda^\star \mu_{\mathrm{LLM}}$, decompose
\[
\widehat{\theta}(\lambda^\star) - \theta^\star
\;=\;
\underbrace{\frac{1}{\tilde{n}}\sum_{i=1}^{\tilde n}\Bigl[(Y_i - \lambda^\star Y_i^{\mathrm{LLM}}) - \bE(Y - \lambda^\star Y^{\mathrm{LLM}})\Bigr]}_{T_1}
\;+\;
\lambda^\star \underbrace{\bigl(\bar{\tilde Y}^{\mathrm{LLM}} - \mu_{\mathrm{LLM}}\bigr)}_{T_2}.
\]
$T_1$ is an average of i.i.d.\ mean-zero terms with variance $A := \Var(Y - \lambda^\star Y^{\mathrm{LLM}})$, and $T_2$ is an average of i.i.d.\ mean-zero terms with variance $\Var(Y^{\mathrm{LLM}})$; $T_1$ and $T_2$ are independent by the independence of the labeled sample and the synthetic pool. The CLT applied to each term, together with Slutsky's theorem, yields
\begin{equation}
\label{eq:clt_fixed_lambda}
\sigma^{-1}\bigl(\widehat{\theta}(\lambda^\star) - \theta^\star\bigr) \xrightarrow{d} \mathcal{N}(0, 1),
\qquad
\sigma^2 \;=\; \frac{A}{\tilde n} + \frac{(\lambda^\star)^2 \Var(Y^{\mathrm{LLM}})}{m}.
\end{equation}

\emph{Step 2 (plug-in $\widehat{\lambda}$).}
Write $\widehat{\theta}(\lambda) = \bar Y + \lambda\bigl(\bar{\tilde Y}^{\mathrm{LLM}} - \bar{Y}^{\mathrm{LLM}}\bigr)$, where $\bar Y = \tilde n^{-1}\sum_i Y_i$ and $\bar{Y}^{\mathrm{LLM}} = \tilde n^{-1}\sum_i Y_i^{\mathrm{LLM}}$. Then
\[
\widehat{\theta}(\widehat{\lambda}) - \widehat{\theta}(\lambda^\star)
\;=\;
(\widehat{\lambda} - \lambda^\star)\bigl(\bar{\tilde Y}^{\mathrm{LLM}} - \bar{Y}^{\mathrm{LLM}}\bigr).
\]
Under finite fourth-moment conditions, $\widehat{\lambda} - \lambda^\star = O_p(\tilde{n}^{-1/2})$ (standard plug-in consistency; see \citealt{angelopoulos2023ppi}, Theorem~1), and $\bar{\tilde Y}^{\mathrm{LLM}} - \bar{Y}^{\mathrm{LLM}} = O_p(\tilde{n}^{-1/2} + m^{-1/2})$. Hence, the plug-in correction is $O_p\bigl(\tilde n^{-1/2} (\tilde n^{-1/2} + m^{-1/2})\bigr) = o_p(\sigma)$. By Slutsky's theorem, $\sigma^{-1}\bigl(\widehat{\theta}(\widehat{\lambda}) - \theta^\star\bigr) \xrightarrow{d} \mathcal{N}(0,1)$.

\emph{Step 3 (boundary and degenerate cases).}
Step 2 implicitly treats $\lambda^\star$ as interior to $[0,1]$ and $\Var(Y^{\mathrm{LLM}}) > 0$. We verify the three remaining cases.

(i) \emph{Boundary $\lambda^\star = 0$ with $\Var(Y^{\mathrm{LLM}}) > 0$} (so $\Cov(Y, Y^{\mathrm{LLM}}) \le 0$ in population). The clipped plug-in $\widehat{\lambda}$ converges to $0$ in probability, and $\widehat{\theta}(\widehat{\lambda}) - \widehat{\theta}(0) = \widehat{\lambda}(\bar{\tilde Y}^{\mathrm{LLM}} - \bar Y^{\mathrm{LLM}}) = o_p(\tilde n^{-1/2})$, so the estimator coincides with the human-only sample mean to first order and the classical CLT for $\bar Y$ gives the stated limit with $\sigma^2 = \Var(Y)/\tilde n$.

(ii) \emph{Boundary $\lambda^\star = 1$} (so $\Cov(Y, Y^{\mathrm{LLM}}) \ge \Var(Y^{\mathrm{LLM}})$ in population). The same argument applies with $\widehat{\lambda} \to 1$ and the estimator converges to standard PPI, whose CLT follows from Theorem~1 of \citet{angelopoulos2023prediction} with $\sigma^2 = \Var(Y - Y^{\mathrm{LLM}})/\tilde n + \Var(Y^{\mathrm{LLM}})/m$.

(iii) \emph{Degenerate surrogate $\Var(Y^{\mathrm{LLM}}) = 0$.} Then $Y^{\mathrm{LLM}}$ is almost surely equal to the constant $\mu_{\mathrm{LLM}} := \bE[Y^{\mathrm{LLM}}]$, so $\bar Y^{\mathrm{LLM}} = \bar{\tilde Y}^{\mathrm{LLM}} = \mu_{\mathrm{LLM}}$ almost surely for every $\tilde n, m$. The clipping convention gives $\lambda^\star = 0$ and $\widehat{\lambda} = 0$, so
\[
\widehat{\theta}(\widehat{\lambda}) \;=\; \bar Y + \widehat{\lambda}(\bar{\tilde Y}^{\mathrm{LLM}} - \bar Y^{\mathrm{LLM}}) \;=\; \bar Y \quad \text{a.s.}
\]
The classical CLT for the i.i.d.\ mean yields $\sqrt{\tilde n}(\bar Y - \theta^\star) \Rightarrow \mathcal{N}(0, \Var(Y))$, which matches the general formula evaluated at $\lambda^\star = 0$, $A = \Var(Y)$, and $\Var(Y^{\mathrm{LLM}}) = 0$ (so the second term in $\sigma^2$ vanishes).
\hfill\Halmos

\subsection{Robustness to Misspecified Variance Coefficients}
\label{subsec:robustness_A}

The allocation rule in Theorem~\ref{thm:optimal_allocation} assumes that the rectification difficulty coefficients $\{A_q\}_{q\in\mathcal{T}}$ are known. In practice, when designing a survey that contains new questions without labeled data, the researcher must use an estimate or prediction $\tilde{A}_q$ for each question. Because these predictions are made for questions that may differ from the historical training corpus, it is important to understand how errors in $\tilde{A}_q$ translate into efficiency losses in the resulting sample allocation.

Recall the design objective (a weighted sum of approximate mean squared errors in the SDR regime):
\[
J(\bm{A},\bm{n}) \;:=\; \sum_{q\in\mathcal{T}} w_q \frac{A_q}{n_q},
\]
where $\bm{A}=(A_1,\dots,A_Q)$ are the true rectification difficulties and $\bm{n}=(n_1,\dots,n_Q)$ is a feasible allocation of human labels. Let the feasible set be
\[
\mathcal{N} = \Bigl\{\bm{n}:\; n_q \geq 0\ \text{for all } q,\ \sum_{q\in\mathcal{T}} c_q n_q \le B\Bigr\}.
\]

Define the optimum and its value as
\[
\bm{n}^\star(\bm{A}) \in \arg\min_{\bm{n}\in\mathcal{N}} J(\bm{A},\bm{n}), \qquad
J^\star(\bm{A}) := J(\bm{A},\bm{n}^\star(\bm{A})).
\]
Theorem~\ref{thm:optimal_allocation} implies the closed-form optimum value
\begin{equation}
\label{eq:J_star_closed_form}
J^\star(\bm{A}) = \frac{1}{B}\left(\sum_{q\in\mathcal{T}} \sqrt{w_q A_q c_q}\right)^2.
\end{equation}

Now let $\widetilde{\bm{A}}=(\tilde{A}_1,\dots,\tilde{A}_Q)$ denote the estimated rectification difficulties used at the design stage (e.g., from meta-learning). Let
\[
\widetilde{\bm{n}}^\star \;:=\; \bm{n}^\star(\widetilde{\bm{A}})
\]
denote the allocation that is optimal for the plug-in model $\widetilde{\bm{A}}$, and the achieved design performance under the true coefficients is
\[
J(\bm{A},\widetilde{\bm{n}}^\star) = \sum_{q\in\mathcal{T}} w_q \frac{A_q}{\tilde{n}_q^\star}.
\]

\begin{proof}[Proof of Proposition~\ref{prop:robust_allocation}]
For brevity, write $J(\bm{A},\bm{n}) = \sum_q w_q A_q/n_q$ and let $\bm{n}^\star(\bm{A})$ denote the continuous minimizer given in Theorem~\ref{thm:optimal_allocation}. The first claim, $J(\bm{A},\widetilde{\bm{n}}^\star) \ge J^\star(\bm{A})$, follows directly from the definition of $J^\star(\bm{A})$ as the minimum of $J(\bm{A},\bm{n})$ over $\bm{n}\in\mathcal{N}$; $\widetilde{\bm{n}}^\star$ is a feasible allocation but not necessarily optimal under $\bm{A}$.

To establish the upper bound, define
\[
s_q = \sqrt{w_q A_q c_q},
\qquad
S = \sum_{q\in\mathcal{T}} s_q,
\]
and note from Equation~\eqref{eq:J_star_closed_form} that
\[
J^\star(\bm{A}) = \frac{S^2}{B}.
\]
Similarly, for the surrogate coefficients $\widetilde{\bm{A}}$ define
\[
\tilde{s}_q = \sqrt{w_q \tilde{A}_q c_q},
\qquad
\tilde{S} = \sum_{q\in\mathcal{T}} \tilde{s}_q.
\]
By Theorem~\ref{thm:optimal_allocation}, the optimal continuous allocation for $\widetilde{\bm{A}}$ is
\[
\tilde{n}_q^\star
=
\frac{B}{\tilde{S}}
\sqrt{\frac{w_q \tilde{A}_q}{c_q}},
\qquad q\in\mathcal{T}.
\]
Evaluating $J(\bm{A},\widetilde{\bm{n}}^\star)$ under the {true} coefficients $\bm{A}$ gives
\begin{align*}
J(\bm{A},\widetilde{\bm{n}}^\star)
&=
\sum_{q\in\mathcal{T}} w_q \frac{A_q}{\tilde{n}_q^\star}
=
\frac{\tilde{S}}{B} \sum_{q\in\mathcal{T}}
w_q A_q \sqrt{\frac{c_q}{w_q \tilde{A}_q}} \\
&=
\frac{\tilde{S}}{B} \sum_{q\in\mathcal{T}}
A_q \sqrt{\frac{w_q c_q}{\tilde{A}_q}}
=
\frac{\tilde{S}}{B} \sum_{q\in\mathcal{T}}
\sqrt{w_q A_q c_q}\,\sqrt{\frac{A_q}{\tilde{A}_q}} \\
&=
\frac{\tilde{S}}{B} \sum_{q\in\mathcal{T}}
s_q \sqrt{\frac{A_q}{\tilde{A}_q}}.
\end{align*}
Introduce the ratios $r_q = \tilde{A}_q / A_q$ and note that $A_q/\tilde{A}_q = 1/r_q$. Under the log-error condition in Proposition~\ref{prop:robust_allocation}, we have
\[
e^{-\eps} \le r_q \le e^\eps,
\qquad\text{so}\qquad
e^{-\eps/2} \le \sqrt{r_q} \le e^{\eps/2}.
\]
Moreover,
\[
\tilde{S} = \sum_{q\in\mathcal{T}} \sqrt{w_q \tilde{A}_q c_q}
= \sum_{q\in\mathcal{T}} s_q \sqrt{r_q}.
\]
Thus the ratio of the achieved objective to the optimal objective can be written as
\begin{align*}
\frac{J(\bm{A},\widetilde{\bm{n}}^\star)}{J^\star(\bm{A})}
&=
\frac{\tilde{S}}{S^2}
\sum_{q\in\mathcal{T}} s_q \sqrt{\frac{A_q}{\tilde{A}_q}}
=
\frac{1}{S^2}
\left(\sum_{q\in\mathcal{T}} s_q \sqrt{r_q}\right)
\left(\sum_{q\in\mathcal{T}} s_q \frac{1}{\sqrt{r_q}}\right).
\end{align*}
Define the normalized weights $\pi_q = s_q / S$, which satisfy $\pi_q \ge 0$ and $\sum_q \pi_q = 1$. Then
\[
\frac{J(\bm{A},\widetilde{\bm{n}}^\star)}{J^\star(\bm{A})}
=
\left(\sum_{q\in\mathcal{T}} \pi_q \sqrt{r_q}\right)
\left(\sum_{q\in\mathcal{T}} \pi_q \frac{1}{\sqrt{r_q}}\right).
\]

We now bound this expression. Write $u_q=\sqrt{r_q}\in[m,M]$ with $m=e^{-\eps/2}$ and $M=e^{\eps/2}$; the log-error condition implies $u_q\in[m,M]$ for every $q$. Set $a=\sum_q \pi_q u_q$ and $b=\sum_q \pi_q u_q^{-1}$. For any $u_q\in[m,M]$, $(u_q-m)(M-u_q)\ge0$; expanding and dividing by $u_q>0$ gives
\[
u_q+\frac{mM}{u_q}\le m+M .
\]
Averaging with respect to the convex weights $\{\pi_q\}$ yields $a+mM\,b\le m+M$. Since $a,b,mM>0$, the elementary inequality $xy\le (x+y)^2/4$ gives
\[
\frac{J(\bm{A},\widetilde{\bm{n}}^\star)}{J^\star(\bm{A})}
= ab
= \frac{a\,(mM\,b)}{mM}
\le \frac{(a+mM\,b)^2}{4mM}
\le \frac{(m+M)^2}{4mM}.
\]
Since $mM=1$ and $m+M=2\cosh(\eps/2)$, the right-hand side equals $\cosh^2(\eps/2)=1+\tfrac{\eps^2}{4}+O(\eps^4)$, which establishes the upper bound in Proposition~\ref{prop:robust_allocation}(ii). Moreover, with $x=e^{\eps}\ge1$,
\[
e^{\eps}-\cosh^2(\eps/2)=x-\frac{x+2+x^{-1}}{4}=\frac{(x-1)(3x+1)}{4x}\ge0,
\]
so $\cosh^2(\eps/2)\le e^{\eps}$ and the earlier bound $e^{\eps}$ follows as a looser corollary. The bound is sharp as a universal bound: for $\eps>0$, equality holds iff, on the support of $\pi$, $u_q\in\{m,M\}$ with $\sum_{q:r_q=e^{\eps}}\pi_q=\sum_{q:r_q=e^{-\eps}}\pi_q=\tfrac12$; if the fixed weights $\{\pi_q\}$ do not admit such a half-mass split, the bound remains valid but is not exactly attained for those weights.

For the lower bound, Cauchy--Schwarz implies
\[
\left(\sum_q \pi_q \sqrt{r_q}\right)
\left(\sum_q \pi_q \frac{1}{\sqrt{r_q}}\right)
\ge
\left(\sum_q \pi_q\right)^2
= 1,
\]
with equality if and only if $\sqrt{r_q}$ is constant in $q$, i.e., $r_q$ is constant across questions. This yields the lower bound and the characterization of equality cases.
\end{proof}

\subsection{Average-Case Performance Guarantee}
\label{app:avg_case_robustness}

The worst-case bound $\cosh^2(\eps/2)$ in Proposition~\ref{prop:robust_allocation} depends only on the maximal log-error $\eps$. Here we derive a tighter average-case characterization governed by the \emph{variance} of the log-errors rather than their worst-case magnitude.

Recall from the proof of Proposition~\ref{prop:robust_allocation} that
\[
\frac{J(\bm{A},\widetilde{\bm{n}}^\star)}{J^\star(\bm{A})}
=
\left(\sum_{q} \pi_q \sqrt{r_q}\right)
\left(\sum_{q} \pi_q \frac{1}{\sqrt{r_q}}\right),
\]
where $\pi_q = s_q/S$ are normalized weights and $r_q = \tilde{A}_q/A_q$ are the multiplicative errors. Let $\eta_q = \log r_q$ denote the log-scale errors. Substituting $\sqrt{r_q} = e^{\eta_q/2}$ and $1/\sqrt{r_q} = e^{-\eta_q/2}$, the ratio becomes
\[
\frac{J(\bm{A},\widetilde{\bm{n}}^\star)}{J^\star(\bm{A})}
=
\left(\sum_q \pi_q e^{\eta_q/2}\right)
\left(\sum_q \pi_q e^{-\eta_q/2}\right).
\]

\begin{proposition}[Average-case efficiency loss]
\label{prop:avg_case}
Suppose the log-errors $\eta_q = \log(\tilde{A}_q/A_q)$ satisfy $\sum_q \pi_q \eta_q = 0$ (zero weighted mean on the log scale) and $\sum_q \pi_q \eta_q^2 = \sigma_\eta^2$ (weighted variance). Then:
\begin{enumerate}[(i)]
\item \textbf{Exact representation:}
\[
\frac{J(\bm{A},\widetilde{\bm{n}}^\star)}{J^\star(\bm{A})}
=
\left(\sum_q \pi_q e^{\eta_q/2}\right)
\left(\sum_q \pi_q e^{-\eta_q/2}\right).
\]
\item \textbf{Second-order approximation:} For small $\sigma_\eta$,
\[
\frac{J(\bm{A},\widetilde{\bm{n}}^\star)}{J^\star(\bm{A})}
\approx
1 + \frac{\sigma_\eta^2}{4}.
\]
\item \textbf{Gaussian large-$Q$ limit:} If $\eta_q$ are treated as i.i.d.\ draws from $\mathcal{N}(0, \sigma_\eta^2)$ with uniform weights $\pi_q = 1/Q$, and $Q$ is large, then
\[
\frac{J(\bm{A},\widetilde{\bm{n}}^\star)}{J^\star(\bm{A})}
\xrightarrow{p}
e^{\sigma_\eta^2/4} \approx 1 + \frac{\sigma_\eta^2}{4} + O(\sigma_\eta^4).
\]
\end{enumerate}
\end{proposition}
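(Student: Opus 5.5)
Part~(i) needs no new work. It is the identity already derived in the proof of Proposition~\ref{prop:robust_allocation}: there the ratio was written as $\bigl(\sum_q \pi_q\sqrt{r_q}\bigr)\bigl(\sum_q \pi_q r_q^{-1/2}\bigr)$ with $\pi_q=s_q/S$. Substituting $r_q=e^{\eta_q}$ gives the stated form. That derivation used only the closed form of Theorem~\ref{thm:optimal_allocation} and not the log-error bound, so (i) holds for arbitrary positive $\widetilde{\bm{A}}$.

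For part~(ii), I will expand each factor to second order in the $\eta_q$:
\[
\sum_q \pi_q e^{\pm\eta_q/2} = 1 \pm \tfrac12\sum_q\pi_q\eta_q + \tfrac18\sum_q\pi_q\eta_q^2 + R_\pm,
\]
where $|R_\pm|\le \tfrac1{48}\sum_q\pi_q|\eta_q|^3 e^{|\eta_q|/2}$.

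Imposing the hypotheses $\sum_q\pi_q\eta_q=0$ and $\sum_q\pi_q\eta_q^2=\sigma_\eta^2$, each factor becomes $1+\sigma_\eta^2/8+R_\pm$. The product is therefore $1+\sigma_\eta^2/4+O(\sigma_\eta^4)+R_++R_-$.

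A useful observation is that the two factors are $M(1/2)$ and $M(-1/2)$ for $M(t)=\sum_q\pi_q e^{t\eta_q}$. Their odd-order terms cancel in the product, so the product equals $1+\tfrac14\sigma_\eta^2+O(\max_q|\eta_q|^4)$ once the third-moment terms are paired.

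The main obstacle is making ``small $\sigma_\eta$'' rigorous. A small weighted variance does not by itself control $\max_q|\eta_q|$. I would therefore state the approximation under the additional interpretation that $\max_q|\eta_q|\to0$, or equivalently that $\sigma_\eta\to 0$ with bounded $\eps$ in the sense of Proposition~\ref{prop:robust_allocation}. The remainder is then bounded by $C\,\eps\,\sigma_\eta^2$ from the third-order term and by $O(\sigma_\eta^4)$ from the cross terms. Both are $o(\sigma_\eta^2)$, which justifies the ``$\approx$''.

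For part~(iii), I take $\pi_q=1/Q$ and $\eta_q$ i.i.d.\ $\mathcal N(0,\sigma_\eta^2)$. Each factor is an empirical average of i.i.d.\ variables with finite mean, since lognormal variables have all moments. By the weak law of large numbers, $Q^{-1}\sum_q e^{\pm\eta_q/2}\xrightarrow{p}\bE[e^{\pm\eta/2}]=e^{\sigma_\eta^2/8}$, using the Gaussian MGF.

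Applying the continuous mapping theorem to the product gives the limit $e^{\sigma_\eta^2/4}$. Expanding the exponential gives $1+\sigma_\eta^2/4+O(\sigma_\eta^4)$.

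In this part the sample mean-zero and variance conditions of the proposition hold only asymptotically; I will note that they are not needed for the limit.
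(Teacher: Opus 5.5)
Your proposal is correct and follows the paper's proof essentially step for step. Part (i) is read off from the ratio identity in the proof of Proposition~\ref{prop:robust_allocation}. Part (ii) is a second-order expansion of $\sum_q\pi_q e^{\pm\eta_q/2}$ under the centering condition. Part (iii) is the law of large numbers combined with the Gaussian moment generating function. Your explicit remainder analysis is more careful than the paper's, which simply writes each factor as $1+\sigma_\eta^2/8$ and squares.

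One claim in that analysis is wrong. $\sigma_\eta\to0$ with merely bounded $\eps$ is not equivalent to $\max_q|\eta_q|\to0$: a question with small weight $\pi_q$ can keep a fixed log-error while $\sigma_\eta\to0$. Under bounded $\eps$ alone, your third-order remainder $C\eps\sigma_\eta^2$ is only $O(\sigma_\eta^2)$, not $o(\sigma_\eta^2)$. So you genuinely need $\eps\to0$. That does follow from $\sigma_\eta\to0$ when the weights are fixed and positive, since $|\eta_q|\le\sigma_\eta/\sqrt{\pi_q}$.

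Your pairing observation makes this clean. Symmetrizing gives the ratio exactly as $\sum_{q,q'}\pi_q\pi_{q'}\cosh\bigl((\eta_q-\eta_{q'})/2\bigr)$, with $\eps=\max_q|\eta_q|$. This lies between $1+\sigma_\eta^2/4$ and $1+\tfrac{\sigma_\eta^2}{4}\bigl(1+\tfrac{\eps^2\cosh\eps}{12}\bigr)$. Consequently $1+\sigma_\eta^2/4$ is in fact a lower bound, with relative error $O(\eps^2)$ in the excess. This also sharpens your $O(\max_q|\eta_q|^4)$ remainder to $O(\eps^2\sigma_\eta^2)$.
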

\begin{proof}
Part~(i) restates the exact formula. For part~(ii), expand each exponential to second order: $e^{\eta_q/2} \approx 1 + \eta_q/2 + \eta_q^2/8$. Under the centering condition $\sum_q \pi_q \eta_q = 0$,
\[
\sum_q \pi_q e^{\eta_q/2} \approx 1 + \frac{\sigma_\eta^2}{8}, \qquad
\sum_q \pi_q e^{-\eta_q/2} \approx 1 + \frac{\sigma_\eta^2}{8}.
\]
Multiplying:
\[
\left(1 + \frac{\sigma_\eta^2}{8}\right)^2 = 1 + \frac{\sigma_\eta^2}{4} + O(\sigma_\eta^4).
\]
For part~(iii), when $Q$ is large and $\eta_q \stackrel{\mathrm{i.i.d.}}{\sim} \mathcal{N}(0,\sigma_\eta^2)$ with uniform weights $\pi_q = 1/Q$, the law of large numbers gives $\frac{1}{Q}\sum_q e^{\eta_q/2} \to \bE[e^{\eta/2}] = e^{\sigma_\eta^2/8}$ (moment generating function of a Gaussian), and similarly $\frac{1}{Q}\sum_q e^{-\eta_q/2} \to e^{\sigma_\eta^2/8}$. The product converges to $e^{\sigma_\eta^2/4}$.
\end{proof}

The key implication is that the efficiency loss depends on the \emph{variance} of the log-errors, not on the worst-case error $\eps$. For our empirical application, the meta-learning residuals have $\sigma_\eta \approx 0.42$ (consistent with $R^2 \approx 0.56$ on the log scale), so $\sigma_\eta^2 \approx 0.176$ and the average-case efficiency ratio is approximately $1 + \sigma_\eta^2/4 \approx 1.044$, i.e., about $4.4\%$ excess MSE. This is well below the worst-case bound $\cosh^2(\eps/2) \approx \cosh^2(0.4) \approx 1.17$ (using the 95th-percentile absolute log-error $\eps\approx0.8$), and is consistent with the empirically observed performance of the meta-learned allocation.

\section{Additional Empirical Results}
\label{appsec:additional_empirical}

This appendix collects supplementary empirical results that complement the main analysis in Section~\ref{sec:empirics}.

\subsection{Supplementary diagnostics for the Twin-2K-500 data}
\label{appsec:data_summary}

Table~\ref{tab:Aq-by-task} in Section~\ref{subsec:ppi_diagnostics} reports task-level statistics. Here we provide two further descriptive views.

\paragraph{Per-task LLM accuracy.} Figure~\ref{fig:accuracy-by-task} shows the per-task LLM accuracy $\overline{\mathrm{acc}}_q$, which ranges from $85.9\%$ (Outcome Bias) to $50.6\%$ (Allais).

\begin{figure}[ht]
    \centering
    \includegraphics[width=0.85\textwidth]{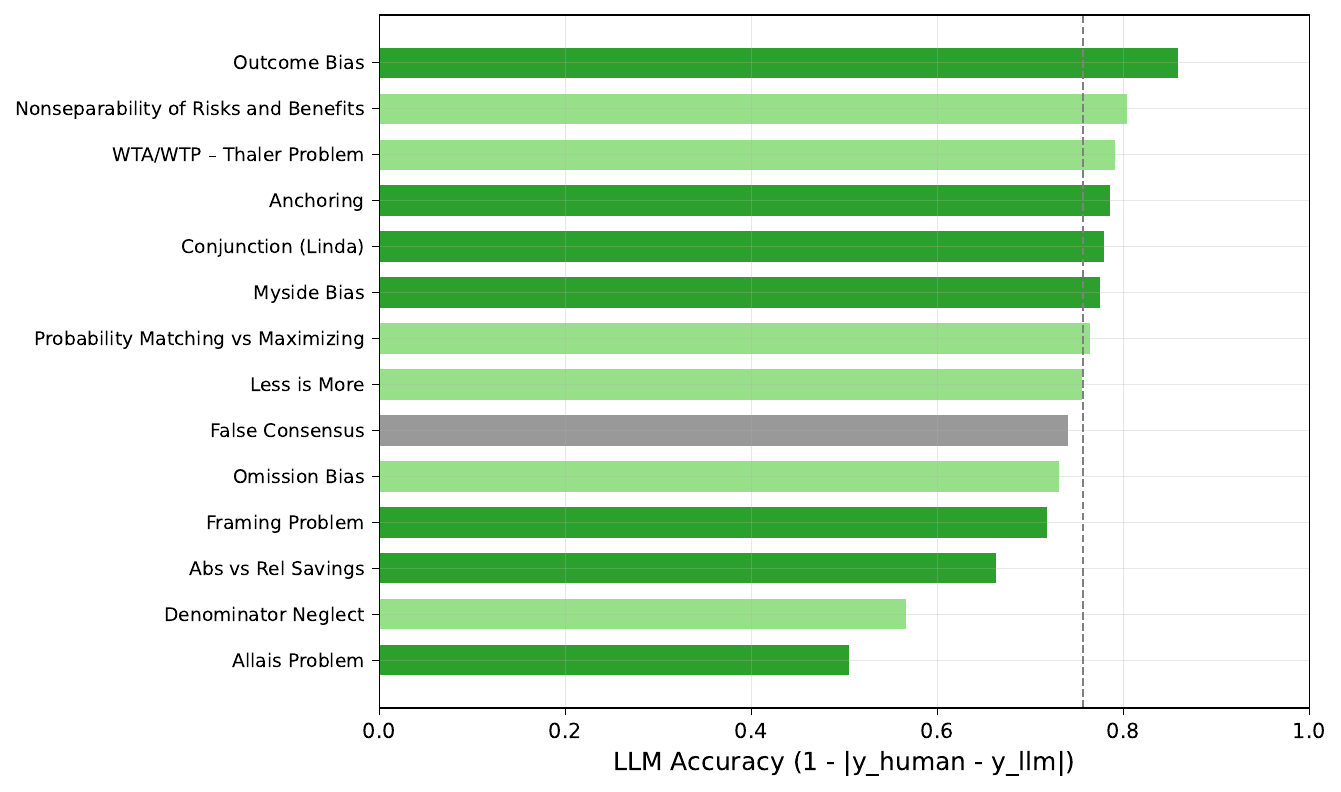}
    \caption{Task-level LLM accuracy $\overline{\mathrm{acc}}_q$, sorted from highest to lowest. The dashed vertical line marks the overall mean across the 68 questions.}
    \label{fig:accuracy-by-task}
\end{figure}

\paragraph{LLM accuracy vs.\ human test--retest reliability.}
Figure~\ref{fig:test-retest} compares human test--retest reliability with LLM accuracy across all 68 questions. Questions where humans are inconsistent tend also to be harder for the LLM (Spearman $\rho = 0.64$), suggesting a shared source of difficulty rooted in question ambiguity. The average human test--retest accuracy is $0.82$, compared with $0.76$ for LLM accuracy. As discussed in Section~\ref{subsec:ppi_diagnostics} (Probability Matching example), this aggregate gap can mask substantial individual-level differences.

\begin{figure}[ht]
    \centering
    \includegraphics[width=0.7\textwidth]{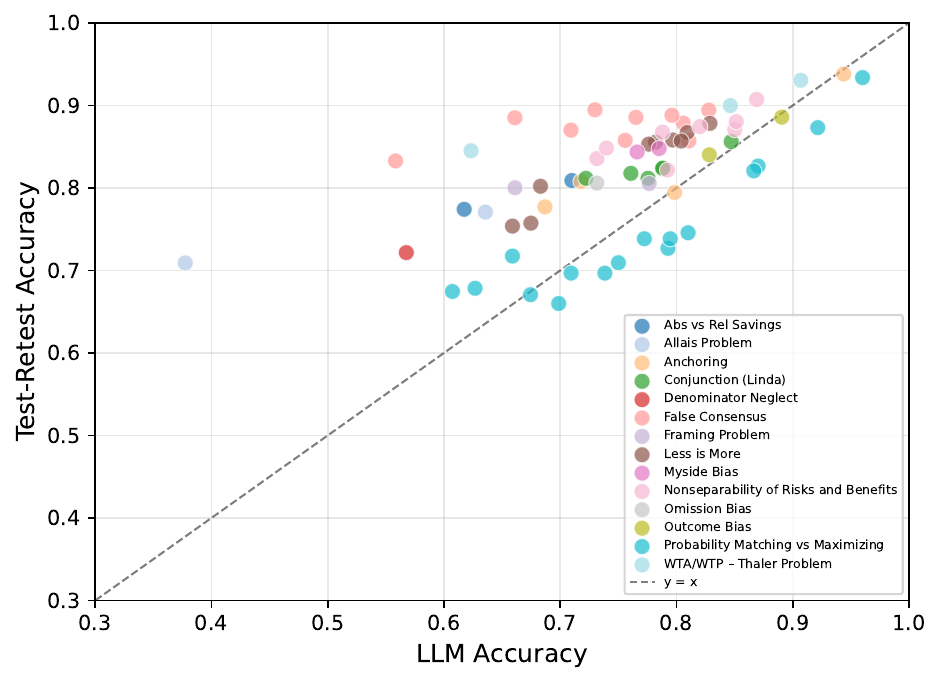}
    \caption{Human test--retest accuracy vs.\ LLM accuracy for 68 questions; dashed line is the $45^\circ$ reference.}
    \label{fig:test-retest}
\end{figure}

\subsection{Absolute MSE across budget levels}
\label{appsec:absolute_mse}

Table~\ref{tab:absolute-mse} reports absolute MSE, RMSE, and MAE at selected budget levels,
confirming the $1/B$ decay and stable percentage reductions across the full range.
At $B=200$ ($\approx 3$ responses per question), the MSE for \textsc{PPI + Opt.\ (Pred.)} is $3.33\times 10^{-2}$ vs.\ $3.68\times 10^{-2}$ for \textsc{SM + Uniform}, a meaningful improvement in a data-scarce regime.

\begin{table}[ht]
\centering
\footnotesize
\caption{Monte Carlo performance across selected budget levels.}
\label{tab:absolute-mse}
\begin{tabular}{rllll}
\toprule
$B$ & \textsc{SM + Uniform} & \textsc{PPI + Uniform} & \textsc{PPI + Opt. (Pred.)} & \textsc{PPI + Opt. (Oracle)} \\
\midrule
\multicolumn{5}{c}{Panel A: MSE ($\times 10^{-2}$)} \\
200 & 3.68 \scriptsize{[3.58, 3.75]} & 3.53 \scriptsize{[3.46, 3.60]} & 3.33 \scriptsize{[3.19, 3.52]} & 3.31 \scriptsize{[3.24, 3.37]} \\
400 & 1.82 \scriptsize{[1.79, 1.86]} & 1.77 \scriptsize{[1.74, 1.79]} & 1.74 \scriptsize{[1.71, 1.78]} & 1.63 \scriptsize{[1.60, 1.67]} \\
600 & 1.22 \scriptsize{[1.20, 1.25]} & 1.18 \scriptsize{[1.16, 1.20]} & 1.13 \scriptsize{[1.11, 1.16]} & 1.08 \scriptsize{[1.06, 1.10]} \\
800 & 0.92 \scriptsize{[0.90, 0.94]} & 0.87 \scriptsize{[0.85, 0.90]} & 0.84 \scriptsize{[0.83, 0.86]} & 0.82 \scriptsize{[0.81, 0.83]} \\
1000 & 0.73 \scriptsize{[0.71, 0.75]} & 0.71 \scriptsize{[0.69, 0.73]} & 0.68 \scriptsize{[0.66, 0.69]} & 0.65 \scriptsize{[0.64, 0.67]} \\
1500 & 0.50 \scriptsize{[0.49, 0.51]} & 0.48 \scriptsize{[0.47, 0.49]} & 0.46 \scriptsize{[0.45, 0.47]} & 0.44 \scriptsize{[0.43, 0.45]} \\
2000 & 0.38 \scriptsize{[0.37, 0.39]} & 0.37 \scriptsize{[0.35, 0.38]} & 0.34 \scriptsize{[0.33, 0.35]} & 0.33 \scriptsize{[0.32, 0.34]} \\
\midrule
\multicolumn{5}{c}{Panel B: RMSE ($\times 10^{-2}$)} \\
200 & 19.17 \scriptsize{[18.92, 19.36]} & 18.78 \scriptsize{[18.60, 18.98]} & 18.25 \scriptsize{[17.87, 18.76]} & 18.18 \scriptsize{[18.00, 18.37]} \\
400 & 13.50 \scriptsize{[13.37, 13.62]} & 13.30 \scriptsize{[13.19, 13.39]} & 13.19 \scriptsize{[13.06, 13.35]} & 12.78 \scriptsize{[12.66, 12.91]} \\
600 & 11.05 \scriptsize{[10.96, 11.18]} & 10.84 \scriptsize{[10.75, 10.94]} & 10.65 \scriptsize{[10.56, 10.79]} & 10.40 \scriptsize{[10.29, 10.50]} \\
800 & 9.58 \scriptsize{[9.50, 9.70]} & 9.35 \scriptsize{[9.21, 9.47]} & 9.18 \scriptsize{[9.10, 9.28]} & 9.05 \scriptsize{[8.99, 9.13]} \\
1000 & 8.53 \scriptsize{[8.45, 8.63]} & 8.41 \scriptsize{[8.28, 8.57]} & 8.22 \scriptsize{[8.11, 8.32]} & 8.08 \scriptsize{[7.99, 8.20]} \\
1500 & 7.06 \scriptsize{[6.99, 7.12]} & 6.93 \scriptsize{[6.85, 7.01]} & 6.75 \scriptsize{[6.69, 6.86]} & 6.62 \scriptsize{[6.54, 6.72]} \\
2000 & 6.17 \scriptsize{[6.09, 6.24]} & 6.04 \scriptsize{[5.96, 6.13]} & 5.84 \scriptsize{[5.78, 5.90]} & 5.73 \scriptsize{[5.68, 5.80]} \\
\midrule
\multicolumn{5}{c}{Panel C: MAE ($\times 10^{-2}$)} \\
200 & 14.83 \scriptsize{[14.66, 14.98]} & 14.44 \scriptsize{[14.31, 14.60]} & 14.33 \scriptsize{[14.03, 14.70]} & 14.29 \scriptsize{[14.14, 14.46]} \\
400 & 10.41 \scriptsize{[10.33, 10.54]} & 10.21 \scriptsize{[10.10, 10.31]} & 10.33 \scriptsize{[10.23, 10.47]} & 10.12 \scriptsize{[10.02, 10.24]} \\
600 & 8.51 \scriptsize{[8.44, 8.59]} & 8.32 \scriptsize{[8.25, 8.39]} & 8.32 \scriptsize{[8.23, 8.41]} & 8.23 \scriptsize{[8.15, 8.29]} \\
800 & 7.37 \scriptsize{[7.32, 7.45]} & 7.18 \scriptsize{[7.07, 7.27]} & 7.16 \scriptsize{[7.11, 7.24]} & 7.15 \scriptsize{[7.11, 7.21]} \\
1000 & 6.54 \scriptsize{[6.48, 6.59]} & 6.40 \scriptsize{[6.31, 6.53]} & 6.41 \scriptsize{[6.33, 6.49]} & 6.38 \scriptsize{[6.31, 6.48]} \\
1500 & 5.40 \scriptsize{[5.36, 5.44]} & 5.29 \scriptsize{[5.22, 5.34]} & 5.26 \scriptsize{[5.21, 5.33]} & 5.23 \scriptsize{[5.16, 5.29]} \\
2000 & 4.71 \scriptsize{[4.63, 4.77]} & 4.61 \scriptsize{[4.55, 4.66]} & 4.55 \scriptsize{[4.51, 4.59]} & 4.52 \scriptsize{[4.48, 4.59]} \\
\bottomrule
\end{tabular}
\end{table}

\subsection{Decomposition: allocation effect vs.\ estimator effect}
\label{appsec:sm_opt_decomposition}

We decompose the 14.5\% MSE reduction of \textsc{PPI + Opt.\ (Oracle)} into an \emph{allocation effect} and an \emph{estimator effect}.
The decomposition is path-dependent, so we report both paths.
To isolate the allocation channel, we introduce ``\textsc{SM + Opt.\ ($\Var(Y)$)}'': the ordinary sample mean (no \textsc{PPI++} correction) with Neyman allocation on $\Var(Y_q)$.
Table~\ref{tab:sm_opt_decomp} reports Monte Carlo MSE reductions (20 replications $\times$ 200 draws, averaged across 13 budget levels).

Table~\ref{tab:sm_opt_decomp} presents the $2 \times 2$ estimator--allocation grid.
Moving across a row (uniform $\to$ optimal allocation) adds 10.5~pp; moving down a column (SM $\to$ \textsc{PPI++}) adds 3.6~pp; the interaction is only 0.4~pp, so the two channels are approximately additive.
Allocation is the dominant channel, contributing roughly three-quarters of the total 14.5\% gain.
Both \textsc{SM + Opt.} and \textsc{PPI + Opt.\ (Oracle)} use oracle population quantities; in practice, both $\Var(Y_q)$ and $A_q$ must be estimated from data.
Notably, the fully feasible \textsc{PPI + Opt.\ (Pred.)} (11.4\%, Table~\ref{tab:compact-reduction}) also exceeds \textsc{SM + Opt.\ ($\Var(Y)$)}, even though the comparison is tilted against it: the classical design is handed the true $\Var(Y_q)$, which for a new survey would itself have to be estimated from pilot human data, whereas the meta-learned allocation uses no pilot data at all.

\begin{table}[ht]
\centering
\footnotesize
\caption{Decomposition of MSE reduction into allocation and estimator effects.}
\label{tab:sm_opt_decomp}
\begin{tabular}{lcc}
\toprule
& Uniform Allocation & Optimal Allocation \\
\midrule
Sample Mean (SM) & 0.0\% (baseline) & 10.5\% \scriptsize{[10.0, 11.1]} \\
\textsc{PPI++} & 3.6\% \scriptsize{[3.0, 4.1]} & 14.5\% \scriptsize{[14.0, 15.0]} \\
\midrule
\multicolumn{3}{l}{\footnotesize Estimator effect (SM $\to$ PPI++): $+3.6$ pp (column)} \\
\multicolumn{3}{l}{\footnotesize Allocation effect (Uniform $\to$ Optimal): $+10.5$ pp (row)} \\
\multicolumn{3}{l}{\footnotesize Interaction: $14.5 - 10.5 - 3.6 = +0.4$ pp} \\
\bottomrule
\end{tabular}
\end{table}

\subsection{Cross-dataset transferability of meta-learning}
\label{appsec:cross_dataset_transfer}

We test whether meta-learned rectification difficulty transfers across datasets by training on CCES (133 questions, Gemini~2.5~Flash) and predicting $A_q$ on Twin-2K-500 (68 questions, GPT-4o), so that both the survey domain and the response-generating LLM change between training and deployment.
Using PCA(5) and scale features on \texttt{text-embedding-3-large} embeddings, the cross-dataset prediction achieves Spearman $\rho = 0.525$ ($p < 0.001$) against the full-sample difficulties $\log \widehat{A}^{\,\mathrm{full}}_q$.
Table~\ref{tab:cross_dataset} reports MC MSE reductions: allocating Twin-2K-500 respondents with CCES-predicted $A_q$ and the \textsc{PPI++} estimator yields a 9.6\% MSE reduction, 68\% of the \textsc{PPI + Opt.\ (Oracle)} oracle (14.1\%).
This is weaker than within-dataset meta-learning (61--79\% of oracle), as expected given the different survey domains, but statistically significant, confirming that text embeddings capture question-difficulty structure that generalizes across survey contexts.

\begin{table}[ht]
\centering
\footnotesize
\caption{Cross-dataset transfer: CCES $\to$ Twin-2K-500.}
\label{tab:cross_dataset}
\begin{tabular}{lc}
\toprule
Method & MC MSE \% Reduction \\
\midrule
\textsc{SM + Uniform} & 0.00 (baseline) \\
\textsc{PPI + Uniform} & 3.6\% \\
Cross-dataset $A_q$ + \textsc{PPI++} & 9.6\% \\
\textsc{PPI + Opt.\ (Oracle)} & 14.1\% \\
\bottomrule
\end{tabular}
\end{table}

\subsection{Robustness to heterogeneity: detailed results}
\label{appsec:robustness_table}

Figure~\ref{fig:robustness} and Table~\ref{tab:robustness} report the MC MSE percentage reductions referenced in Section~\ref{subsec:robustness_eval}, broken down by heterogeneity parameter $\alpha$ and method at $B = 2000$.

\begin{figure}[ht]
    \centering
    \includegraphics[width=0.6\textwidth]{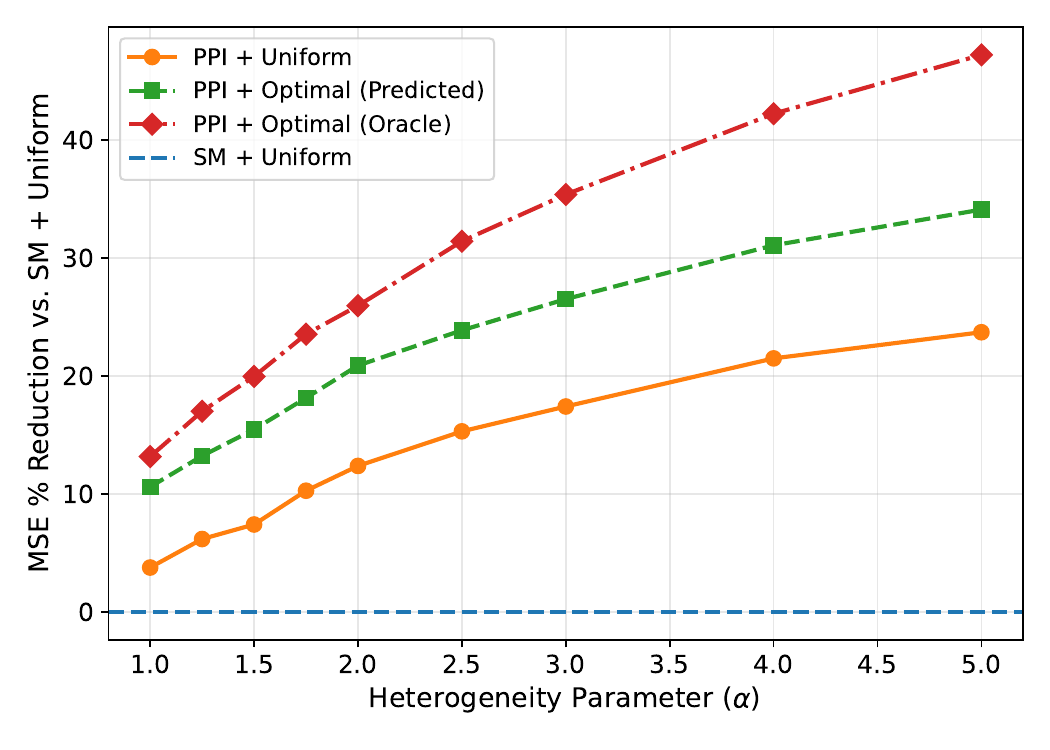}
    \caption{MSE \% reduction vs.\ difficulty dispersion $\alpha$ at $B = 2000$.}
    \label{fig:robustness}
\end{figure}

\begin{table}[ht]
\centering
\caption{MC MSE \% reduction under amplified $\widehat{A}^{\,\mathrm{full}}_q$ heterogeneity at $B = 2000$.}
\label{tab:robustness}
\footnotesize
\begin{tabular}{clll}
\toprule
$\alpha$ & \textsc{PPI + Uniform} & \textsc{PPI + Opt.\ (Pred.)} & \textsc{PPI + Opt.\ (Oracle)} \\
\midrule
\multicolumn{4}{c}{Panel A: MSE \% Reduction vs.\ \textsc{SM + Uniform}} \\
1.00 & \phantom{0}3.8\% \scriptsize{[\phantom{0}1.0, \phantom{0}7.1]} & 10.6\% \scriptsize{[\phantom{0}8.4, 13.2]} & 13.2\% \scriptsize{[\phantom{0}9.7, 16.5]} \\
1.25 & \phantom{0}6.2\% \scriptsize{[\phantom{0}3.0, \phantom{0}9.4]} & 13.2\% \scriptsize{[10.0, 15.6]} & 17.0\% \scriptsize{[13.5, 19.0]} \\
1.50 & \phantom{0}7.4\% \scriptsize{[\phantom{0}4.1, 10.3]} & 15.5\% \scriptsize{[12.3, 18.1]} & 19.9\% \scriptsize{[17.6, 22.4]} \\
1.75 & 10.3\% \scriptsize{[\phantom{0}7.8, 13.1]} & 18.1\% \scriptsize{[16.1, 21.5]} & 23.5\% \scriptsize{[21.7, 26.6]} \\
2.00 & 12.4\% \scriptsize{[\phantom{0}9.9, 15.1]} & 20.9\% \scriptsize{[17.9, 24.0]} & 25.9\% \scriptsize{[23.1, 28.6]} \\
2.50 & 15.3\% \scriptsize{[11.4, 17.7]} & 23.9\% \scriptsize{[22.1, 26.1]} & 31.4\% \scriptsize{[30.2, 32.4]} \\
3.00 & 17.4\% \scriptsize{[14.5, 19.6]} & 26.5\% \scriptsize{[22.6, 29.0]} & 35.3\% \scriptsize{[32.6, 37.4]} \\
4.00 & 21.5\% \scriptsize{[19.2, 24.2]} & 31.0\% \scriptsize{[28.6, 33.1]} & 42.2\% \scriptsize{[40.2, 44.0]} \\
5.00 & 23.7\% \scriptsize{[19.9, 26.9]} & 34.1\% \scriptsize{[32.7, 35.8]} & 47.2\% \scriptsize{[45.2, 49.4]} \\
\bottomrule
\end{tabular}

\medskip
\begin{minipage}{\textwidth}
\scriptsize\noindent\textit{Notes.} Meta-learning predictions are held fixed at their $\alpha = 1$ values; only the true $\widehat{A}^{\,\mathrm{full}}_q$ distribution is transformed.
\end{minipage}
\end{table}

\putbib[mybib]

\end{appendices}
\end{bibunit}


\begin{thebibliography}{44}
\providecommand{\natexlab}[1]{#1}
\providecommand{\url}[1]{\texttt{#1}}
\expandafter\ifx\csname urlstyle\endcsname\relax
  \providecommand{\doi}[1]{doi: #1}\else
  \providecommand{\doi}{doi: \begingroup \urlstyle{rm}\Url}\fi

\bibitem[{Accenture}(2025)]{accenture2025aaru}
{Accenture}.
\newblock Accenture invests in and collaborates with ai-powered agentic
  prediction engine aaru.
\newblock
  \href{https://newsroom.accenture.com/news/2025/accenture-invests-in-and-collaborates-with-ai-powered-agentic-prediction-engine-aaru}{Link},
  March 2025.
\newblock Accessed: 2026-06-18.

\bibitem[Adig{\"u}zel and Wedel(2008)]{adiguzel2008split}
Feray Adig{\"u}zel and Michel Wedel.
\newblock Split questionnaire design for massive surveys.
\newblock \emph{Journal of Marketing Research}, 45\penalty0 (5):\penalty0
  608--617, 2008.

\bibitem[Angelopoulos et~al.(2023{\natexlab{a}})Angelopoulos, Bates, Fannjiang,
  Jordan, and Zrnic]{angelopoulos2023prediction}
Anastasios~N Angelopoulos, Stephen Bates, Clara Fannjiang, Michael~I Jordan,
  and Tijana Zrnic.
\newblock Prediction-powered inference.
\newblock \emph{Science}, 382\penalty0 (6671):\penalty0 669--674,
  2023{\natexlab{a}}.

\bibitem[Angelopoulos et~al.(2023{\natexlab{b}})Angelopoulos, Duchi, and
  Zrnic]{angelopoulos2023ppi}
Anastasios~N Angelopoulos, John~C Duchi, and Tijana Zrnic.
\newblock {PPI}++: Efficient prediction-powered inference.
\newblock \emph{arXiv preprint arXiv:2311.01453}, 2023{\natexlab{b}}.

\bibitem[Angelopoulos et~al.(2025)Angelopoulos, Eisenstein, Berant, Agarwal,
  and Fisch]{angelopoulos2025costoptimal}
Anastasios~N. Angelopoulos, Jacob Eisenstein, Jonathan Berant, Alekh Agarwal,
  and Adam Fisch.
\newblock Cost-optimal active ai model evaluation.
\newblock \emph{arXiv preprint arXiv:2506.07949}, 2025.

\bibitem[Anthis et~al.(2025)Anthis, Liu, Richardson, Kozlowski, Koch,
  Brynjolfsson, Evans, and Bernstein]{anthis2025position}
Jacy~Reese Anthis, Ryan Liu, Sean~M Richardson, Austin~C Kozlowski, Bernard
  Koch, Erik Brynjolfsson, James Evans, and Michael~S Bernstein.
\newblock Position: Llm social simulations are a promising research method.
\newblock In \emph{Forty-second International Conference on Machine Learning
  Position Paper Track}, 2025.

\bibitem[Argyle et~al.(2023)Argyle, Busby, Fulda, Gubler, Rytting, and
  Wingate]{argyle2023out}
Lisa~P Argyle, Ethan~C Busby, Nancy Fulda, Joshua~R Gubler, Christopher
  Rytting, and David Wingate.
\newblock Out of one, many: Using language models to simulate human samples.
\newblock \emph{Political Analysis}, 31\penalty0 (3):\penalty0 337--351, 2023.

\bibitem[Bakker et~al.(2022)Bakker, Chadwick, Sheahan, Tessler,
  Campbell-Gillingham, Balaguer, McAleese, Glaese, Aslanides, Botvinick,
  et~al.]{bakker2022fine}
Michiel Bakker, Martin Chadwick, Hannah Sheahan, Michael Tessler, Lucy
  Campbell-Gillingham, Jan Balaguer, Nat McAleese, Amelia Glaese, John
  Aslanides, Matt Botvinick, et~al.
\newblock Fine-tuning language models to find agreement among humans with
  diverse preferences.
\newblock \emph{Advances in neural information processing systems},
  35:\penalty0 38176--38189, 2022.

\bibitem[Bisbee et~al.(2024)Bisbee, Clinton, Dorff, Kenkel, and
  Larson]{bisbee2024synthetic}
James Bisbee, Joshua~D Clinton, Cassy Dorff, Brenton Kenkel, and Jennifer~M
  Larson.
\newblock Synthetic replacements for human survey data? the perils of large
  language models.
\newblock \emph{Political Analysis}, 32\penalty0 (4):\penalty0 401--416, 2024.

\bibitem[Brand et~al.(2023)Brand, Israeli, and Ngwe]{brand2023using}
James Brand, Ayelet Israeli, and Donald Ngwe.
\newblock Using {LLMs} for market research.
\newblock \emph{Harvard business school marketing unit working paper},
  \penalty0 (23-062), 2023.

\bibitem[Broska et~al.(2025)Broska, Howes, and van Loon]{broska2025mixed}
David Broska, Michael Howes, and Austin van Loon.
\newblock The mixed subjects design: Treating large language models as
  potentially informative observations.
\newblock \emph{Sociological Methods \& Research}, 54\penalty0 (3):\penalty0
  1074--1109, 2025.
\newblock \doi{10.1177/00491241251326865}.

\bibitem[Brucks and Toubia(2025)]{brucks2025prompt}
Melanie Brucks and Olivier Toubia.
\newblock Prompt architecture induces methodological artifacts in large
  language models.
\newblock \emph{PloS one}, 20\penalty0 (4):\penalty0 e0319159, 2025.

\bibitem[Cochran(1977)]{cochran1977sampling}
William~G. Cochran.
\newblock \emph{Sampling Techniques}.
\newblock John Wiley \& Sons, New York, 3rd edition, 1977.

\bibitem[Deville and S{\"a}rndal(1992)]{deville1992calibration}
Jean-Claude Deville and Carl-Erik S{\"a}rndal.
\newblock Calibration estimators in survey sampling.
\newblock \emph{Journal of the American Statistical Association}, 87\penalty0
  (418):\penalty0 376--382, 1992.

\bibitem[Dzyabura and Hauser(2011)]{dzyabura2011active}
Daria Dzyabura and John~R Hauser.
\newblock Active machine learning for consideration heuristics.
\newblock \emph{Marketing Science}, 30\penalty0 (5):\penalty0 801--819, 2011.

\bibitem[Fisch et~al.(2024)Fisch, Maynez, Hofer, Dhingra, Globerson, and
  Cohen]{fisch2024stratified}
Adam Fisch, Joshua Maynez, R.~Alex Hofer, Bhuwan Dhingra, Amir Globerson, and
  William~W. Cohen.
\newblock Stratified prediction-powered inference for hybrid language model
  evaluation.
\newblock \emph{arXiv preprint arXiv:2406.04291}, 2024.

\bibitem[Huang et~al.(2025)Huang, Wu, and Wang]{huang2025many}
Chengpiao Huang, Yuhang Wu, and Kaizheng Wang.
\newblock How many human survey respondents is a large language model worth? an
  uncertainty quantification perspective.
\newblock \emph{arXiv preprint arXiv:2502.17773}, 2025.

\bibitem[Huber and Zwerina(1996)]{huber1996importance}
Joel Huber and Klaus Zwerina.
\newblock The importance of utility balance in efficient choice designs.
\newblock \emph{Journal of Marketing Research}, 33\penalty0 (3):\penalty0
  307--317, 1996.

\bibitem[{Index Ventures}(2026)]{indexventures2026simile}
{Index Ventures}.
\newblock Life, the universe, and simile: Leading simile's series a.
\newblock
  \href{https://www.indexventures.com/perspectives/life-the-universe-and-simile-leading-similes-series-a/}{Link},
  February 2026.
\newblock Accessed: 2026-06-18.

\bibitem[{Ipsos}(2025{\natexlab{a}})]{ipsos2025stanfordsynthetic}
{Ipsos}.
\newblock Ipsos partners with stanford university to pioneer the future of
  market research with synthetic data.
\newblock
  \href{https://www.ipsos.com/en-us/ipsos-partners-stanford-university-pioneer-future-market-research-synthetic-data}{Link},
  2025{\natexlab{a}}.
\newblock Accessed: 2026-06-18.

\bibitem[{Ipsos}(2025{\natexlab{b}})]{ipsos2025syntheticrespondents}
{Ipsos}.
\newblock Meet your new respondents: Research with synthetic data.
\newblock
  \href{https://www.ipsos.com/en-us/meet-your-new-respondents-research-synthetic-data}{Link},
  July 2025{\natexlab{b}}.
\newblock Accessed: 2026-06-18.

\bibitem[{Ipsos Digital}(2026)]{ipsosdigital2026innotest}
{Ipsos Digital}.
\newblock Innotest with synthetic respondents.
\newblock \href{https://www.ipsos.digital/inno-synthetic}{Link}, 2026.
\newblock Accessed: 2026-06-18.

\bibitem[Isaki and Fuller(1982)]{isaki1982survey}
Cary~T. Isaki and Wayne~A. Fuller.
\newblock Survey design under the regression superpopulation model.
\newblock \emph{Journal of the American Statistical Association}, 77\penalty0
  (377):\penalty0 89--96, 1982.

\bibitem[Ji et~al.(2025)Ji, Lei, and Zrnic]{ji2025predictions}
Wenlong Ji, Lihua Lei, and Tijana Zrnic.
\newblock Predictions as surrogates: Revisiting surrogate outcomes in the age
  of {AI}.
\newblock \emph{arXiv preprint arXiv:2501.09731}, 2025.

\bibitem[Maier et~al.(2025)Maier, Aslak, Fiaschi, Rismal, Fletcher, Luhmann,
  Dow, Pappas, and Wiecki]{maier2025purchase}
Benjamin~F. Maier, Ulf Aslak, Luca Fiaschi, Nina Rismal, Kemble Fletcher,
  Christian~C. Luhmann, Robbie Dow, Kli Pappas, and Thomas~V. Wiecki.
\newblock {LLMs} reproduce human purchase intent via semantic similarity
  elicitation of {Likert} ratings.
\newblock \emph{arXiv preprint arXiv:2510.08338}, 2025.

\bibitem[McCoy et~al.(2022)McCoy, Ciulli, and Bradlow]{mccoy2022twoforone}
John~P. McCoy, Rachele Ciulli, and Eric~T. Bradlow.
\newblock Two-for-one conjoint: {B}ayesian cross-category learning for
  shared-attribute categories.
\newblock \emph{SSRN Working Paper}, 2022.

\bibitem[Motoki et~al.(2024)Motoki, Pinho~Neto, and Rodrigues]{motoki2024more}
Fabio Motoki, Valdemar Pinho~Neto, and Victor Rodrigues.
\newblock More human than human: measuring {ChatGPT} political bias.
\newblock \emph{Public Choice}, 198\penalty0 (1):\penalty0 3--23, 2024.

\bibitem[Neyman(1934)]{neyman1934two}
Jerzy Neyman.
\newblock On the two different aspects of the representative method: The method
  of stratified sampling and the method of purposive selection.
\newblock \emph{Journal of the Royal Statistical Society}, 97\penalty0
  (4):\penalty0 558--625, 1934.

\bibitem[Ouyang et~al.(2022)Ouyang, Wu, Jiang, Almeida, Wainwright, Mishkin,
  Zhang, Agarwal, Slama, Ray, et~al.]{ouyang2022instruct}
Long Ouyang, Jeffrey Wu, Xu~Jiang, Diogo Almeida, Carroll Wainwright, Pamela
  Mishkin, Chong Zhang, Sandhini Agarwal, Katarina Slama, Alex Ray, et~al.
\newblock Training language models to follow instructions with human feedback.
\newblock \emph{Advances in neural information processing systems},
  35:\penalty0 27730--27744, 2022.

\bibitem[Peng et~al.(2025)Peng, Gui, Merlau, Fan, Sliman, Brucks, Johnson,
  Morwitz, Althenayyan, Bellezza, et~al.]{peng2025mega}
Tianyi Peng, George Gui, Daniel~J Merlau, Grace~Jiarui Fan, Malek~Ben Sliman,
  Melanie Brucks, Eric~J Johnson, Vicki Morwitz, Abdullah Althenayyan, Silvia
  Bellezza, et~al.
\newblock Digital twins as funhouse mirrors: Five key distortions.
\newblock \emph{arXiv preprint arXiv:2509.19088}, 2025.

\bibitem[Raghunathan and Grizzle(1995)]{raghunathan1995split}
Trivellore~E Raghunathan and James~E Grizzle.
\newblock A split questionnaire survey design.
\newblock \emph{Journal of the American Statistical Association}, 90\penalty0
  (429):\penalty0 54--63, 1995.

\bibitem[S{\'a}ndor and Wedel(2001)]{sandor2001designing}
Zsolt S{\'a}ndor and Michel Wedel.
\newblock Designing conjoint choice experiments using managers' prior beliefs.
\newblock \emph{Journal of Marketing Research}, 38\penalty0 (4):\penalty0
  430--444, 2001.

\bibitem[S{\"a}rndal et~al.(1992)S{\"a}rndal, Swensson, and
  Wretman]{sarndal1992model}
Carl-Erik S{\"a}rndal, Bengt Swensson, and Jan Wretman.
\newblock \emph{Model Assisted Survey Sampling}.
\newblock Springer-Verlag, New York, 1992.

\bibitem[Schaffner et~al.(2025)Schaffner, Shih, Ansolabehere, and
  Pope]{cces2024}
Brian Schaffner, Marissa Shih, Stephen Ansolabehere, and Jeremy Pope.
\newblock Cooperative election study common content, 2024.
\newblock \href{https://doi.org/10.7910/DVN/X11EP6}{[link]}, 2025.
\newblock {Harvard} Dataverse, V9.

\bibitem[Toubia and Hauser(2007)]{toubia2007research}
Olivier Toubia and John~R Hauser.
\newblock Research note—on managerially efficient experimental designs.
\newblock \emph{Marketing Science}, 26\penalty0 (6):\penalty0 851--858, 2007.

\bibitem[Toubia et~al.(2003)Toubia, Simester, Hauser, and
  Dahan]{toubia2003fast}
Olivier Toubia, Duncan~I Simester, John~R Hauser, and Ely Dahan.
\newblock Fast polyhedral adaptive conjoint estimation.
\newblock \emph{Marketing Science}, 22\penalty0 (3):\penalty0 273--303, 2003.

\bibitem[Toubia et~al.(2004)Toubia, Hauser, and Simester]{toubia2004polyhedral}
Olivier Toubia, John~R Hauser, and Duncan~I Simester.
\newblock Polyhedral methods for adaptive choice-based conjoint analysis.
\newblock \emph{Journal of Marketing Research}, 41\penalty0 (1):\penalty0
  116--131, 2004.

\bibitem[Toubia et~al.(2025)Toubia, Gui, Peng, Merlau, Li, and
  Chen]{toubia2025database}
Olivier Toubia, George~Z Gui, Tianyi Peng, Daniel~J Merlau, Ang Li, and Haozhe
  Chen.
\newblock Database report: Twin-2k-500: A data set for building digital twins
  of over 2,000 people based on their answers to over 500 questions.
\newblock \emph{Marketing Science}, 44\penalty0 (6):\penalty0 1446--1455, 2025.

\bibitem[Vafa et~al.(2025)Vafa, Athey, and Blei]{vafa2025estimating}
Keyon Vafa, Susan Athey, and David~M Blei.
\newblock Estimating wage disparities using foundation models.
\newblock \emph{Proceedings of the National Academy of Sciences}, 122\penalty0
  (22):\penalty0 e2427298122, 2025.

\bibitem[Wang et~al.(2025)Wang, Ye, and Zhao]{wang2025finetune}
Lei Wang, Zikun Ye, and Jinglong Zhao.
\newblock Efficient inference using large language models with limited human
  data: Fine-tuning then rectification.
\newblock \emph{arXiv preprint arXiv:2511.19486}, 2025.

\bibitem[Wang et~al.(2024)Wang, Zhang, and Zhang]{wang2024market}
Mengxin Wang, Dennis~J Zhang, and Heng Zhang.
\newblock Large language models for market research: A data-augmentation
  approach.
\newblock \emph{arXiv preprint arXiv:2412.19363}, 2024.

\bibitem[Ye et~al.(2025)Ye, Yoganarasimhan, and Zheng]{ye2025lola}
Zikun Ye, Hema Yoganarasimhan, and Yufeng Zheng.
\newblock {LOLA}: {LLM}-assisted online learning algorithm for content
  experiments.
\newblock \emph{Marketing Science}, 44\penalty0 (5):\penalty0 995--1016, 2025.

\bibitem[Ye et~al.(2026)Ye, Lyu, and Tao]{ye2026allocating}
Zikun Ye, Jiameng Lyu, and Rui Tao.
\newblock Allocating human oversight in ai-enabled analytics.
\newblock Working paper, available at SSRN:
  \href{https://papers.ssrn.com/sol3/papers.cfm?abstract_id=6561778}{[link]},
  2026.

\bibitem[Yin and Xin(2026)]{yin2026synthetic}
Qichuan~Ethan Yin and Linwei Xin.
\newblock Synthetic but not infinite: How much llm-generated data to use in
  market research.
\newblock \emph{Available at SSRN 6078686}, 2026.

\end{thebibliography}


\begin{thebibliography}{7}
\providecommand{\natexlab}[1]{#1}
\providecommand{\url}[1]{\texttt{#1}}
\expandafter\ifx\csname urlstyle\endcsname\relax
  \providecommand{\doi}[1]{doi: #1}\else
  \providecommand{\doi}{doi: \begingroup \urlstyle{rm}\Url}\fi

\bibitem[Angelopoulos et~al.(2023{\natexlab{a}})Angelopoulos, Bates, Fannjiang,
  Jordan, and Zrnic]{angelopoulos2023prediction}
Anastasios~N Angelopoulos, Stephen Bates, Clara Fannjiang, Michael~I Jordan,
  and Tijana Zrnic.
\newblock Prediction-powered inference.
\newblock \emph{Science}, 382\penalty0 (6671):\penalty0 669--674,
  2023{\natexlab{a}}.

\bibitem[Angelopoulos et~al.(2023{\natexlab{b}})Angelopoulos, Duchi, and
  Zrnic]{angelopoulos2023ppi}
Anastasios~N Angelopoulos, John~C Duchi, and Tijana Zrnic.
\newblock {PPI}++: Efficient prediction-powered inference.
\newblock \emph{arXiv preprint arXiv:2311.01453}, 2023{\natexlab{b}}.

\bibitem[Atkinson et~al.(2007)Atkinson, Donev, and Tobias]{atkinson2007optimum}
Anthony Atkinson, Alexander Donev, and Randall Tobias.
\newblock \emph{Optimum experimental designs, with SAS}, volume~34.
\newblock OUP Oxford, 2007.

\bibitem[Fedorov(2013)]{fedorov2013theory}
Valerii~Vadimovich Fedorov.
\newblock \emph{Theory of optimal experiments}.
\newblock Elsevier, 2013.

\bibitem[Newey and McFadden(1994)]{newey1994large}
Whitney~K Newey and Daniel McFadden.
\newblock Large sample estimation and hypothesis testing.
\newblock \emph{Handbook of econometrics}, 4:\penalty0 2111--2245, 1994.

\bibitem[Pukelsheim(2006)]{pukelsheim2006optimal}
Friedrich Pukelsheim.
\newblock \emph{Optimal design of experiments}.
\newblock SIAM, 2006.

\bibitem[Silvey(2013)]{silvey2013optimal}
Samuel Silvey.
\newblock \emph{Optimal design: an introduction to the theory for parameter
  estimation}, volume~1.
\newblock Springer Science \& Business Media, 2013.

\end{thebibliography}
\end{document}